\let\Ginclude@graphics\@org@Ginclude@graphics 
\newtheorem{assumption}{Assumption}
\newtheorem{model}{Model}
\newcommand{\PP}{\mathbb{P}}
\newcommand{\EE}{\mathbb{E}}
\newcommand{\RR}{\mathbb{R}}
\renewcommand{\ge}{\geqslant}
\renewcommand{\le}{\leqslant}
\newcommand{\indicator}[1]{\operatorname{I} \left\{ {#1} \right\} }
\newcommand{\ball}{\mathcal{B}}
\newcommand{\T}{{\operatorname{T}}}
\newcommand{\tr}{\operatorname{Tr}}
\newcommand{\tv}{\mathbf{t}}
\newcommand{\xv}{\mathbf{x}}
\newcommand{\yv}{\mathbf{y}}
\newcommand{\labels}{\mathbf{Y}}
\newcommand{\meanLabels}{\mathbf{m}}
\newcommand{\weight}{\omega}
\newcommand{\weights}{\bm{\omega}}
\newcommand{\variances}{{\bm{\sigma}^2}}
\newcommand{\diag}{\mathbf{D}}
\newcommand{\support}{\mathcal{S}}
\newcommand{\meanY}{f}
\newcommand{\estimator}[1]{\widehat{#1}}
\newcommand{\estimatorn}[1]{\widehat{#1}_n}
\newcommand{\Var}{\operatorname{Var}}
\newcommand{\normDistribution}[2]{{\mathcal{N} \left(#1, #2 \right)}}
\newcommand{\risk}{\mathcal{R}}
\newcommand{\excessRisk}{\mathcal{E}}
\newcommand{\constCorTwo}{\mathtt{c}_{cor.2}}
\newcommand{\constCorTwoPrime}{\tilde{\mathtt{c}}_{cor.2}}
\newcommand{\logProb}{\mathtt{z}_{prop.1}}
\newcommand{\logProbBelow}{\mathtt{z}_{prop.2}}
\title[Selective Nonparametric Regression via Testing]{Selective Nonparametric Regression via Testing}
 \author{\Name{Fedor Noskov} \Email{fnoskov@hse.ru}\\
     \addr HSE University,\\ 
     Institute for Information Transmission Problems RAS, \\
     and Moscow Institute of Science and Technology (MIPT), Moscow, Russia
     \AND
     \Name{Alexander Fishkov} \Email{alexander.fishkov@skoltech.ru}\\
     \addr Skolkovo Institute of Science and Technology (Skoltech), Moscow, Russia
     \AND
     \Name{Maxim Panov} \Email{panov.maxim@gmail.com}\\
     \addr Mohamed bin Zayed University of Artificial Intelligence (MBZUAI), Abu Dhabi, UAE
}
\begin{document}

\maketitle

\begin{abstract}%
  Prediction with the possibility of abstention (or selective prediction) is an important problem for error-critical machine learning applications. While well-studied in the classification setup, selective approaches to regression are much less developed. In this work, we consider the nonparametric heteroskedastic regression problem and develop an abstention procedure via testing the hypothesis on the value of the conditional variance at a given point. Unlike existing methods, the proposed one allows to account not only for the value of the variance itself but also for the uncertainty of the corresponding variance predictor. We prove non-asymptotic bounds on the risk of the resulting estimator and show the existence of several different convergence regimes. Theoretical analysis is illustrated with a series of experiments on simulated and real-world data.
\end{abstract}

\begin{keywords}%
  nonparametric regression, selective regression, prediction with abstention, hypothesis testing
\end{keywords}

\section{Introduction}
  In many machine learning applications, there exists a possibility to reject the prediction of the model and entrust it to the human or other model. Abstention is usually done based on the estimation of uncertainty in predicted value. In classification problems uncertainty might be measured via the probability of wrong prediction while for regression it corresponds to the expected error. In both cases, the estimation of these quantities is usually much harder than the solution of the initial prediction problem. In this work, we target the problem of regression with abstention (or selective regression) in nonparametric setup.

  \textbf{Related Works.} There is a large variety of literature regarding classification with reject option. Most likely, the problem was firstly studied by Chow in papers~\citep{chow_optimum_1957, chow_optimum_1970}. Moreover, in the article~\citep{chow_optimum_1970} he introduced a risk function used in the majority of forthcoming works including the present one. \cite{herbei_classification_2006} studied an optimal procedure for this risk and proved consistency for the proposed plugin rule. Then the research was focused on investigation of either empirical risk minimization among a class of hypotheses~\citep{bartlett_classification_2008, cortes_learning_2016} or on other types of risk~\citep{denis2020consistency, el-yaniv_foundations_2010, lei_classification_2014}. Benefits of abstention for online and active learning were studied in~\citep{neu2020fast} and~\citep{puchkin2021exponential} correspondingly. Besides, the problem was studied in a number of more practical works; see, for example,~\citep{grandvalet_support_2009, geifman_selectivenet_2019, nadeem_accuracy-rejection_2009}.  Finally, conformal prediction approach~\citep{vovk1999machine,shafer2008tutorial} has recently been applied to the classification with reject option~\citep{linusson2018classification,johansson2023conformal}.

  Unfortunately, methods for selective regression are much less developed. \citet{denis_regression_2020} suggested an approach to regression via a plugin rule. In papers~\citep{shah_selective_2022} and~\citep{salem_gumbel-softmax_2022}, authors proposed new approaches of neural network learning for better uncertainty capturing. In~\citep{jiang_risk-controlled_2020}, the authors suggested an uncertainty measure for regression based on blending and a method to select samples with the least risk given some coverage.

  \textbf{Setup.} In this work, we focus on the selective algorithms for regression problems with heteroskedastic noise. We assume that the data $(X, Y)$ is coming from a standard regression model $Y = f(X) + \varepsilon$ with target function $f$ and i.i.d. noise $\varepsilon$. Covariate $X$ is assumed to follow some distribution $p(\cdot)$. The noise variance depends on the input point: $\sigma^2(\xv) = \Var[Y \mid X = \xv]$. The Chow model~\citep{chow_optimum_1970} assumes that the cost for abstention is given by a fixed value $\lambda > 0$, while for prediction the mean squared risk is paid. The abstention procedure for such a problem can be constructed based on the estimate of the variance $\estimator{\sigma}^2(\xv)$. The abstention rule $\estimator{\alpha}(\xv)$ proposed by~\citet{denis_regression_2020} is given by $\estimator{\alpha}(\xv) = \indicator{\estimator{\sigma}^2(\xv) \ge \lambda}$. The resulting method was proved to be consistent, and the corresponding rate of convergence was derived under standard nonparametric assumptions on functions $f$ and $\sigma$. However, the analysis was done only for the risk averaged over the covariate distribution $p(\xv)$, while one may expect that the convergence properties at a given point $\xv$ may significantly depend on the difference between the variance $\estimator{\sigma}^2(\xv)$ and cost of abstention $\lambda$. Moreover, the performance of the estimator $\estimator{\alpha}(\xv) = \indicator{\estimator{\sigma}^2(\xv) \ge \lambda}$ depends on how accurately $\estimator{\sigma}^2(\xv)$ estimates the true variance $\sigma^2(\xv)$. In particular, $\estimator{\sigma}^2(\xv)$ might give unreliable predictions in the areas of design space where there is little to no train data. Such situations arise when there is a covariate shift between train and test data. In this work, we aim to conduct in-depth theoretical analysis for the pointwise estimation risk for the considered problem and propose the abstention procedure that would be more robust to covariate shifts than the one based on the plugin rule. 

  The main \textbf{contributions} of our paper are the following.
  \begin{itemize}
    \item We show the natural way to construct the abstention rule for nonparametric heteroskedastic regression based on the hypothesis testing on the variance value at a given point. We implement the method via Nadaraya-Watson kernel estimates of regression and variance functions.

    \item We prove the accurate finite sample bounds for the risk of the resulting estimator. Our results show that the behavior of the risk significantly depends on the relative values of the variance $\sigma^2(\xv)$ and the abstention cost $\lambda$. The proposed method shows favorable performance over the plugin approach of~\citet{denis_regression_2020}, see Table~\ref{tab: rates of excess risk}.

    \item We illustrate the theoretical findings by experiments with simulated and real-world data.
  \end{itemize}

  The paper is organized as follows. We introduce the setup of the study in Section~\ref{sec:setup}. We propose a new abstention procedure based on hypothesis testing on the values of the conditional variance in Section~\ref{sec:method}. Theoretical properties of the developed method are studied in Section~\ref{sec:theory}. Finally, Section~\ref{sec:experiments} illustrates our experimental findings and Section~\ref{sec:conclusion} concludes the study.

\section{Regression with Abstention}
\label{sec:setup}
  Let us start by formalizing the problem. We assume that we observe pairs $(X, Y)$ with covariate $X \in \RR^d$ and output $Y \in \RR$. The regression task is to estimate $\EE_Y [Y \mid X = \xv]$ via some function $\estimator{f}(\xv)$, where $\EE_Y[\cdot \mid X = \xv]$ means the expectation over the distribution $Y \mid X = \xv$. For the case of regression with abstention, for each $\xv$ we decide to accept or to reject the prediction $\estimator{f}(\xv)$. Thus, we introduce an indicator of abstention $\estimator{\alpha}(\xv)$ which is equal to 1 if the prediction $\estimator{f}(\xv)$ was \textit{rejected}. The intuition suggests accepting the prediction if the expected squared error $\EE_Y[(\estimator{f}(X) - Y)^2 \mid X = \xv]$ is not too large, say less than some $\lambda$. 

  That leads to a natural definition of risk which is a variant of the risk proposed in~\citep{chow_optimum_1970}: 
  \begin{align*}
    \risk_\lambda(\xv) = \EE_Y \bigl[(\estimator{f}(X) - Y)^2 \indicator{\estimator{\alpha}(\xv) = 0} \mid X = \xv\bigr] + \lambda \indicator{\estimator{\alpha}(\xv) = 1},
  \end{align*}
  where $\indicator{\cdot}$ is an indicator function. The introduced risk has a natural interpretation. If we abstain from prediction then we should pay the fixed cost $\lambda$. Otherwise, we pay the expected squared error. Note that the provided risk is not the only option for the problem. For instance, people also considered coverage risk, see~\citep{jiang_risk-controlled_2020}.

  Given a risk function, the following question rises up. What are the estimators that minimize it in each point? We formulate the answer as a proposition.
  \begin{proposition}
  \label{proposition: optimal estimators}
    Define $\meanY(\xv) = \EE [Y \mid X = \xv]$ and $\sigma^2(\xv) = \Var [Y \mid X = \xv]$. Then, $\meanY$ is the optimal estimator of $Y \mid X = \xv$ and $\alpha(\xv) = \indicator{\sigma^2(\xv) \ge \lambda}$ is the optimal abstention function.
  \end{proposition}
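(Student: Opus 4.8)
The plan is to minimize $\risk_\lambda(\xv)$ pointwise, treating the scalar $\estimator{f}(\xv)$ and the binary decision $\estimator{\alpha}(\xv)$ as free variables for each fixed $\xv$. The decisive structural observation is that the indicator $\indicator{\estimator{\alpha}(\xv) = 0}$ depends only on $\xv$ and carries no $Y$-dependence, so it factors out of the inner conditional expectation:
\[
  \risk_\lambda(\xv) = \indicator{\estimator{\alpha}(\xv) = 0}\, \EE_Y\bigl[(\estimator{f}(\xv) - Y)^2 \mid X = \xv\bigr] + \lambda\, \indicator{\estimator{\alpha}(\xv) = 1}.
\]
This factorization is what lets the joint optimization over $(\estimator{f}, \estimator{\alpha})$ decouple into two elementary problems.

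First I would optimize the predictor via the standard bias--variance decomposition. For any candidate value $c = \estimator{f}(\xv)$, writing $c - Y = (c - \meanY(\xv)) + (\meanY(\xv) - Y)$ and taking the conditional expectation, the cross term vanishes because $\EE_Y[\meanY(\xv) - Y \mid X = \xv] = 0$, leaving
\[
  \EE_Y\bigl[(c - Y)^2 \mid X = \xv\bigr] = \bigl(c - \meanY(\xv)\bigr)^2 + \sigma^2(\xv).
\]
This is minimized uniquely at $c = \meanY(\xv)$, with minimal value $\sigma^2(\xv)$. Crucially, this choice is optimal \emph{irrespective} of the abstention decision: when $\estimator{\alpha}(\xv) = 1$ the quadratic term is absent and $\estimator{f}$ is irrelevant, while when $\estimator{\alpha}(\xv) = 0$ the conditional mean strictly minimizes it. Hence $\meanY$ is (weakly) optimal, and I would substitute $\estimator{f} = \meanY$ in what follows.

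With the optimal predictor plugged in, the risk collapses to a comparison of two scalars: predicting ($\estimator{\alpha}(\xv) = 0$) yields risk $\sigma^2(\xv)$, whereas abstaining ($\estimator{\alpha}(\xv) = 1$) yields risk $\lambda$. The minimizer therefore abstains precisely when $\sigma^2(\xv) > \lambda$ and predicts when $\sigma^2(\xv) < \lambda$; at the boundary $\sigma^2(\xv) = \lambda$ both actions incur identical risk, so either convention is optimal. Adopting the convention of abstaining at equality yields the stated rule $\alpha(\xv) = \indicator{\sigma^2(\xv) \ge \lambda}$.

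I do not anticipate a genuine obstacle: the only point requiring care is verifying that the two optimizations truly decouple, that is, that the optimal $\estimator{f}$ does not depend on $\estimator{\alpha}$. This is secured by the factorization above, since the abstention indicator passes cleanly through the conditional expectation, so the argument is essentially a two-step calculation rather than a delicate joint minimization.
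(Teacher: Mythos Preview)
Your proposal is correct and follows essentially the same approach as the paper: factor the abstention indicator out of the conditional expectation, minimize the conditional mean-squared error to obtain $\estimator{f}=\meanY$ with residual value $\sigma^2(\xv)$, and then compare $\sigma^2(\xv)$ against $\lambda$ to determine the optimal $\estimator{\alpha}$. Your treatment is slightly more explicit (writing out the bias--variance identity and handling the boundary $\sigma^2(\xv)=\lambda$), but the argument is the same.
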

  The risk related to the pair $\{\meanY(\xv), \alpha(\xv)\}$ we denote by $\risk^*_\lambda(\xv)$.

\section{Abstention via Testing of Variance Values}
\label{sec:method}
  The setup considered in previous section was previously explored in~\citep{denis_regression_2020} where it was proposed to use plugin approach, i.e. use some estimators $\estimator{f}$ and $\estimator{\sigma}^2$ of the population counterparts $\meanY$ and $\sigma^2$ directly in the rule given by Proposition~\ref{proposition: optimal estimators}. Their approach leads to consistent estimators in large sample regime. However, for finite samples not only $\estimator{f}$ can be imperfect but also the variance estimator $\sigma^2(\xv)$ 
  can become unreliable if $\xv$ lies far away from the train set under nonparametric setting. Basically, we might start rejecting or accepting the predictions based on the variance estimate which is far off from the actual variance values. 

  In this work, we aim to work with this issue by considering the uncertainty in the variance estimator $\sigma^2(\xv)$ itself. 
  We propose a natural way to take this into account via testing between the following hypotheses:
  \begin{align*}
    H_0\colon \sigma^2(\xv) \ge \lambda \text{ vs. } H_1\colon \sigma^2(\xv) < \lambda. 
  \end{align*}
  This problem assumes that it is safer to reject a good prediction than to accept a bad one. It is the standard situation for many applications of selective machine learning.

  Construction of the test requires some assumptions on the data which will be the same for the train and the test set. Thus, we introduce the studied model.
  \begin{model}
    \label{model: normal distribution of labels}
    Given a sample $X \in \RR^d$, the observed label $Y$ is normally distributed with the mean $\meanY(X)$ and the variance $\sigma^2(X)$ for some functions $\meanY\colon \RR^d \to \RR$, $\sigma^2\colon \RR^d \to \RR_+$.
  \end{model}
  The normality of the noise is not an obligatory requirement, but it allows computing some constants precisely. In our analysis, we mostly use concentration inequalities that can be naturally extended to sub-Gaussian setting. We will work under general nonparametric assumptions on functions $\meanY$ and $\sigma^2$, see the details in Section~\ref{sec:theory}.

\subsection{Construction of the Test}
  Nonparametric estimation offers a variety of tools for regression such as kNN, splines or kernel methods~\citep{tsybakov_introduction_2009}. In this work, we stick to kernel approaches and employ celebrated Nadaraya-Watson (NW) method that estimates a function at a point $\xv$ via weighted mean of its neighbours. Below, we introduce the method formally.

  Let $\mu$ be the Lebesgue measure in $\RR^d$. For a kernel $K\colon \RR^d \to \RR_+$, $\int_{\RR^d} K(\tv) d \mu(\tv) = 1$, NW method computes weights of samples $X_1, \ldots, X_n$ at the point $\xv$ as
  \begin{align}
  \label{eq: NW weight definition}
    \weight_i(\xv) = \frac{K\left(\frac{\xv - X_i}{h}\right)}{\sum_{i = 1}^n K\left(\frac{\xv - X_i}{h}\right)},
  \end{align}
  where $h$ is a bandwidth. Typically, $h$ tends to 0 as $n$ tends to infinity. Then, it computes the estimated mean
  \[
    \estimatorn{f}(\xv) = \sum_{i = 1}^n \weight_i(\xv) Y_i
  \] of the conditional distribution $Y \mid X = \xv$. This approach can be extended for computing the estimator of variance $\Var[Y \mid X = \xv]$:
  \begin{align*}
    \estimatorn{\sigma}^2(\xv) = \sum_{i = 1}^n \weight_i(\xv) Y_i^2 - \left(\sum_{i = 1}^n \weight_i(\xv) Y_i \right)^2.
  \end{align*}
  Generally, estimates for mean and variance can use different kernels and bandwidths. However, we stick to the single choice in this work to make the results simpler and more illustrative.

  In the paper~\citep{fan_efficient_1998}, it was shown that under some assumptions on $h, n$ and $K(\cdot)$, we have
  \begin{align}
  \label{eq: normal limit}
    \sqrt{n h^d} \left(\estimatorn{\sigma}^2(\xv) - \sigma^2(\xv)\right) \underset{nh^d \to \infty}{\longrightarrow} \normDistribution{0}{\sigma_V^2},
  \end{align}
  where $\sigma_V^2(\xv) = \sigma^4(\xv) \frac{2 \Vert K \Vert_2^2}{p(\xv)}$, $\Vert K \Vert_2^2 = \int_{\RR^d} K^2(\tv) d \mu(\tv)$ and $p(\cdot)$ is a marginal density of covariates $X$. Thus, we obtain
  \begin{align*}
    \lim \sup_{n h^d \to \infty } \PP \left( \sigma^2(\xv) - \estimatorn{\sigma}^2(\xv) \ge z_{1 - \beta} \sigma^2(\xv) \sqrt{\frac{2 \Vert K \Vert_2^2}{n h^d p(\xv)}} \right) \le \beta.
  \end{align*}
  This convergence result allows to construct confidence sets with the guaranteed asymptotic coverage. Since $\sigma^2(\xv) \ge \lambda$ under the null hypothesis, we obtain the test
  \begin{align}
  \label{eq: test definition}
    \estimatorn{\sigma}^2(\xv) \le \lambda \left (1 - z_{1 - \beta} \Vert K \Vert_2 \sqrt{\frac{2}{n h^d p(\xv)}} \right ).
  \end{align}
  Due to the Slutsky lemma, if we replace $p(\xv)$ with some consistent estimator $\estimatorn{p}(\xv)$, the above still will be the test of asymptotic significance level $\beta$. For the density estimator, $\estimatorn{p}(\xv)$ we suggest the nonparametric estimator $\estimatorn{p}(\xv) = \frac{1}{n h^d} \sum_{i = 1}^n K\left( \frac{\xv - X_i}{h}\right)$.

\subsection{Abstention Algorithm}
  The derived test allows to construct the procedure of regression with reject option. The only remaining thing we should check before applying the test is that $\estimatorn{p}(\xv)$ is not zero. Let $a$ and $b$ be such numbers that $K(\tv) \ge a \cdot \indicator{\Vert \tv \Vert \le b}$ for all $\tv \in \RR^d$. For theoretical purposes we demand $\estimatorn{p}(\xv)$ to be greater than $4 a / (n h^d)$ for any accepted point $\xv$, see the details in Section~\ref{sec:main_proof}. From the construction of the test, it also follows that the prediction $\estimatorn{f}(\xv)$ is rejected independent of the value $\estimatorn{\sigma}^2(\xv)$ if $\estimatorn{p}(\xv) \le \frac{2 z_{1 - \beta}^2}{n h^d} \int_{\RR^d} K^2(\tv) d \mu(\tv)$. The resulting procedure is summarized in Algorithm~\ref{algo: test}.

  \begin{algorithm2e}[t!]
    {\caption{Acceptance testing}
    \label{algo: test}}
    \DontPrintSemicolon
    \KwIn{samples $\{(X_i, Y_i)\}_{i = 1}^n$, bandwidth $h$, parameters $\lambda, \beta, a$}
    \KwOut{accept or reject the regression result}
    Calculate $\estimatorn{p}(\xv), \estimatorn{\sigma}^2(\xv)$\;
    \eIf{$\estimatorn{p}(\xv) \ge \frac{4 a}{n h^d}$ \textbf{and} criterion~\eqref{eq: test definition} holds
    }{
            accept results of the regression\;
    }{
        reject\;
    }
 \end{algorithm2e}
    
  The proposed procedure was designed for abstract features in $\RR^d$. However, in machine learning applications we often have quite complex features as images or texts, and neural networks are usually used for their processing. The considered method might be coupled with neural networks by applying it to some embedding space induced by a neural network. 

\section{Theoretical guarantees}
\label{sec:theory}
  In this section, we provide theoretical guarantees for our algorithm. There are some natural assumptions that should hold to obtain our results.

  \begin{assumption}
  \label{assumption: mean assumption}
    The Hessian of the function $\meanY$ exists and is bounded by $H_f$. Moreover, $\meanY$ is $L_f$-Lipschitz.
  \end{assumption}

  Assumption~\ref{assumption: mean assumption} helps to reduce the bias in the estimation of $\meanY$. Roughly speaking, if the kernel is symmetric then $\EE \estimatorn{f}(\xv) - \meanY(\xv)$ has order at most $h^2$ times the second derivative of $\meanY$. Otherwise, $h$ times the Lipschitz constant may appear in the decomposition of the bias $\EE \estimatorn{f}(\xv) - \meanY(\xv)$. We also impose the similar assumption for $\sigma^2(\xv)$.

  \begin{assumption}
  \label{assumption: variance assumption}
    The Hessian of the function $\sigma^2$ exists and is bounded by $H_\sigma$. Moreover, $\sigma^2$ is $L_{\sigma^2}$-Lipschitz.
  \end{assumption}

  As was previously mentioned, the bias term of order $h$ vanishes if the kernel $K$ is symmetric. Besides, to estimate $\meanY$ at a point $\xv$, the kernel should aggregate well the neighborhood of $\xv$. Thus, its support should cover some ball in $\RR^d$. But the kernel should not rely on the response provided by far points, so we require exponential tail for the kernel. The most common assumption is that the support of the kernel is bounded, but it is not the case of the Gaussian kernel which is widely used. Formally, the case of the Gaussian kernel implies that $\estimatorn{p}(\xv)$ is non-zero over the whole space $\RR^d$ but we start considering a point $\xv$ as explored only if it has estimated density at least $\Theta\bigl((n h^d)^{-1}\bigr)$. That allows to derive standard bias-variance decomposition and has a natural interpretation in terms of regression with abstention.

  \begin{assumption}
  \label{assumption: kernel assumptions}
    For the kernel $K\colon \RR^d \to \RR_+$, there exist constants $a$ and $b$ such that
    \begin{align*}
      K(\tv) \ge a \indicator{\Vert \tv \Vert \le b}
    \end{align*}
    holds for all $\tv \in \RR^d$. The kernel is symmetric, i.e. $K(\tv) = K(-\tv)$. Moreover, there are constants $R_K$ and $r_K$ such that for all $\tv$, it holds that
    \begin{align*}
      K(\tv) \le R_K e^{-r_K \Vert \tv \Vert}.
    \end{align*}
  \end{assumption}

  Finally, we impose some conditions on the density $p(\xv)$. In the classical nonparametric studies, it is usually assumed that the support of $p(\xv)$ has positive Lebesgue measure so we do not consider nonparametric low-dimensional manifold estimation. We define
  \begin{align*}
    \support_q = \{\xv \in \RR^d \mid p(\xv) > q\}.
  \end{align*}
  We denote the support $\operatorname{cl} (\support_0)$ by $\support$ and the boundary of $\support$ by $\partial \support$. Inside the support $\support$ we require $p(\xv)$ to be Lipschitz. That also helps to suppress summands of order $h$ in the bias of our estimator. So the density can be non-continuous at the boundary like the uniform distribution, but it will not affect the inference inside the support. 

  \begin{assumption}
  \label{assumption: density assumption}
    The density $p$ of $X$ is $L_p$-Lipschitz in $\support_0$ and bounded by $C_p$.
  \end{assumption}

  To bound the excess risk at a point $\xv$, we need its neighborhood to be explored a bit. So there should be large enough probability mass in a ball of radius $h$ around $\xv$. Thus, we require $p(\xv)$ to be larger than $C h$ and the Euclidean distance to the boundary $d(\xv, \partial \support)$ to be at least $C' h$. If $\partial \support = \varnothing$, then $d(\xv, \partial \support)$ is assumed to be infinite. 

  Finally, note that $\risk_\lambda(\cdot)$ depends on the training set. We bound the mean of the excess risk over all training sets $\mathcal{D} = \{(X_i, Y_i)\}_{i = 1}^n$ where $X_i$ are i.i.d. samples from the density $p(\cdot)$ and $Y_i$ generated according to Model~\ref{model: normal distribution of labels}.

  In the theorem below, we study the upper bounds for the risk. The notation $\lesssim$ means that the corresponding inequality holds with some multiplicative constant that is independent of $n, h, \beta$ and $p(\xv)$. The formulation with all the constants presented in the explicit way is given in Supplementary Material, see Theorem~\ref{theorem: finite-sample thm}.
  \begin{theorem}
      Suppose that Assumptions~\ref{assumption: mean assumption}-\ref{assumption: density assumption} hold. Define $\Delta(\xv) = |\sigma^2(\xv) - \lambda|$. Let $\excessRisk_\lambda(\xv)$ be the excess risk of the estimator $\estimatorn{f}(\xv)$ and the abstention rule $\estimatorn{\alpha}(\xv)$ introduced in Algorithm~\ref{algo: test}. Let $\EE_{\mathcal{D}}$ be the expectation with respect to training dataset $\mathcal{D} = \{(X_i, Y_i)\}_{i = 1}^n$, where $X_1, \ldots, X_n$ are i.i.d. samples from then density $p(\cdot)$. Then
      \begin{itemize}
        \item if $\sigma^2(\xv) \ge \lambda$ and $\Delta(\xv) \le C_1 \{ n h^d p(\xv) \}^{-1} + C_2 h^2 / p(\xv) - C_3 z_{1 - \beta} \{n h^d p(\xv) \}^{-1/2}$, we have
          \[
            \EE_{\mathcal{D}} (\excessRisk_\lambda(\xv)) \lesssim \{n h^d p(\xv) \}^{-1} + h^{4} p^{-2}(\xv) + \Delta(\xv),
          \]

        \item if $\sigma^2(\xv) \ge \lambda$ and $\Delta(\xv) \ge C_1 \{ n h^d p(\xv) \}^{-1} + C_2 h^2 / p(\xv) - C_3 z_{1 - \beta} \{n h^d p(\xv) \}^{-1/2}$, we have
        \[
            \EE_{\mathcal{D}} (\excessRisk_\lambda(\xv)) \lesssim \Delta(\xv) \exp \left ( - \Omega(n h^{d + 2} p(\xv))\right )
        \]

        \item if $\sigma^2(\xv) \ge \lambda$ and $\Delta(\xv) \ge C_1 \{ n h^d p(\xv) \}^{-1} + C_2 h - C_3 z_{1 - \beta} \{n h^d p(\xv) \}^{-1/2}$, we have
          \[
            \EE_{\mathcal{D}} (\excessRisk_\lambda(\xv)) \lesssim \exp \{ - n h^d p(\xv)\},
          \]

        \item if $\sigma^2(\xv) < \lambda$ and $\Delta(\xv) \le C_1' \{ n h^d p(\xv) \}^{-1} + C_2' h^2/p(\xv) + C_3' z_{1 - \beta} \{n h^d p(\xv) \}^{-1/2}$, we have
          \[
            \EE_{\mathcal{D}}(\excessRisk_\lambda(\xv)) \lesssim \{n h^d p(\xv) \}^{-1} + h^{4} p^{-2}(\xv) + \Delta(\xv),
          \]

        \item if $\sigma^2(\xv) < \lambda$ and $\Delta(\xv) \gg C_1' \{ n h^d p(\xv) \}^{-1} + C_2' h^2/p(\xv) + C_3' z_{1 - \beta} \{n h^d p(\xv) \}^{-1/2}$, we have
          \[
            \EE_{\mathcal{D}}(\excessRisk_\lambda(\xv)) \lesssim \{n h^d p(\xv) \}^{-1} + h^{4} p^{-2}(\xv).
          \]
      \end{itemize}
    \label{theorem: informal thm}
  \end{theorem}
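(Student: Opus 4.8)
The plan is first to turn the pointwise excess risk into an explicit expression that separates the error of the mean estimator from the error of the abstention decision. Fix the training set $\mathcal{D}$ and note that under Model~\ref{model: normal distribution of labels} the conditional loss of accepting is $\EE_Y[(\estimatorn{f}(\xv) - Y)^2 \mid X = \xv] = (\estimatorn{f}(\xv) - \meanY(\xv))^2 + \sigma^2(\xv)$. Combining this with Proposition~\ref{proposition: optimal estimators}, which gives $\risk^*_\lambda(\xv) = \min\{\sigma^2(\xv), \lambda\}$, yields two regimes. When $\sigma^2(\xv) \ge \lambda$ the optimal action is to reject, and
\[
  \excessRisk_\lambda(\xv) = \bigl[(\estimatorn{f}(\xv) - \meanY(\xv))^2 + \Delta(\xv)\bigr]\,\indicator{\estimatorn{\alpha}(\xv) = 0};
\]
when $\sigma^2(\xv) < \lambda$ the optimal action is to accept, and
\[
  \excessRisk_\lambda(\xv) = (\estimatorn{f}(\xv) - \meanY(\xv))^2\,\indicator{\estimatorn{\alpha}(\xv) = 0} + \Delta(\xv)\,\indicator{\estimatorn{\alpha}(\xv) = 1}.
\]
Taking $\EE_{\mathcal{D}}$, every bound in the theorem then becomes the sum of a mean-squared term $\EE_{\mathcal{D}}(\estimatorn{f}(\xv) - \meanY(\xv))^2$ (possibly restricted to an acceptance event) and a term $\Delta(\xv)$ multiplied by the probability of taking the wrong action.

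Second, I would bound the mean-squared error of $\estimatorn{f}$ by the standard bias--variance decomposition. Conditioning on the design and using Assumptions~\ref{assumption: mean assumption} and \ref{assumption: density assumption} together with symmetry of the kernel (Assumption~\ref{assumption: kernel assumptions}), the squared bias is of order $h^4 / p^2(\xv)$ and the conditional variance is of order $\{n h^d p(\xv)\}^{-1}$, using $\sum_i \weight_i^2(\xv) \asymp \{n h^d p(\xv)\}^{-1}$ on the high-probability event that $\estimatorn{p}(\xv) \asymp p(\xv)$. This gives $\EE_{\mathcal{D}}(\estimatorn{f}(\xv) - \meanY(\xv))^2 \lesssim \{n h^d p(\xv)\}^{-1} + h^4 p^{-2}(\xv)$, exactly the first group of terms in every case.

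Third, and this is the heart of the argument, I would control the probability of a wrong abstention decision. By the definition of the test~\eqref{eq: test definition} the wrong-action event is, up to the guard $\estimatorn{p}(\xv) \ge 4a/(n h^d)$, of the form $\{\estimatorn{\sigma}^2(\xv) \le \lambda(1 - z_{1 - \beta}\Vert K\Vert_2 \sqrt{2/(n h^d \estimatorn{p}(\xv))})\}$ when $\sigma^2(\xv) \ge \lambda$, and its complement when $\sigma^2(\xv) < \lambda$. Writing $\estimatorn{\sigma}^2(\xv) - \sigma^2(\xv)$ as a centred weighted quadratic statistic plus bias, the required deviation is of order $\Delta(\xv) + z_{1 - \beta}\{n h^d p(\xv)\}^{-1/2}$ minus the bias of $\estimatorn{\sigma}^2$ (of order $h^2/p(\xv)$ under Assumption~\ref{assumption: variance assumption}, or $h$ under the Lipschitz bound). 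I would first replace $\estimatorn{p}(\xv)$ by $p(\xv)$ in the threshold on a high-probability event via a concentration bound for the kernel density estimator, and then apply a Bernstein-type inequality to $\sum_i \weight_i(\xv)(Y_i^2 - \EE Y_i^2)$ and $(\estimatorn{f}(\xv) - \meanY(\xv))^2$, whose variance proxy is $\asymp \sigma^4(\xv)\{n h^d p(\xv)\}^{-1}$. The threshold condition on $\Delta(\xv)$ in each case is precisely the statement that the centred deviation is either negligible (so the probability is bounded by $1$ and the risk is dominated by the mean-squared term plus $\Delta(\xv)$) or bounded away from zero, in which case the sub-Gaussian/sub-exponential tail produces the exponential factors; the distinction between the quadratic regime $\exp(-\Omega(n h^{d+2} p(\xv)))$ and the linear regime $\exp(-n h^d p(\xv))$ of the Bernstein bound, paired with the two possible orders of the bias, accounts for the separate large-$\Delta$ cases.

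Finally, I would assemble the pieces. For the accept-when-should-reject contribution $\EE_{\mathcal{D}}[(\estimatorn{f}(\xv) - \meanY(\xv))^2 \indicator{\estimatorn{\alpha}(\xv) = 0}]$ in the large-$\Delta$ cases, bounding the indicator by $1$ is too crude, so I would instead use Cauchy--Schwarz together with a fourth-moment bound on $\estimatorn{f}(\xv) - \meanY(\xv)$ and the exponentially small acceptance probability, so that this term is absorbed into the $\Delta(\xv)\exp(\cdot)$ or $\exp(\cdot)$ bounds (here the lower bound on $\Delta(\xv)$ furnished by the case hypothesis is what lets the constant be absorbed). The main obstacle I anticipate is the concentration step for $\estimatorn{\sigma}^2(\xv)$: the weights $\weight_i(\xv)$ are themselves random and normalized by $\sum_i K((\xv - X_i)/h)$, the statistic is quadratic in the $Y_i$ and hence only sub-exponential, and the threshold depends on the same data through $\estimatorn{p}(\xv)$; decoupling these dependencies while keeping the constants explicit enough to reproduce the exact form of the threshold conditions is the delicate part, and is what forces the careful separation into the quadratic and linear deviation regimes.
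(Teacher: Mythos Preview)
Your proposal is correct and follows essentially the same route as the paper: the same excess-risk decomposition (Proposition~\ref{proposition: excess risk decomposition}), the Cauchy--Schwarz step with a fourth-moment bound on $\estimatorn{f}-\meanY$ in the $\sigma^2\ge\lambda$ case, the replacement of $\estimatorn{p}(\xv)$ by deterministic bounds on a high-probability event, and a sub-exponential concentration argument for $\estimatorn{\sigma}^2(\xv)-\sigma^2(\xv)$ whose two Bernstein regimes produce the $\exp(-\Omega(nh^{d+2}p))$ versus $\exp(-\Omega(nh^d p))$ dichotomy. The only refinement the paper adds that you leave implicit is the explicit three-term splitting $\estimatorn{\sigma}^2(\xv)=(\labels-\meanLabels)^\T(\diag_{\weights}-\weights\weights^\T)(\labels-\meanLabels)+2(\labels-\meanLabels)^\T(\diag_{\weights}-\weights\weights^\T)\meanLabels+\meanLabels^\T(\diag_{\weights}-\weights\weights^\T)\meanLabels$, with Hanson--Wright applied to the quadratic-in-noise piece conditionally on the design; this is the concrete realization of the step you flag as the main obstacle.
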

  Let us note that Theorem~\ref{theorem: informal thm} applies not only to Algorithm~\ref{algo: test} but also to the plugin estimator proposed by~\cite{denis_regression_2020}. Indeed, by setting $\beta = 0.5$ one gets $z_{1 - \beta} = 0$ and we obtain plugin approach as a particular instance of our algorithm. 
  While Theorem~\ref{theorem: informal thm} determines only the upper bound of the risk, it satisfactorily captures the real behavior of Algorithm~\ref{algo: test}, see experimental evaluation in Section~\ref{sec:experiments}. Below, we discuss different estimation regimes implied by Theorem~\ref{theorem: informal thm}.

  For beginning, we consider the case when $\sigma^2(\xv) > \lambda$. In most of the applications, we assume that $n h^d \to \infty$ as $n$ tends to infinity. Typically, $h$ is chosen to minimize bias-variance trade-off so $h = \Theta\bigl(n^{-1/(d + 4)}\bigr)$. Assume additionally that $\beta < 0.5$ and
  \begin{align*}
      C_1 \{ n h^d p(\xv) \}^{-1} + C_2 h^2 / p(\xv) - C_3 z_{1 - \beta} \{n h^d p(\xv) \}^{-1/2} < 0,
  \end{align*}
  where constants $C_1, C_2, C_3$ come from the first case of Theorem~\ref{theorem: informal thm}.  This inequality can be satisfied if $h = C_{\beta} n^{-1/(d + 4)} p^{1/2}(\xv)$ for a small enough constant $C_{\beta}$ that depends on $\beta$. We refer to this condition as {\it ``undersmoothing''} since it requires the bias to be significantly less than the variance. Moreover, a similar condition is required to ensure~\eqref{eq: normal limit}. Then, 
  our approach provably becomes very efficient. Indeed, in that case the condition $\Delta(\xv) \le C_1 \{ n h^d p(\xv) \}^{-1} + C_2 h^2 / p(\xv) - C_3 z_{1 - \beta} \{n h^d p(\xv) \}^{-1/2}$ can be simplified as $\Delta(\xv) < 0$ so it never holds. Thus, for any $\xv$ such that $\sigma^2(\xv) > \lambda$, the expected excess risk converges exponentially. But if one chooses larger $h$, the advantages of our algorithm remain, since it becomes to converge exponentially earlier than the plugin.

  For the plugin, our upper bound can not achieve exponential convergence rates while $\Delta(\xv) \le C_1 \{ n h^d p(\xv)\}^{-1} + C_2 h^2/ p(\xv)$. That matches our observations for synthetic data, see Figure~\ref{fig:excess_risk_n} and Figure~\ref{fig:comparison_plugin}.

  To explain the behaviour of estimators for $\sigma^2(\xv) \le \lambda$, we impose the following proposition. 
  \begin{proposition}
  \label{proposition: excess risk decomposition}
    For any pair of estimators $(\estimator{f}, \estimator{\alpha})$ the expected excess risk can be decomposed as follows:
    \begin{align*}
      \EE_{\mathcal{D}} \risk_\lambda(\xv) - \risk^*(\xv) = \EE_{\mathcal{D}} \left[\bigl(\estimator{f}(\xv) - \meanY(\xv)\bigr)^2 \indicator{\estimator{\alpha}(\xv) = 0}\right] + \Delta(\xv) \cdot \PP \bigl(\estimator{\alpha}(\xv) \neq \alpha(\xv)\bigr).
    \end{align*}
  \end{proposition}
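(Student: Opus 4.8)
The plan is to work with a pointwise identity that holds for every fixed realization of the estimators $\estimator{f}$ and $\estimator{\alpha}$, and to take $\EE_{\mathcal{D}}$ only at the very end. The risk $\risk_\lambda(\xv)$ already integrates out the test label $Y \mid X = \xv$, so the first move is to evaluate that inner expectation by the elementary bias--variance decomposition: since $Y \mid X = \xv$ has mean $\meanY(\xv)$ and variance $\sigma^2(\xv)$,
\[
  \EE_Y\bigl[(\estimator{f}(\xv) - Y)^2 \mid X = \xv\bigr] = \bigl(\estimator{f}(\xv) - \meanY(\xv)\bigr)^2 + \sigma^2(\xv).
\]
Substituting this into the definition of $\risk_\lambda(\xv)$ rewrites it as $\bigl[(\estimator{f}(\xv) - \meanY(\xv))^2 + \sigma^2(\xv)\bigr]\indicator{\estimator{\alpha}(\xv) = 0} + \lambda\,\indicator{\estimator{\alpha}(\xv) = 1}$.

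Next I would compute the oracle risk $\risk^*(\xv)$ using Proposition~\ref{proposition: optimal estimators}. The optimal estimator $\meanY$ has zero bias, so its conditional squared loss is exactly $\sigma^2(\xv)$, and the optimal rule $\alpha(\xv) = \indicator{\sigma^2(\xv) \ge \lambda}$ accepts precisely when $\sigma^2(\xv) < \lambda$; a one-line check then yields the clean form $\risk^*(\xv) = \min\bigl(\sigma^2(\xv), \lambda\bigr)$. With both risks in hand, the difference splits as
\[
  \risk_\lambda(\xv) - \risk^*(\xv) = \bigl(\estimator{f}(\xv) - \meanY(\xv)\bigr)^2 \indicator{\estimator{\alpha}(\xv) = 0} + T,
\]
where $T = \sigma^2(\xv)\indicator{\estimator{\alpha}(\xv) = 0} + \lambda\,\indicator{\estimator{\alpha}(\xv) = 1} - \min\bigl(\sigma^2(\xv), \lambda\bigr)$ collects all the abstention-related terms.

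The crux is to show that $T = \Delta(\xv)\,\indicator{\estimator{\alpha}(\xv) \neq \alpha(\xv)}$, and I would do this by a short $2 \times 2$ case analysis on the sign of $\sigma^2(\xv) - \lambda$ (which fixes both $\alpha(\xv)$ and $\min(\sigma^2(\xv), \lambda)$) crossed with the value of $\estimator{\alpha}(\xv)$. In each of the four cases $T$ collapses either to $0$ when $\estimator{\alpha}(\xv) = \alpha(\xv)$, or to $|\sigma^2(\xv) - \lambda| = \Delta(\xv)$ when $\estimator{\alpha}(\xv) \neq \alpha(\xv)$; for instance, if $\sigma^2(\xv) \ge \lambda$ but $\estimator{\alpha}(\xv) = 0$, then $T = \sigma^2(\xv) - \lambda = \Delta(\xv)$. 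This is the only place that requires care, although it is pure bookkeeping rather than a genuine difficulty.

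Finally, taking $\EE_{\mathcal{D}}$ of the conditional identity finishes the proof. The first term passes through unchanged as $\EE_{\mathcal{D}}\bigl[(\estimator{f}(\xv) - \meanY(\xv))^2 \indicator{\estimator{\alpha}(\xv) = 0}\bigr]$. In the second term both $\Delta(\xv)$ and $\alpha(\xv)$ are deterministic, depending only on $\xv$ and not on $\mathcal{D}$, so $\EE_{\mathcal{D}}\bigl[\Delta(\xv)\indicator{\estimator{\alpha}(\xv) \neq \alpha(\xv)}\bigr] = \Delta(\xv)\,\PP\bigl(\estimator{\alpha}(\xv) \neq \alpha(\xv)\bigr)$, which is exactly the claimed decomposition.
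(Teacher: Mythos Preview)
Your proposal is correct and follows essentially the same approach as the paper: both use the bias--variance identity $\EE_Y[(\estimator{f}(\xv)-Y)^2\mid X=\xv]=(\estimator{f}(\xv)-\meanY(\xv))^2+\sigma^2(\xv)$ and then split into cases on the sign of $\sigma^2(\xv)-\lambda$. The only cosmetic difference is that the paper handles the two cases $\sigma^2(\xv)\ge\lambda$ and $\sigma^2(\xv)<\lambda$ separately from the start, whereas you unify them via $\risk^*(\xv)=\min(\sigma^2(\xv),\lambda)$ and defer the case split to the analysis of $T$; the underlying computations are identical.
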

  In our case
  \begin{align*}
    \PP \bigl(\estimator{\alpha}(\xv) \neq \alpha(\xv)\bigr) & \le \PP\left(\estimatorn{\sigma}^2(\xv) \le \lambda \left[ 1 - \frac{C z_{1 - \beta}}{\sqrt{n h^d \estimatorn{p}(\xv)}} \right] \right) \\
    & \le \PP \left (\estimatorn{\sigma}^2(\xv) - \sigma^2(\xv) \le \Delta(\xv) - \frac{C \lambda z_{1 - \beta}}{\sqrt{n h^d \estimatorn{p}(\xv)}}  \right ) = : \mathbf{P}(\xv)
  \end{align*}

  The whole set $\{\xv \mid \sigma^2(\xv) \le \lambda \}$ can be divided into two sets. Roughly speaking, one is $\mathcal{A} = \{\xv \mid \Delta(\xv) \lesssim (n h^d)^{-1/2} \}$ and the other is $\mathcal{B} = \{ \xv \mid \Delta(\xv) \gg (n h^d)^{-1/2} \}$. 
  While $\xv \in \mathcal{A}$, the leading term of the excess risk is $\Delta(\xv) \cdot \mathbf{P}(\xv)$ that has order $(n h^d)^{-1/2}$. The factor $\mathbf{P}(\xv)$ does not go to zero, since, informally, $\sqrt{n h^d} \bigl(\estimatorn{\sigma}^2(\xv) - \sigma^2(\xv)\bigr) \approx \normDistribution{0}{\sigma_V^2}$ due to~\eqref{eq: normal limit} and so the difference between $\lambda$ and $\sigma^2(\xv)$ can not be captured by $\estimatorn{\sigma}^2(\xv)$. While this argument is not strict from the theoretical point of view, one may prove anticoncentration bounds via the Carbery-Wright theorem. On Figure~\ref{fig:excess_risk_n}, one may observe sets $\mathcal{A}$ for different $n$ as hills on the left of the point where $\sigma^2(\xv) = \lambda$.
  
  But if $\xv \in \mathcal{B}$, bias and variance suppress term $\Delta(\xv) \cdot \mathbf{P}(\xv)$ and we obtain usual rates of convergence for nonparametric estimators. Since for small $n$ the set $\mathcal{A}$ is large there maybe some warm-up when we see slower rates of convergence on plots. So in each point, the convergence may have two phases: one is when $\xv \in \mathcal{A}$ and the other is when $\xv \in \mathcal{B}$. That is how we explain two phases on Figure~\ref{fig:risk_nh} for $\xv \in \{-1.6, -0.5, 0.3\}$.

  We summarize the behaviour of our estimator and estimator proposed by~\cite{denis_regression_2020} in Table~\ref{tab: rates of excess risk}.

  \begin{table}[t!]
      \begin{tabular}{|c|c|c|c|}
            \hline
            \footnotesize $\sigma^2(\xv) \ge \lambda$ & \footnotesize $\Delta(\xv) < C_1 h^2 / p(\xv)$& \footnotesize $ C_2 h > \Delta(\xv) > C_3 h^2 / p(\xv)$ & \footnotesize $\Delta(\xv) \gg h$  \\
            \hline
           \footnotesize \vtop{\hbox{\strut testing-}\hbox{\strut based}}
           & \footnotesize \vtop{\hbox{\strut $O (h^2) / p(\xv) \cdot \exp\bigl\{-\Omega\bigl(n h^{d + 2} / p(\xv)\bigr)\bigr\}$}}
            &    \footnotesize \footnotesize \multirow{2}{*}[-10pt]{$O (h) \cdot \exp\bigl\{-\Omega\bigl(n h^{d + 2} p(\xv)\bigr)\bigr\}$} & \footnotesize \multirow{2}{*}[-10pt]{$\exp\bigl\{-\Omega\bigl(n h^d p(\xv)\bigr)\bigr\}$}  
            \\
           \cline{1-1}\cline{2-2}
           \footnotesize plugin &  \footnotesize \vtop{\hbox{\strut $O \left ( h^2 / p(\xv) \right)$} }
           & & \\
           \hline
       \end{tabular}

        \vspace{5pt}

        \begin{tabular}{|c|c|c|}
          \hline
           \footnotesize $\sigma^2(\xv) < \lambda$ & \footnotesize $\Delta(\xv) \lesssim \bigl(n h^d p(\xv)\bigr)^{-1/2}$ 
           & \footnotesize $\Delta(\xv) \gg \bigl(n h^d p(\xv)\bigr)^{-1/2}$ \\
           \hline
           \footnotesize testing-based
           &  \footnotesize \multirow{2}{*}{$O\bigl(\{n h^d p(\xv)\}^{-1/2}\bigr)$}
           & \footnotesize \multirow{2}{*}{$O(\{n h^d p(\xv)\}^{-1}) + O\bigl(h^4 / p^2(\xv)\bigr)$} \\
           \cline{1-1}
           \footnotesize plugin & & \\
           \hline
      \end{tabular}
      \caption{The upper bounds derived in Theorem~\ref{theorem: informal thm}, the case of undersmoothing.}
      \label{tab: rates of excess risk}
  \end{table}

\subsection{Sketch of the Proof}
  We start by the following bound on the kernel values:
  \begin{align*}
    a \indicator{X_i \in \ball_{b h}(\xv)} \le K\left(\frac{X_i - \xv}{h}\right),
  \end{align*}
  where $\ball_r(\xv)$ is a ball with radius $r$ and center $\xv$. These bounds allow us to deal with values of the kernel like they are Bernoulli random variable with certain mean. Thus, we show that with probability $1 - C \exp^{- \Omega(n h^d p(\xv))}$, we have
  \begin{align*}
    a b^d \omega_d p(\xv) \le \estimatorn{p}(\xv) \le 2 p(\xv) + L_p h \int_{\RR^d} \Vert \tv \Vert K(\tv) d \mu(\tv),
  \end{align*}
  see Propositions~\ref{proposition: denominator lower bound} and~\ref{proposition: density upper bound} in Supplementary Material.
  The bounds above are rough but they will be sufficient for our purposes.

  For any $L$-Lipschitz function $g$ we also can obtain the bound 
  \begin{align*}
    \left|\sum_{i = 1}^n g(X_i) \weight(X_i) - g(\xv) \right | \le \sum_{i = 1}^n |g(X_i) - g(\xv)| \weight(X_i) \lesssim \frac{L h}{p(\xv) n h^d},
  \end{align*}
  since $\weight(X_i)$ is, roughly speaking, $\frac{R_K e^{-r_K \Vert (X_i - \xv) / h \Vert}}{p(\xv) n h^d}$ up to a constant, see Corollary~\ref{corollary: weights bound} in Supplementary Material. This approximation is based on the fact that $K(\tv)$ is bounded above by a constant and the denominator of the weight with high probability is $\Omega\bigl(n h^d p(\xv)\bigr)$, see Proposition~\ref{proposition: denominator lower bound} in Supplementary Material. Finally, under some conditions on $n$, $h$ and $p(\xv)$ we establish the concentration bounds for any function $g$ which is Lipschitz and has bounded Hessian, see Corollary~\ref{corollary: nhp concentration bound} in Supplementary Material.
  
  If $\sigma^2(\xv) \ge \lambda$ then $\indicator{\estimator{\alpha}(\xv) = 0} = \indicator{\estimator{\alpha}(\xv) \neq \alpha(\xv)}$. So we may bound
  \begin{equation*}
    \EE_{\mathcal{D}} \bigl(\estimator{f}(\xv) - \meanY(\xv)\bigr)^2 \indicator{\estimator{\alpha}(\xv) = 0}
    \le \sqrt{\EE_{\mathcal{D}} \bigl(\estimator{f}(\xv) - \meanY(\xv)\bigr)^4 \indicator{\estimator{\alpha}(\xv) = 0} } \cdot \PP^{1/2}\bigl(\estimator{\alpha}(\xv) \neq \alpha(\xv)\bigr).
  \end{equation*}
  We bound the 4-th moment above by integrating concentration inequalities. That results in standard bias-variance trade-off, see Lemma~\ref{lemma: lp bias variance tradeoff} in Supplementary Material. Thus, the rate of the excess risk is determined by the factor $\PP^{1/2}(\estimatorn{\alpha}\bigl(\xv) \neq \alpha(\xv)\bigr)$. It can be reformulated as 
  \[
    \PP^{1/2}\left(|\estimatorn{\sigma}^2(\xv) - \sigma^2(\xv)| \ge \Delta(\xv) + O \left \{ (n h^d p(\xv))^{-1/2} \right \}\right).
  \]
  The random value $\estimatorn{\sigma}^2(\xv)$ behaves like sub-exponential random variable with the mean $\sigma^2(\xv) + o(1)$. Thus, under certain assumptions on $\Delta(\xv)$, we get exponential rates of convergence via the concentration argument, see Corollary~\ref{corollary: variance deviation bound} in Supplementary Material.

  If $\sigma^2(\xv) < \lambda$, two terms from Proposition~\ref{proposition: excess risk decomposition} demonstrate different behaviour. The first one can be bounded via standard bias-variance trade-off. The second one exponentially decreases if $h$ is smaller than some constant and $\Delta(\xv)$ is larger than some decreasing function of $n$ and $h$. The proof is similar to the case $\sigma^2(\xv) > \lambda$.

\section{Experiments}
\label{sec:experiments}

\subsection{How to choose $\lambda$ and $\beta$}
  In practice, two natural questions arise: how to choose $\lambda$ and how to choose $\beta$. Obviously, one may define $\lambda$ from the formulation of the problem as an inappropriate level of noise. The case of $\beta$ is a bit more sophisticated. From Algorithm~\ref{algo: test}, we infer that any $\xv$ will be rejected if $\estimatorn{p}(\xv) \le \frac{2 \Vert K \Vert_2^2 z_{1 - \beta}^2}{n h^d}$. Thus, any such $\xv$ is considered as outlier, and, hence, $z_{1 - \beta}$ is a tolerance level for outliers. Additionally, the choice of $\beta$ determines the trade-off between type I and type II errors. 

  \begin{figure}[t!]
    \subfigure[Acceptance probability.]{%
      \label{fig:acc_prob}
      \includegraphics[width=.46\linewidth]{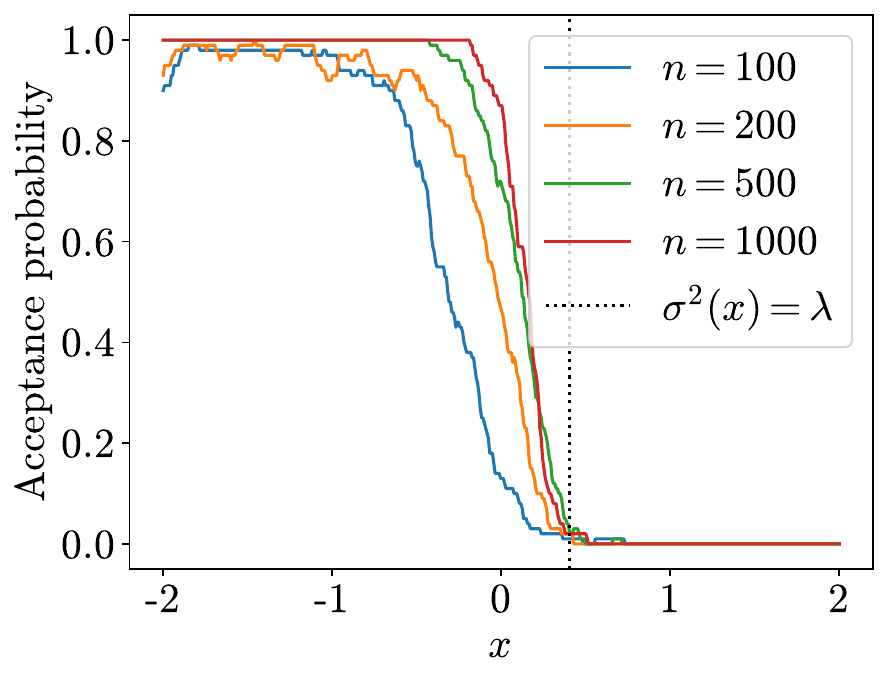}
    }\qquad
    \subfigure[Expected excess risk.]{%
      \label{fig:excess_risk_n}
      \includegraphics[width=.46\linewidth]{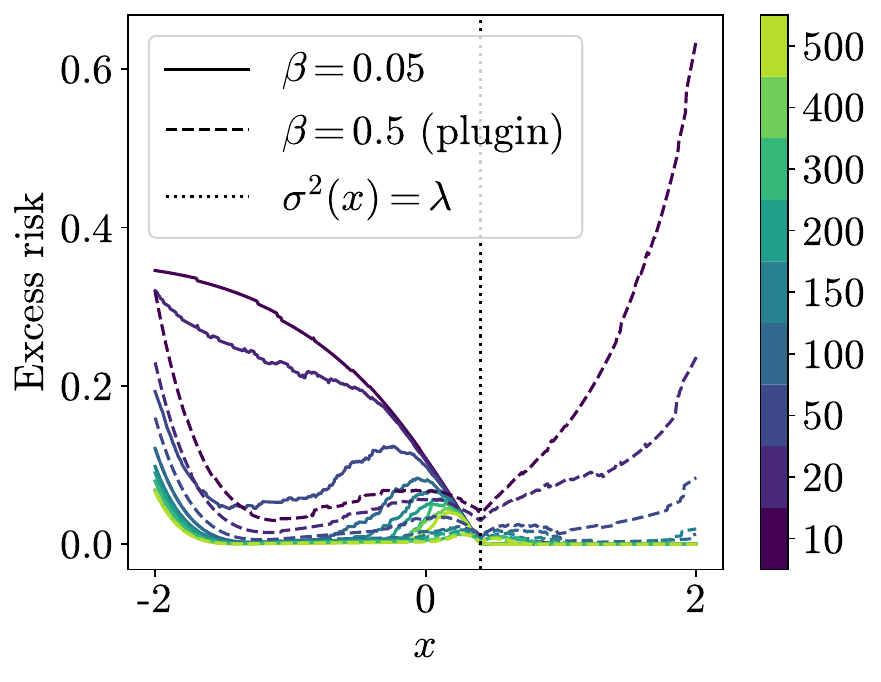}
    }
    \caption{Example with synthetic data: $X \sim \mathcal{U}(-2,2), \: \sigma(x) = \mathtt{sigmoid}(x)$. We sample multiple datasets of each sample size $n$. Confidence level $\beta=0.05$ and abstention cost $\lambda = 0.36$.}
  \end{figure}

  \begin{figure}
    \subfigure[Acceptance probability.]{%
      \label{fig:acceptance_beta}
      \includegraphics[width=.46\linewidth]{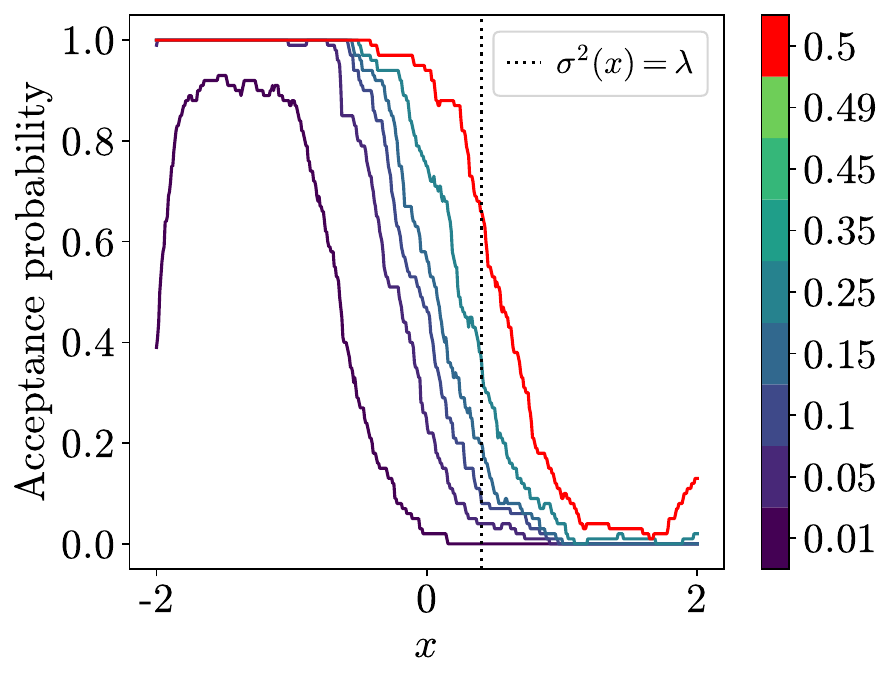}
    }\qquad
    \subfigure[Expected excess risk.]{%
      \label{fig:excess_risk_beta}
      \includegraphics[width=.46\linewidth]{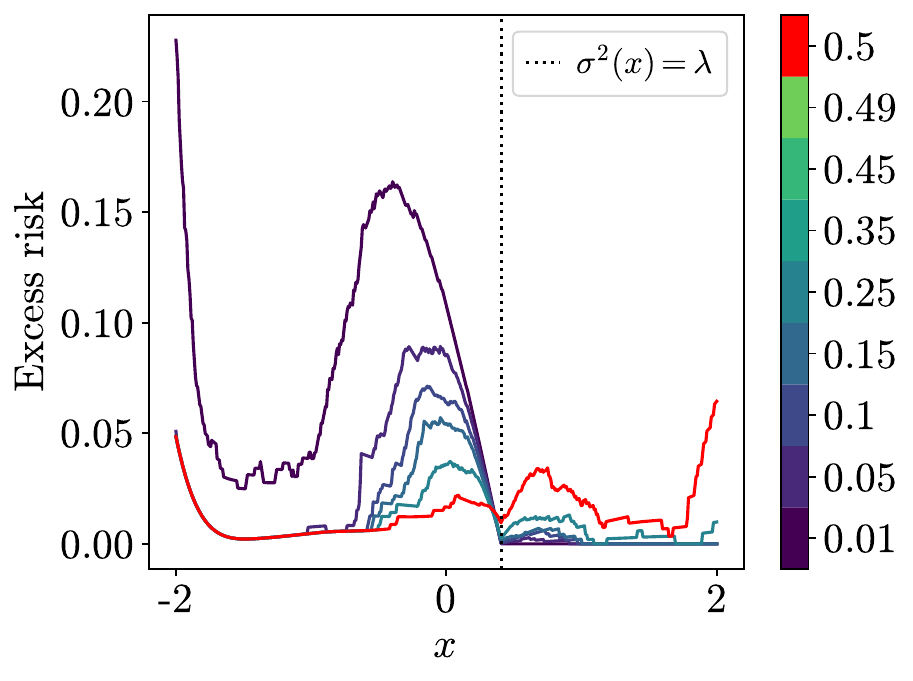}
    }
    \caption{Example with synthetic data: $X \sim \mathcal{U}(-2,2), \: \sigma(x) = \mathtt{sigmoid}(x)$. Sample size $n = 100$, abstention cost $\lambda = 0.36$. We apply the proposed testing-based method for different values of $\beta$. Plugin approach corresponds to $\beta=0.5$.}
    \label{fig:acc_vs_x_main}
  \end{figure}

\subsection{Synthetic data}
  For the first part of the experiments we use one dimensional data with known simple functions as true mean and variance at each point:
  \begin{equation}
  \label{data_model}
    Y = f(X) + \sigma(X) \varepsilon, \; X \sim p(\cdot), \: \varepsilon \sim \mathcal{N}(0, 1).
  \end{equation}
  Specifically, we consider normal and uniform distributions of the independent variable $p(\cdot) \in \{\mathcal{N}(0, 1), \mathcal{U}(-2, 2)\}$, a fixed mean function $f(x) = \frac{x^2}{4}$, and two choices of standard deviation: sigmoid function and Heaviside function. Parameter $\lambda$ was fixed at $0.36$ and parameter $\beta = 0.05$ unless otherwise noted. Optimal bandwidth was selected using leave-one-out cross-validation optimizing mean squared error of prediction by NW estimator. In all our experiments for each setting of hyperparameters we have generated 100 different random datasets from our data model and then averaged the results.

\subsubsection{Convergence of estimates}
  We sampled 100 datasets of sizes $n \in \{100, 200, 500, 1000\}$ and for each $x \in [-2, 2]$ we estimate the fraction of predictions that are accepted by the proposed method.  We present the resulting chart in Figure~\ref{fig:acc_prob} for $X \sim \mathcal{U}(-2, 2), \: \sigma(x) = \mathtt{sigmoid}(x)$, additional charts are in Supplementary Material, Section~\ref{sec:suppl_acc_prob}. The results demonstrate that for the area with $\sigma^2(x) > \lambda$ (to the right of the dashed line) the convergence is much faster than for the area with $\sigma^2(x) < \lambda$.

  Additionally, we also estimate expected excess risk, since we know the values of $f(x)$ and $\sigma(x)$ for any $x$. For the first plot (see Figure~\ref{fig:excess_risk_n}) we vary sample size $n$ from $10$ to $500$. We compare the proposed approach with $\beta = 0.05$ with ``plugin'' baseline, corresponding to $\beta = 0.5$. 
  For the testing-based method we see the very quick convergence for $\sigma^2(x) > \lambda$. For the points with $\sigma^2(x) < \lambda$ the convergence is slower especially for the smaller values of $\Delta(x)$. Thus, the observed behaviour well corresponds to the one predicted by the theory. For the plugin approach, the convergence is slower especially for the points with $\sigma^2(x) > \lambda$. 

\subsubsection{Dependence on $\beta$}
  In this experiment, we have studied the behavior of our method when changing its only hyperparameter $\beta$ in the range between $0.01$ and $0.5$. Since $\beta = 0.5$ corresponds to ``plugin'' method described previously, we show it in red. For this we fixed the number of samples at $n = 100$, sampled 100 datasets and calculated the expected excess risk for each $x \in [-2, 2]$. With the increase of $\beta$ the method becomes less conservative (more accepts), see Figure~\ref{fig:acceptance_beta}. It leads to the increased expected risk at points where prediction should be rejected and decreased risk at the points where predictions should be accepted, see Figure~\ref{fig:excess_risk_beta}. Thus, in practice parameter $\beta$ might be selected based on the trade-off between these two errors depending on the particular features of the considered applied problem.

\subsubsection{Pointwise convergence}
  \begin{figure}[t!]  
    \subfigure[True mean and standard deviation, and their estimates for $n = 100$ points.]{%
      \label{fig:synthetic_points}
      \includegraphics[width=.32\linewidth]{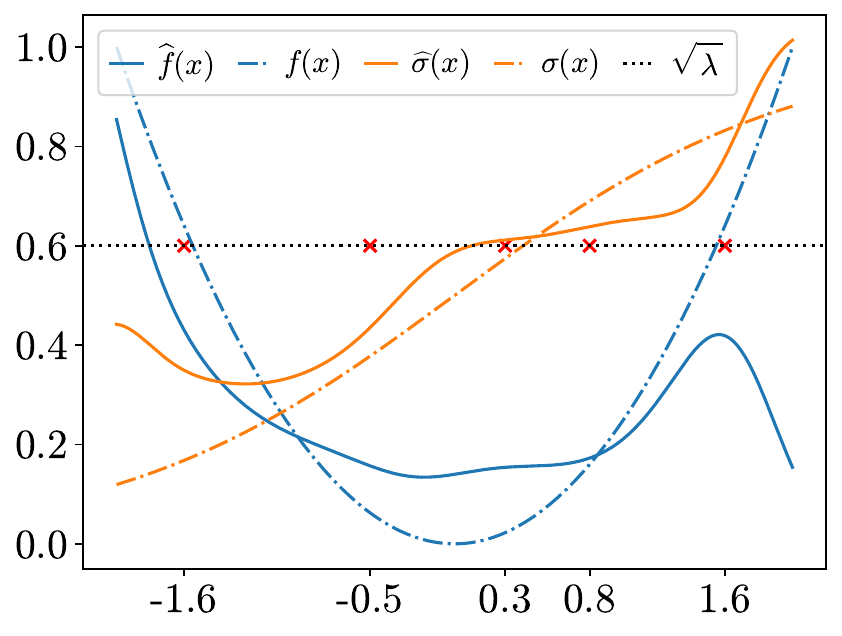}
    }
    \subfigure[Expected excess risk.]{%
      \label{fig:risk_nh}
      \includegraphics[width=.32\linewidth]{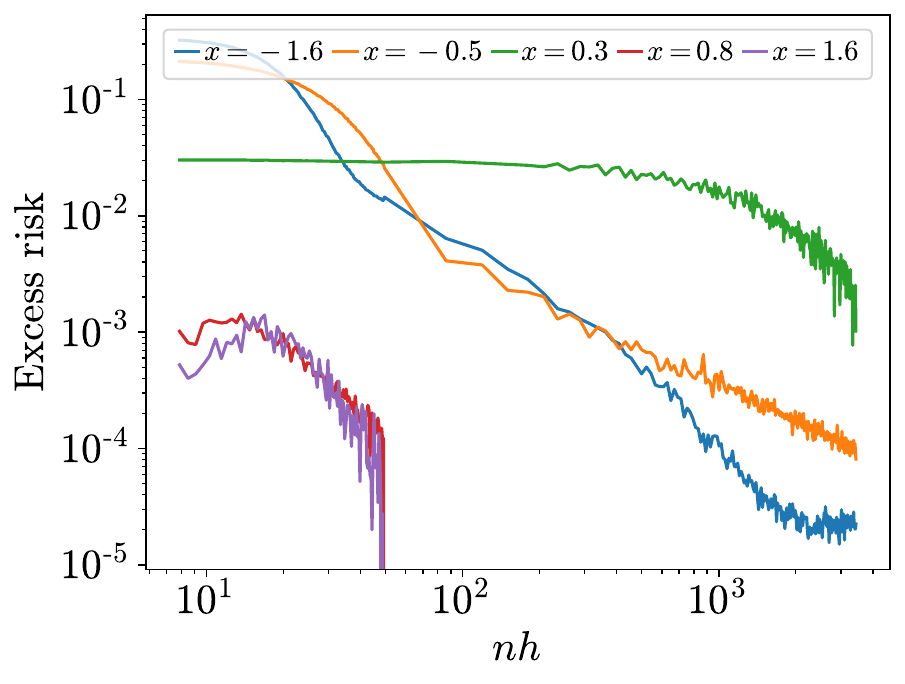}
    }
    \subfigure[Comparison with plugin method at a single point.]{%
      \label{fig:comparison_plugin}
      \includegraphics[width=.32\linewidth]{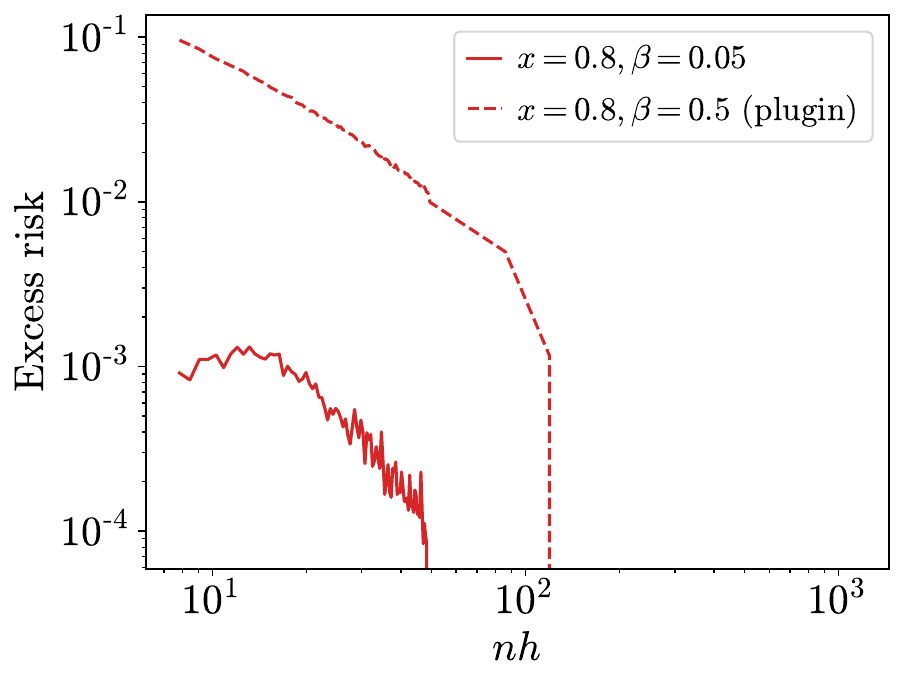}
    }
    \caption{We sample multiple datasets of each size and for selected points $x \in \{-1.6, -0.5, 0.3, 0.8, 1.6\}$ calculate the expected excess risk. In this experiment $X \sim \mathcal{U}(-2,2), \: \sigma(x) = \mathtt{sigmoid}(x)$, abstention cost $\lambda = 0.36$, $\beta = 0.05$.}
  \end{figure}

  Finally, we sampled multiple datasets of increasing sizes $n$ from $10$ to $20000$ and selected 5 diagnostic points: $x \in \{-1.6, -0.5, 0.3, 0.8, 1.6\}$, see Figure~\ref{fig:synthetic_points}. When sample size is less than $100$, we generate $20000$ datasets of each size, while for larger sample size we only use $100$. In order to perform a more straightforward averaging of the across datasets of the same size, we have used the same bandwidth $h \sim \frac{1}{n^5}$ that was selected to show the expected polynomial dependence in $n h$ of the risk at points $x$ with $\sigma^2(x) < \lambda$. The resulting dependencies of the risk on $n h$ are depicted on Figure~\ref{fig:risk_nh}. We observe all the main outcomes predicted by the theory:
  \begin{itemize}
    \item rapid convergence of the risk for the points with $\sigma^2(x) > \lambda$ (points $x = 0.8$ and $x = 1.6$);

    \item polynomial convergence of the risk as function of $n h$ for $\sigma^2(x) < \lambda$ with moderately large values of $\Delta(x)$, i.e., points $x = -1.6$ and $x = -0.5$;

    \item very slow convergence for the point with $\sigma^2(x) < \lambda$ and small value of $\Delta(x)$.
  \end{itemize}
  Additionally, on Figure~\ref{fig:comparison_plugin} we experimentally confirm that plugin method has slower convergence than testing-based method for $\sigma^2(x) > \lambda$.

\subsection{Airfoil Self-Noise Data Set}

  \begin{figure}[t!]
    \subfigure[Fraction of accepted points for each threshold $\lambda$.]{%
      \label{fig:acc_prob_airfoil}
      \includegraphics[width=.45\linewidth]{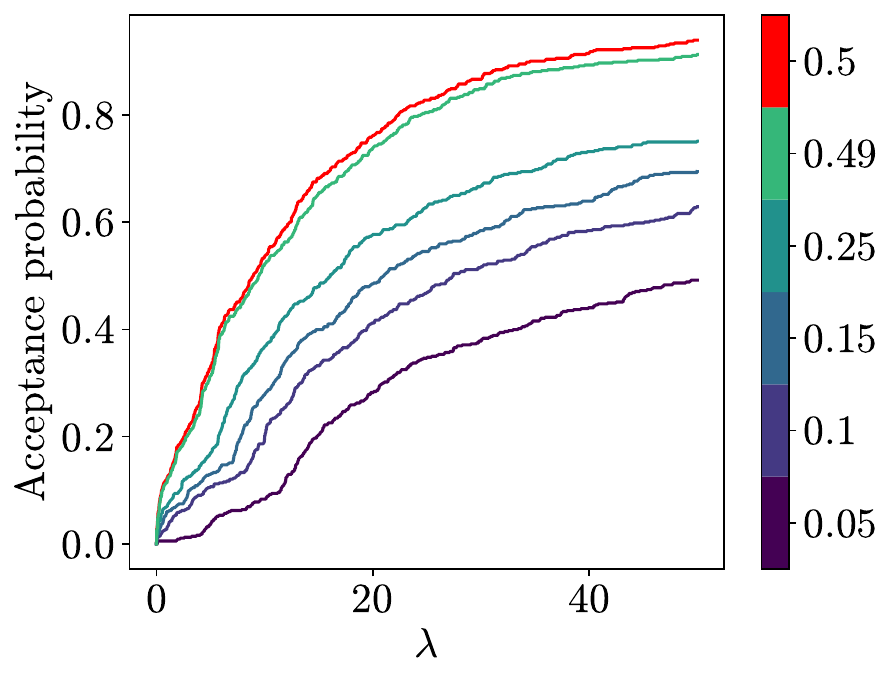}
    }\qquad 
    \subfigure[MSE with rejection.]
    {%
      \label{fig:mse_airfoil}
      \includegraphics[width=.45\linewidth]{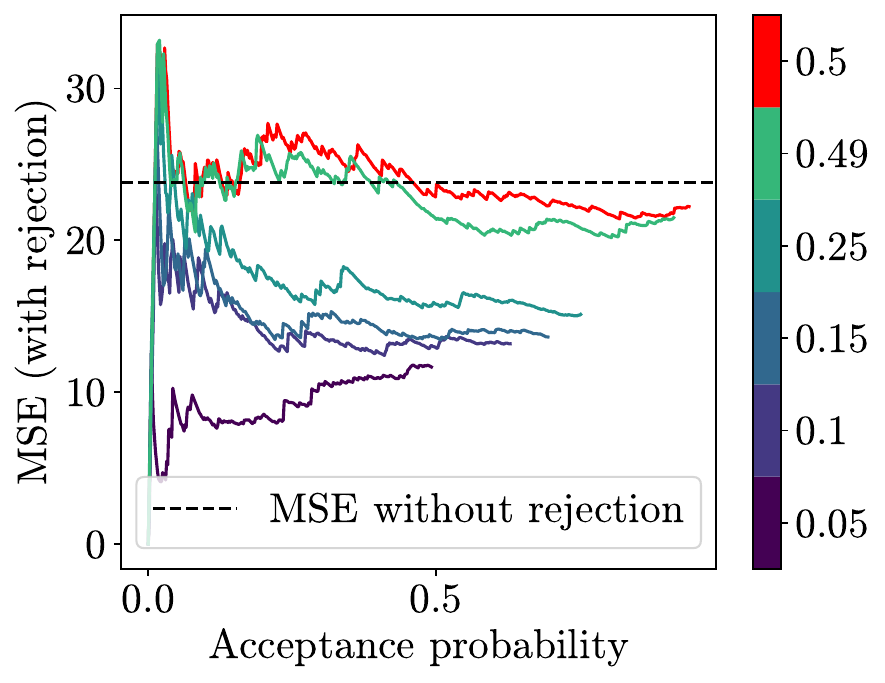}
    }
    \caption{Airfoil data, split 70/30 by the second feature. For $\lambda \in [0, 50]$ we calculate acceptance (retention) probability and MSE at accepted points. $x$-values of acceptance probabilities are inferred as fraction of accepted points for each $\lambda$.}
  \end{figure}


  We have tested our method on the Airfoil dataset from the UCI collection~\citep{Dua2019}. We do not perform any special preprocessing of the data or feature engineering, only standard scaling of features. We prepare train and test sets in two steps. First, we select a pivot feature and put $70\%$ of the data with the lowest values of this feature to part A and the rest of the data becomes part B. For the second step we select $20\%$ of each part (sampled uniformly) and put it in the other part. First part becomes the train set and the second part the test set. In this way we guarantee that test set will have data with low values of $\widehat{p}(\xv)$ as well as data distributed similar to train data.

  In our experiments we select different features as pivots for the split and then vary $\lambda \in [0, 50]$, calculating acceptance (retention) fraction and mean squared error. We present results for splitting by the second feature: ``Angle of attack''. Other configurations can be found in the Supplementary Material, Section~\ref{sec:suppl_airfoil}. On Figure~\ref{fig:acc_prob_airfoil} we show how acceptance probability varies as a function of $\lambda$. Figure~\ref{fig:mse_airfoil} illustrates the dependence of the mean squared error of estimation as a function of the fraction of points accepted for prediction. The curves show the expected trend to increase when accepting more points. Using more conservative estimates one obtains higher accuracy for the given acceptance rate. However, by construction, high acceptance rates are not achievable for the proposed method due to the limitations on the values of the estimated density $\estimatorn{p}(\xv)$.

  \subsection{CPU-small Data Set}

    \begin{figure}[t!]
    \label{fig:mse_cpu}
    \subfigure[Raw data.]{%
      \label{fig:mse_cpu_raw}
      \includegraphics[width=.45\linewidth]{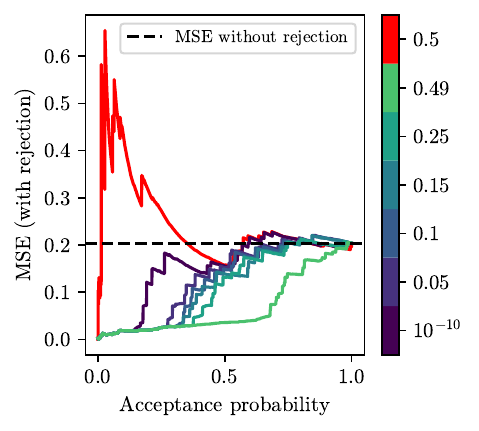}
    }\qquad 
    \subfigure[Embeddings.]
    {%
      \label{fig:mse_cpu_emb}
      \includegraphics[width=.45\linewidth]{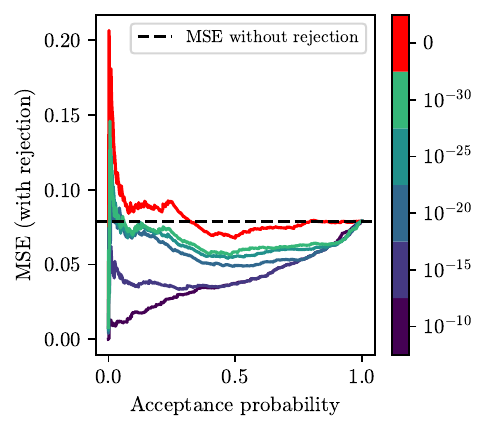}
    }
    \caption{CPU-small data, split 70/30 by the first feature. For $\lambda \in [0, 2]$ we calculate acceptance (retention) probability and MSE at accepted points. For the raw data we vary $\beta$, in case of embeddings we vary $z_{1-\beta}$ directly due to higher dimensionality of the data.} 
  \end{figure}

  Another dataset from UCI collection that we used is CPU-small. This dataset has 8192 instances and 12 features. Data splitting is done in the same manner as the Airfoil dataset. During the preprocessing we standardize the training data to have zero mean and unit variance and then apply the same scaling to the test set. Splitting was done based on the first feature ``lread''.
  
  In this experiment we have tried two scenarios: first is to use the data as it is and the second is to use higher dimensional version (embeddings) of the data, obtained with a neural network. First we trained a two layer neural network with 50 neurons in each layer and ReLU activations and then used the values from the last layer as input for our method. On Figures~\ref{fig:mse_cpu_raw} and~\ref{fig:mse_cpu_emb} we present the partial MSE scores obtained with rejection in these two setups. 
  
  Using embeddings provides much lower MSE without rejection as one would expect. It also shows that using our method we can significantly outperform the baseline plug-in method. For this dataset our algorithm is less sensitive to the choice of $\beta$ than for the previous one. We opted to vary $z_{1-\beta}$ directly in the embedding case since for a higher dimension of the data the values of $\widehat{p}$ span a larger strip in the logarithmic scale. In order to show dependence on $\beta$ we would need to choose values very close to $0.5$. Choosing the same set of $\beta$ values as for raw data case yields curves similar to $z_{1-\beta}=10^{-10}$.


\section{Conclusion}
\label{sec:conclusion}
  In this work, propose a new method for selective prediction in heteroskedastic regression tasks under the Chow risk model. The method is based on the natural idea of testing the values of conditional variance at a given point. Our theoretical analysis show the existence of exponential and polynomial convergence regimes that depend on the relative values of the variance and abstention cost. The proposed method compares favorably to the plugin baseline both in theory and in the conducted experimental valuation.

\acks{The research was supported by the Russian Science Foundation grant 21-11-00373. The work of F. Noskov was prepared within the framework of the HSE University Basic Research Program.}


\bibliography{main}

\newpage

\appendix

\section{Full Statement of Main Theorem}
\label{sec:statement}
  We suppress some terms and constants in the statement of Theorem~\ref{theorem: informal thm}, so in this section we provide the full statement of our theorem. 

  \begin{theorem}
  \label{theorem: finite-sample thm}
    Suppose that Assumptions~\ref{assumption: mean assumption}-\ref{assumption: density assumption} hold. Consider some $\xv \in \RR^d$. Assume 
    \begin{align*}
        p(\xv) \ge \max \left \{ 2 L_p b h, 8/ (\omega_d b^d n h^d), \frac{4 \pi^{d/2} R_K}{\Gamma(d/2) r_K^2} \cdot L_p h \right \}
    \end{align*}
    and $d(\xv, \partial \support) \ge b h$.
    Define $\Delta(\xv) = |\sigma^2(\xv) - \lambda|$. Let $\risk(\xv)$ be the Chow risk of Algorithm~\ref{algo: test} and $\omega_d$ be volume of the unit ball in $\RR^d$. If $\sigma^2(\xv) > \lambda$ and
    \begin{align*}
        \Delta(\xv) & \ge \frac{20 R_K \sigma^2(\xv)}{a b^d \omega_d n h^d p(\xv)} +5 \constCorTwo^\sigma \frac{h^2}{p(\xv)} \left( 1 + \frac{2 R_K}{a b^d \omega_d n h^d p(\xv)} \right) - \frac{
                    \sqrt{2} \lambda \Vert K \Vert_2 z_{1 - \beta}
                }{
                    \sqrt{n h^d \left (2 p(\xv) + L_p h \int_{\RR^d} \Vert \tv \Vert K(\tv) d \mu(\tv) \right )}
                }, \\
        \Delta (\xv) & \ge \frac{5 L_f^2 h^2}{r_k^2} \log^2 \frac{
            2 e^2 R_K
        }{
            a b^d \omega_d h^d p(\xv)
        } - \frac{
                    \sqrt{2} \lambda \Vert K \Vert_2 z_{1 - \beta}
                }{
                    \sqrt{n h^d \left (2 p(\xv) + L_p h \int_{\RR^d} \Vert \tv \Vert K(\tv) d \mu(\tv) \right )}
                },
    \end{align*}
    then the excess risk is at most
    \begin{align*}
        \EE_{\mathcal{D}} [\risk_\lambda(\xv) - \risk^*_\lambda(\xv)] & \lesssim \Biggl (
            \frac{
                \sigma^2(\xv) + \frac{2 L_{\sigma^2} h}{r_K} \log \frac{e n R_K}{a r_K}
            }{n h^d p(\xv)} + \frac{h^4}{p^2(\xv)} +  \{ n h^{d - 2} p(\xv)\}^{-1} + \Delta(\xv) \Biggr ) \cdot \mathbf{P}(\xv), \\
        \mathbf{P}(\xv) & \lesssim e^{- \Omega(n h^{d + 2} p(\xv))} \\
        & \quad + \exp \left(
            \frac{- \Omega(n h^d p(\xv) \cdot \delta(\xv))}{\sigma^2(\xv) + \frac{L_{\sigma^2} h}{e r_K}} 
            \cdot
            \min \left\{
                \frac{\delta(\xv) / 80}{\sigma^2(\xv) + \constCorTwo^\sigma h}, \frac{1}{2}
            \right\}
        \right) \\
         & \quad + \exp \left(
                - \frac{\Omega(n h^{d - 2} p(\xv)\delta^2(\xv))}{
                    \sigma^2(\xv) + \frac{L_{\sigma^2} h}{e r_K}
                }
                \cdot \left(
                    [\constCorTwo^f]^2 + \frac{2 L_f^2}{r_K^2} \log^2 \frac{e^2 R_K}{ a b^d \omega_d h^d p(\xv)}
                \right)^{-1}
            \right), \\
        \delta(\xv) & = \Delta(\xv) + \lambda \Vert K \Vert_2 z_{1 - \beta} \sqrt{\frac{2 \omega_d^{-1} r_K^d / \log^d (n^2 R_K)}{n h^d p(\xv) \cdot \left (1 +  \frac{\log (n^2 R_K)}{2 b r_K} \right )}}
    \end{align*}
    for some constants $\constCorTwo^f$ and $\constCorTwo^\sigma$ that do not depend on $\xv$ as well as constants inside $\lesssim$ and $\Omega(\cdot)$. If, additionally, we have
    \begin{align*}
        \Delta(\xv) \ge \frac{20 R_K \sigma^2(\xv)}{a b^d \omega_d n h^d p(\xv)} +5 \constCorTwoPrime^\sigma h \left( 1 + \frac{2 R_K}{a b^d \omega_d n h^d p(\xv)} \right) - \frac{
                    \sqrt{2} \lambda \Vert K \Vert_2 z_{1 - \beta}
                }{
                    \sqrt{n h^d \left (2 p(\xv) + L_p h \int_{\RR^d} \Vert \tv \Vert K(\tv) d \mu(\tv) \right )}
                },
    \end{align*}
    where $\constCorTwoPrime^\sigma$ does not depend on $\xv, n, h$,
    then
    \begin{align*}
        \mathbf{P}(\xv) & \lesssim e^{- \Omega(n h^d p(\xv))} \\
        & \quad + \exp \left(
            \frac{- \Omega(n h^d p(\xv) \cdot \delta(\xv))}{\sigma^2(\xv) + \frac{L_{\sigma^2} h}{e r_K}} 
            \cdot
            \min \left\{
                \frac{\delta(\xv) / 80}{\sigma^2(\xv) + \constCorTwo^\sigma h}, \frac{1}{2}
            \right\}
        \right) \\
         & \quad + \exp \left(
                - \frac{\Omega(n h^{d - 2} p(\xv)\delta^2(\xv))}{
                    \sigma^2(\xv) + \frac{L_{\sigma^2} h}{e r_K}
                }
                \cdot \left(
                    [\constCorTwo^f]^2 + \frac{2 L_f^2}{r_K^2} \log^2 \frac{e^2 R_K}{ a b^d \omega_d h^d p(\xv)}
                \right)^{-1}
            \right)
    \end{align*}
    If $\sigma^2(\xv) \le \lambda$ and
    \begin{align*}
        \Delta(\xv) & \ge \frac{20 R_K \sigma^2(\xv)}{a b^d \omega_d n h^d p(\xv)} +5 \constCorTwo^\sigma \frac{h^2}{p(\xv)} \left( 1 + \frac{2 R_K}{a b^d \omega_d n h^d p(\xv)} \right) + 
        \frac{
            2 \lambda (4 \pi)^{-d/4} z_{1 - \beta}
        }{
            \sqrt{
                a b^d \omega_d n h^d p(\xv)
            }
        }, \\
        \Delta (\xv) & \ge \frac{5 L_f^2 h^2}{r_k^2} \log^2 \frac{
            2 e^2 R_K
        }{
            a b^d \omega_d h^d p(\xv)
        } + 
        \frac{
            2 \lambda (4 \pi)^{-d/4} z_{1 - \beta}
        }{
            \sqrt{
                a b^d \omega_d n h^d p(\xv)
            }
        },
    \end{align*}
    then we have
    \begin{align*}
        \EE_{\mathcal{D}} [\risk(\xv) - \risk^*(\xv)] & \lesssim 
            \frac{
                \sigma^2(\xv) + \frac{2 L_{\sigma^2} h}{r_K} \log \frac{e n R_K}{a r_K}
            }{n h^d p(\xv)} + \frac{h^4}{p^2(\xv)} + \{ n h^{d - 2} p(\xv)\}^{-1} + \Delta(\xv) \cdot \mathbf{P}'(\xv), \\
        \mathbf{P}'(\xv) & \lesssim e^{- \Omega(n h^{d + 2} p(\xv))} \\
        & \quad + \exp \left(
            \frac{-\Omega(n h^d p(\xv) \cdot \delta'(\xv))}{\sigma^2(\xv) + \frac{L_{\sigma^2} h}{e r_K}} 
            \cdot
            \min \left\{
                \frac{\delta'(\xv) / 80}{\sigma^2(\xv) + \constCorTwo^\sigma h}, \frac{1}{2}
            \right\}
        \right) \\
         & \quad +
          \exp \left(
                -\frac{\Omega(n h^{d - 2} p(\xv)\delta'^2(\xv))}{
                    \sigma^2(\xv) + \frac{L_{\sigma^2} h}{e r_K}
                }
                \cdot  \left(
                    [\constCorTwo^f]^2 + \frac{2 L_f^2}{r_K^2} \log^2 \frac{e^2 R_K}{ a b^d \omega_d h^d p(\xv)}
                \right)^{-1}
            \right), \\
        \delta'(\xv) & = \Delta(\xv) - \frac{
            2 \lambda (4 \pi)^{-d/4} z_{1 - \beta}
        }{
            \sqrt{
                a b^d \omega_d n h^d p(\xv)
            }
        }.
    \end{align*}
    where constants in $\lesssim$ and $\Omega(\cdot)$ do not depend on $\xv$. If, additionally, we have
    \begin{align*}
        \Delta(\xv) & \ge \frac{20 R_K \sigma^2(\xv)}{a b^d \omega_d n h^d p(\xv)} +5 \constCorTwoPrime^\sigma h \left( 1 + \frac{2 R_K}{a b^d \omega_d n h^d p(\xv)} \right) + 
        \frac{
            2 \lambda (4 \pi)^{-d/4} z_{1 - \beta}
        }{
            \sqrt{
                a b^d \omega_d n h^d p(\xv)
            }
        },
    \end{align*}
    where $\constCorTwoPrime^\sigma$ does not depend on $n, h, \xv$, then it holds that
    \begin{align*}
        \mathbf{P}'(\xv) & \lesssim e^{- \Omega(n h^{d} p(\xv))} \\
        & \quad + \exp \left(
            \frac{-\Omega(n h^d p(\xv) \cdot \delta'(\xv))}{\sigma^2(\xv) + \frac{L_{\sigma^2} h}{e r_K}} 
            \cdot
            \min \left\{
                \frac{\delta'(\xv) / 80}{\sigma^2(\xv) + \constCorTwo^\sigma h}, \frac{1}{2}
            \right\}
        \right) \\
         & \quad +
          \exp \left(
                -\frac{\Omega(n h^{d - 2} p(\xv)\delta'^2(\xv))}{
                    \sigma^2(\xv) + \frac{L_{\sigma^2} h}{e r_K}
                }
                \cdot  \left(
                    [\constCorTwo^f]^2 + \frac{2 L_f^2}{r_K^2} \log^2 \frac{e^2 R_K}{ a b^d \omega_d h^d p(\xv)}
                \right)^{-1}
            \right), \\
        \delta'(\xv) & = \Delta(\xv) - \frac{
            2 \lambda (4 \pi)^{-d/4} z_{1 - \beta}
        }{
            \sqrt{
                a b^d \omega_d n h^d p(\xv)
            }
        }.
    \end{align*}
    Finally, for any value of $\Delta(\xv)$, we may bound
    \begin{align*}
        \EE_{\mathcal{D}} \risk_{\lambda}(\xv) - \risk_\lambda^*(\xv) \lesssim \frac{
                \sigma^2(\xv) + \frac{2 L_{\sigma^2} h}{r_K} \log \frac{e n R_K}{a r_K}
            }{n h^d p(\xv)} + \frac{h^4}{p^2(\xv)} + \{ n h^{d - 2} p(\xv)\}^{-1} + \Delta(\xv).
    \end{align*}
  \end{theorem}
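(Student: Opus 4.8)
The backbone is Proposition~\ref{proposition: excess risk decomposition}, which splits the pointwise excess risk into a regression part and a misclassification part,
\[
  \EE_{\mathcal{D}} \risk_\lambda(\xv) - \risk^*_\lambda(\xv)
  = \EE_{\mathcal{D}}\bigl[(\estimatorn{f}(\xv) - \meanY(\xv))^2 \indicator{\estimatorn{\alpha}(\xv) = 0}\bigr]
  + \Delta(\xv)\,\PP\bigl(\estimatorn{\alpha}(\xv) \neq \alpha(\xv)\bigr).
\]
Every displayed bound in the theorem is assembled from these two pieces, and the structural dichotomy between the two cases is already visible here. When $\sigma^2(\xv) \le \lambda$ the correct action is to accept ($\alpha(\xv)=0$), so the regression part is essentially unconditional and contributes the plain nonparametric rate, while the misclassification part carries the factor $\mathbf{P}'(\xv)$. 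When $\sigma^2(\xv) > \lambda$ the correct action is to reject, so $\indicator{\estimatorn{\alpha}(\xv)=0}=\indicator{\estimatorn{\alpha}(\xv)\neq\alpha(\xv)}$ and \emph{both} pieces are gated by the misclassification event; this is why the whole bound is multiplied by $\mathbf{P}(\xv)$. Using the trivial bound $\PP(\estimatorn{\alpha}(\xv)\neq\alpha(\xv))\le 1$ in the second piece, together with the regression estimate below, immediately yields the closing ``for any value of $\Delta(\xv)$'' inequality.

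First I would fix a high-probability good event $G$ on which the denominator of the Nadaraya--Watson weights is pinned. The kernel lower bound $a\indicator{\Vert\tv\Vert\le b}\le K(\tv)$ of Assumption~\ref{assumption: kernel assumptions} lets me treat the counts $\indicator{X_i\in\ball_{bh}(\xv)}$ as Bernoulli with mean $\asymp\omega_d b^d h^d p(\xv)$; a Chernoff/Bernstein argument (Propositions~\ref{proposition: denominator lower bound} and~\ref{proposition: density upper bound}) then gives $ab^d\omega_d p(\xv)\le\estimatorn{p}(\xv)\le 2p(\xv)+L_p h\int_{\RR^d}\Vert\tv\Vert K(\tv)\,d\mu(\tv)$ off an event of probability $\le Ce^{-\Omega(nh^dp(\xv))}$, and the assumption $p(\xv)\ge 8/(\omega_d b^d n h^d)$ ensures this lower bound clears the acceptance threshold $4a/(nh^d)$ of Algorithm~\ref{algo: test}. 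On $G$ the weights obey $\weight_i(\xv)\lesssim R_Ke^{-r_K\Vert(X_i-\xv)/h\Vert}/(nh^dp(\xv))$ (Corollary~\ref{corollary: weights bound}); coupled with the symmetric kernel and the bounded Hessians of Assumptions~\ref{assumption: mean assumption} and~\ref{assumption: variance assumption}, this turns curvature control into a squared bias $\lesssim h^4/p^2(\xv)$ (the $O(h)$ term being killed by symmetry), with the exponential tail of $K$ contributing the logarithmic Lipschitz correction $\tfrac{L_f h}{r_K}\log(\cdot)$. Decomposing $\estimatorn{f}(\xv)-\meanY(\xv)$ into bias plus a weighted sum of sub-Gaussian noises then gives, via the $L^p$ bias--variance estimate of Lemma~\ref{lemma: lp bias variance tradeoff}, a second moment $\lesssim\{nh^dp(\xv)\}^{-1}(\sigma^2(\xv)+\tfrac{2L_{\sigma^2}h}{r_K}\log\tfrac{enR_K}{ar_K})+h^4/p^2(\xv)+\{nh^{d-2}p(\xv)\}^{-1}$; the extra $L_{\sigma^2}h\log(\cdot)$ appears because $\sigma^2$ itself varies across the kernel window. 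The fourth-moment version of the same lemma, combined with $\PP^{1/2}(G^c)\le e^{-\Omega(nh^dp(\xv))}$ through Cauchy--Schwarz, shows the bad event $G^c$ contributes only a negligible exponentially small term.

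The refined factors $\mathbf{P}(\xv),\mathbf{P}'(\xv)$ come from analysing the test~\eqref{eq: test definition}. For $\sigma^2(\xv)>\lambda$ a wrong decision is an acceptance, which on $G$ forces $\estimatorn{\sigma}^2(\xv)-\sigma^2(\xv)\le-\delta(\xv)$ with exactly the $\delta(\xv)$ of the statement, the additive $z_{1-\beta}$ term in $\delta(\xv)$ being the gap opened by the test correction once $\estimatorn{p}(\xv)$ is replaced by $p(\xv)$ via the two-sided density bounds. I would then control $\estimatorn{\sigma}^2(\xv)$ through the split $\estimatorn{\sigma}^2(\xv)-\sigma^2(\xv)=\bigl(\sum_i\weight_i(\xv)Y_i^2-\EE[Y^2\mid\xv]\bigr)-\bigl(\estimatorn{f}^2(\xv)-\meanY^2(\xv)\bigr)$. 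The second-moment part is a weighted sum of sub-exponential $Y_i^2$, and a Bernstein tail (Corollary~\ref{corollary: variance deviation bound}) yields the $\min\{\cdot,\tfrac12\}$ term at scale $nh^dp(\xv)$, interpolating a sub-Gaussian ($\delta^2$) and a sub-exponential ($\delta$) regime. The squared-mean part is handled through the deviation of $\estimatorn{f}(\xv)$, which is sub-Gaussian but whose relevant Lipschitz-tail bias scale is $\tfrac{L_f^2h^2}{r_K^2}\log^2(\cdot)$; this produces the third term at the slower scale $nh^{d-2}p(\xv)$ normalised by $[\constCorTwo^f]^2+\tfrac{2L_f^2}{r_K^2}\log^2(\cdot)$, and, evaluated at the lower admissible value of $\delta(\xv)$, the regime-independent floor $e^{-\Omega(nh^{d+2}p(\xv))}$ (improving to $e^{-\Omega(nh^dp(\xv))}$ once $\Delta(\xv)\gtrsim h$). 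The two displayed lower bounds on $\Delta(\xv)$ are precisely the requirements that $\delta(\xv)$ dominate, respectively, the Hessian/second-moment bias $\asymp h^2/p(\xv)$ and the Lipschitz-tail bias $\asymp\tfrac{L_f^2h^2}{r_K^2}\log^2(\cdot)$ of $\estimatorn{\sigma}^2(\xv)$. The case $\sigma^2(\xv)\le\lambda$ is symmetric, with $\delta'(\xv)$ (now the $z_{1-\beta}$ correction subtracting) in place of $\delta(\xv)$.

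The main obstacle is the last step: concentrating $\estimatorn{\sigma}^2(\xv)$ around a \emph{data-dependent} threshold that itself contains $\estimatorn{p}(\xv)$. One must simultaneously (i) prove sub-exponential concentration of the quadratic form $\sum_i\weight_i(\xv)Y_i^2-(\sum_i\weight_i(\xv)Y_i)^2$ with the correct variance proxy and the two distinct $nh^d$ and $nh^{d-2}$ scalings, (ii) bound its heteroskedastic bias, separating the $h^2$ Hessian part from the $L_{\sigma^2}h\log(\cdot)$ and $L_f^2h^2\log^2(\cdot)$ kernel-tail parts, and (iii) transfer the randomness of the threshold onto $\delta(\xv)$ using the two-sided density bounds of the first step---all while conditioning consistently on the single good event $G$. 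Keeping these three couplings aligned, so that the final estimate separates cleanly into the stated $\min\{\cdot,\tfrac12\}$ form with explicit constants, is where essentially all the technical work resides.
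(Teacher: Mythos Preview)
Your proposal is correct and follows the paper's proof essentially step for step: the excess-risk split of Proposition~\ref{proposition: excess risk decomposition}, the Cauchy--Schwarz coupling of the fourth moment with $\PP^{1/2}(\estimatorn{\alpha}\neq\alpha)$ in Case~1, the $L^p$ bias--variance bound of Lemma~\ref{lemma: lp bias variance tradeoff} for the regression term, the two-sided density sandwich from Propositions~\ref{proposition: denominator lower bound} and~\ref{proposition: density upper bound} to replace $\estimatorn{p}(\xv)$ by $p(\xv)$ in the test threshold, and finally Corollary~\ref{corollary: variance deviation bound} applied with the resulting $\delta(\xv)$ (resp.\ $\delta'(\xv)$).

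The one place your narrative departs from the paper is in how the variance-deviation result (Corollary~\ref{corollary: variance deviation bound}) is obtained. You sketch a split $\estimatorn{\sigma}^2-\sigma^2=(\sum_i\weight_iY_i^2-\EE[Y^2\mid\xv])-(\estimatorn{f}^2-\meanY^2)$ and attribute the three exponential terms to a Bernstein tail on the second-moment piece and a sub-Gaussian bound on the squared-mean piece. The paper instead centers the labels first and writes $\estimatorn{\sigma}^2(\xv)$ as a quadratic form $(\labels-\meanLabels)^\T(\diag_{\weights}-\weights\weights^\T)(\labels-\meanLabels)$ plus a linear term plus a deterministic term; the quadratic part is handled by a Hanson--Wright inequality (Proposition~\ref{proposition: Hanson-Wright ineq}, Lemma~\ref{lemma: sigma quadratic term bound}), which is where the $\min\{\cdot,\tfrac12\}$ shape and the $e^{-\Omega(nh^{d+2}p(\xv))}$ floor actually originate, while the $nh^{d-2}$ term comes from the linear cross term (Lemma~\ref{lemma: linear term order}) and the $\tfrac{L_f^2h^2}{r_K^2}\log^2(\cdot)$ constraint from the deterministic constant term (Lemma~\ref{lemma: constant term upper bound}). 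Since you invoke Corollary~\ref{corollary: variance deviation bound} as a black box, this difference does not affect the validity of your theorem proof, but your heuristic for the provenance of the three exponential pieces is not quite the paper's mechanism.
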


\section{Proofs}

\subsection{Notation}
  Before we start our proves, we declare some notation we use:
  \begin{itemize}
    \item $\mu$ -- Lebesgue's measure on $\RR^d$;

    \item $\ball_r(\xv)$ -- the ball of the radius $r$ and the center $\xv$;

    \item $\weights$ -- weights of Nadaraya-Watson estimator, i.e.
    \begin{align*}
        \weights_i = \frac{K \left( \frac{\xv - X_i}{h}\right)}{
            \sum_{j = 1}^n K \left( \frac{\xv - X_j}{h}\right)
        };
    \end{align*}

    \item $\variances$ -- a vector that consists of $\sigma^2(X_i)$;

    \item $\meanLabels$ -- a vector of means with respect to labels $Y_i \sim \normDistribution{\meanY(X_i)}{\sigma^2(X_i)}$, i.e.
    \begin{align*}
      \meanLabels_i = \meanY(X_i);
    \end{align*}

    \item $\diag_{\yv}$ -- a diagonal matrix whose entries consists of vector $\yv$'s elements;

    \item $\omega_d$ -- the volume of a unit ball in $\RR^d$.
  \end{itemize}

\subsection{Proof of Proposition~\ref{proposition: optimal estimators}}
    Fix an estimator $\estimator{a}(\xv)$. Then the risk
    \begin{align*}
      \risk_{\lambda}(\xv) & = \EE \left [ (Y - \estimator{f}(X))^2 \indicator{\estimator{\alpha}(X) = 0} \mid X = \xv \right ] + \lambda \indicator{\estimator{\alpha}(\xv) = 1} \\
      & = \EE \left [(Y - \estimator{f}(X))^2 \mid X = \xv \right ] \indicator{\estimator{\alpha}(\xv) = 0} + \lambda \indicator{\estimator{\alpha}(\xv) = 1}
    \end{align*}
    attains the minimum for $\estimator{f}(\xv) = \EE [Y \mid X = \xv]$. For such $\estimator{f}(\xv)$ we have
    \begin{align*}
      \risk_\lambda(\xv) = \sigma^2(\xv) \indicator{\estimator{\alpha}(\xv) = 0} + \lambda \indicator{\estimator{\alpha}(\xv) = 1}.
    \end{align*}
    Clearly, $\alpha(\xv)$ is the optimal reject function.

\subsection{Proof of Proposition~\ref{proposition: excess risk decomposition}}
    Consider two cases. If $\sigma^2(\xv) \ge \lambda$, then $\risk^*_\lambda(\xv) = \lambda$. Thus,
    \begin{align*}
        \EE_{\mathcal{D}} \risk_\lambda(\xv) - \risk^*_\lambda(\xv) & = \EE \left [ (Y - \estimator{f}(\xv))^2 \indicator{\estimator{\alpha}(\xv) = 0} \mid X = \xv \right ] + \lambda \PP \left ( \estimator{\alpha}(\xv) = 1 \right) - \lambda \\
        & = \EE \left [\EE \left [ (Y - \estimator{f}(\xv))^2 \mid X = \xv \right] \indicator{\estimator{\alpha}(\xv) = 0} \right ] - \lambda \PP \left ( \estimator{\alpha}(\xv) = 0 \right).
    \end{align*}
    Then
    \begin{align*}
        & \EE \left [ (Y - \estimator{f}(\xv))^2 \mid X = \xv \right] = \EE \left[ (Y - \meanY(\xv))^2 \mid X = \xv \right ] \\
        & \quad + 2 \EE \left [ (Y - \meanY(\xv)) (\estimator{f}(\xv) - \meanY(\xv)) \mid X = \xv \right ] + \EE \left [ (\estimator{f}(\xv) - \meanY(\xv))^2 \mid X = \xv \right ] \\
        & = \sigma^2(\xv) + (\estimator{f}(\xv) - \meanY(\xv))^2.
    \end{align*}
    Thus,
    \begin{align*}
        \EE_{\mathcal{D}} \risk_\lambda(\xv) - \risk^*_\lambda(\xv) = \EE_{\mathcal{D}} \left [ (\estimator{f}(\xv) - \meanY(\xv))^2 \indicator{\estimator{\alpha}(\xv) = 0} \right ] + \Delta(\xv) \cdot \PP \left ( \estimator{\alpha}(\xv) = 0 \right ).
    \end{align*}
    Since $\alpha(\xv) = 1$, the proposition holds for the case $\sigma^2(\xv) \ge \lambda$. The case $\sigma^2(\xv) < \lambda$ can be checked analogously.  

\subsection{Weights bounding}
\begin{proposition}
  \label{proposition: denominator lower bound}
    Under Assumptions~\ref{assumption: kernel assumptions}-\ref{assumption: density assumption} it holds that
    \begin{align*}
        \PP \left( \sum_{i = 1}^n K \left(\frac{\xv - X_i}{h} \right) \le \logProb \right)
        & \le \exp \left( 
        - \logProb / 26
        \right), \\
        \logProb & = a b^d \omega_d \cdot p(\xv) \cdot n h^d / 2
    \end{align*}
    for any $\xv$ such that $p(\xv) > 2 L_p b h$ and $d(\xv, \partial S) \ge b h$.
  \end{proposition}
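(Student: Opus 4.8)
The plan is to replace the kernel sum by a Binomial count of design points and then apply a lower-tail Chernoff bound. The entry point is the pointwise kernel lower bound of Assumption~\ref{assumption: kernel assumptions}: since $K(\tv) \ge a\,\indicator{\Vert \tv\Vert \le b}$, we have $K\bigl(\tfrac{\xv - X_i}{h}\bigr) \ge a\,\indicator{X_i \in \ball_{bh}(\xv)}$, so that $\sum_{i=1}^n K\bigl(\tfrac{\xv - X_i}{h}\bigr) \ge a\,S$, where $S = \sum_{i=1}^n \indicator{X_i \in \ball_{bh}(\xv)}$ is a Binomial random variable with $n$ trials and success probability $q = \PP\bigl(X \in \ball_{bh}(\xv)\bigr) = \int_{\ball_{bh}(\xv)} p(\tv)\, d\mu(\tv)$. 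Hence $\PP\bigl(\sum_i K(\cdot) \le \logProb\bigr) \le \PP\bigl(S \le \logProb/a\bigr)$, and everything reduces to controlling the lower tail of $S$.

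Next I would lower bound the success probability $q$, which is where the two hypotheses on $\xv$ enter. The condition $d(\xv,\partial\support)\ge bh$ guarantees that $\ball_{bh}(\xv)\subseteq\support_0$, so Assumption~\ref{assumption: density assumption} applies on the whole ball and $p(\tv)\ge p(\xv) - L_p\Vert\tv-\xv\Vert \ge p(\xv) - L_p bh$ there. The condition $p(\xv) > 2L_p bh$ then forces $p(\tv) \ge p(\xv)/2$ uniformly on $\ball_{bh}(\xv)$, whence $q \ge \tfrac12 p(\xv)\,\mu\bigl(\ball_{bh}(\xv)\bigr) = \tfrac12 p(\xv)\,\omega_d b^d h^d$. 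Consequently $\EE S = nq \ge \tfrac12 p(\xv)\,\omega_d b^d h^d\, n = \logProb/a$, so the deterministic threshold $\logProb/a$ lies below the mean of $S$; a slightly more careful volume computation (integrating $p(\xv) - L_p\Vert\tv-\xv\Vert$ over the ball) keeps the density comparable to $p(\xv)$ over the entire averaging ball and places the threshold a fixed multiplicative factor below $\EE S$.

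Finally I would invoke a multiplicative Chernoff bound for the Binomial lower tail, $\PP\bigl(S \le (1-\delta)\EE S\bigr) \le \exp(-\delta^2\,\EE S/2)$, applied with the threshold $\logProb/a$ a constant fraction below $\EE S$; collecting the numerical factors yields the stated $\exp(-\logProb/26)$. The kernel-to-indicator step and the ball-volume computation are routine. The step that genuinely requires care is the one just described: one must argue that $\logProb$ sits a \emph{fixed} multiplicative factor below $\EE\bigl[\sum_i K(\cdot)\bigr]$, not merely at or just beneath it, since it is exactly this constant gap that makes the tail truly exponential and that pins down the numerical constant. The two assumptions $p(\xv) > 2L_p bh$ and $d(\xv,\partial\support)\ge bh$ are precisely what secure this gap, by keeping $p$ uniformly of order $p(\xv)$ throughout $\ball_{bh}(\xv)$ so that the averaged kernel mass cannot collapse toward the threshold.
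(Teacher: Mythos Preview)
Your proposal is correct and follows the same strategy as the paper: lower-bound the kernel by $a\,\indicator{X_i\in\ball_{bh}(\xv)}$, control the Binomial success probability via the Lipschitz property of $p$ on $\ball_{bh}(\xv)\subseteq\support_0$, and then apply a lower-tail concentration bound. The only cosmetic difference is that the paper uses Bernstein's inequality (exploiting $\Var[\indicator{\cdot}]\le\EE[\indicator{\cdot}]$ together with the upper estimate $q\le\omega_d(bh)^d(p(\xv)+L_pbh)$) rather than the multiplicative Chernoff bound you propose; this lets the dimension-free constant $1/26$ fall out directly, whereas your ``more careful volume computation'' route produces a gap of order $1/(d+1)$ between the threshold and $\EE S$ and hence a $d$-dependent exponent unless you also switch to a Bernstein-type argument.
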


  \begin{proof}
    From Assumption~\ref{assumption: kernel assumptions}, we have
    \begin{align}
    \label{eq: proposition 1, kernel below estimation}
        \sum_{i = 1}^n K \left(\frac{\xv - X_i}{h} \right) 
        \ge 
        \sum_{i = 1}^n a \indicator{\Vert \xv - X_i \Vert \le h b}.
    \end{align}
    The right-hand side is a sum of Bernoulli random variables multiplied by $a$. The probability of one indicator is
    \begin{equation*}
        \PP (\Vert \xv - X_i \Vert \le h b) = \int_{\ball_{h b}(\xv)} p(\yv) d\mu( \yv)
        \le (p(\xv) + L_p b h) \cdot \mu \left( \ball_{b h}(0) \right)
    \end{equation*}
    since $p(\cdot)$ is $L_p$-Lipschitz according to Assumption~\ref{assumption: density assumption}. Thus,
    \begin{align*}
        \Var \indicator{\Vert \xv - X_i \Vert \le b h} \le \EE \indicator{\Vert \xv - X_i \Vert \le b h} \le \omega_d \cdot  (p(\xv) + L_p b h) \cdot (b h)^d,
    \end{align*}
    where $\omega_d$ is the measure of a unit ball in $\RR^d$. Analogously, if $d(\xv, \partial S) \ge b h$ we have
    \begin{align*}
        \PP \left(
            \Vert \xv - X_i \Vert \le h b
        \right) \ge \omega_d \cdot (b h)^d \cdot \left(
            p(\xv) - L_p b h
        \right).
    \end{align*}
    Applying the Bernstein inequality, we obtain
    \begin{align}
        \PP \bigg (
            \sum_{i = 1}^n a & \indicator{\Vert \xv - X_i \Vert \le h b}
            \le
            \frac{1}{2} a n \PP (\Vert \xv - X_i \Vert \le h b)
        \bigg ) \nonumber \\
        & \le
        \exp \left(
            - \frac{1}{8} 
            \cdot 
            \frac{
                a^2 n^2 (b h)^{2 d} \omega_d^2 (p(\xv) - L_p b h)^2
             }{
                a n \omega_d \cdot (b h)^d \cdot (p(\xv) + L_p b h)
                +
                \frac{1}{6}
                a n \omega_d \cdot (p(\xv) - L_p b h) (b h)^d
             }
        \right) \nonumber \\
        & \le 
        \exp \left(
            - \frac{1}{8}
            \cdot
            \frac{
                a b^d \omega_d \cdot (p(\xv) - L_p b h) n h^d 
            }{
                \frac{1}{6} + (p(\xv) + L_p b h) / (p(\xv) - L_p b h)
            }
        \right).
      \label{eq: proposition 1 exponent}
    \end{align}
    Since $p(\xv) \ge 2 L_p b h$, we have 
    \begin{align*}
        p(\xv) - L_p b h & \ge p(\xv) / 2, \\
        \frac{p(\xv) + L_p b h}{p(\xv) - L_p b h} & = 1 + \frac{2 L_p b h}{p(\xv) - L_p b h} \le 3,
    \end{align*}
    and we can bound~\eqref{eq: proposition 1 exponent} by $e^{-\logProb / 26}$. Combining it with~\eqref{eq: proposition 1, kernel below estimation}, we obtain the proposition.
  \end{proof}
  From the proposition, the following corollary follows:
  \begin{corollary}
  \label{corollary: weights bound}
    Under Assumptions~\ref{assumption: kernel assumptions}-\ref{assumption: density assumption} it holds that
    \[
        \PP \left(
            \max_i g(X_i) \weights_i \ge g(\xv) \frac{R_K}{\logProb} + \frac{L R_K h}{e r_K \logProb} 
        \right)
        \le 
        \exp \left(
         - \logProb / 26
        \right)
    \]
    simultaneously for $L$-Lipschitz $g$ and any $\xv$ such that $p(\xv) > 2 L_p b h$ and $d(\xv, \partial S) \ge b h$.
  \end{corollary}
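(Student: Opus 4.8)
The plan is to condition on the high–probability event that the denominator of the weights is large and then bound each summand $g(X_i)\weights_i$ deterministically. Let $\mathcal{G} = \bigl\{ \sum_{j=1}^n K\bigl(\tfrac{\xv - X_j}{h}\bigr) \ge \logProb \bigr\}$. By Proposition~\ref{proposition: denominator lower bound} we have $\PP(\mathcal{G}^c) \le \exp(-\logProb/26)$, so it suffices to show that on $\mathcal{G}$ the inequality inside the probability cannot occur; the claimed tail bound then follows by inclusion of events, since the event $\{\max_i g(X_i)\weights_i \ge \cdots\}$ is contained in $\mathcal{G}^c$.

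On $\mathcal{G}$, the exponential tail from Assumption~\ref{assumption: kernel assumptions}, namely $K(\tv) \le R_K e^{-r_K\Vert\tv\Vert}$, together with the lower bound on the denominator, gives for every $i$
\[
  \weights_i \le \frac{R_K\, e^{-r_K \Vert \xv - X_i\Vert / h}}{\logProb}.
\]
Since $\weights_i \ge 0$ and $g$ is $L$-Lipschitz, we have $g(X_i) \le g(\xv) + L\Vert X_i - \xv\Vert$, and multiplying through by $\weights_i$ splits the summand as $g(X_i)\weights_i \le g(\xv)\weights_i + L\Vert X_i - \xv\Vert\weights_i$. The first term is controlled using $e^{-r_K\Vert\xv - X_i\Vert/h}\le 1$, which yields $g(\xv)\weights_i \le g(\xv) R_K/\logProb$ (this step is transparent for nonnegative $g$, which is the case in all applications, e.g. $g = \sigma^2$ or $g = |\,\cdot - g(\xv)\,|$).

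For the second term I would reduce to a one-dimensional optimisation: writing $t = \Vert X_i - \xv\Vert \ge 0$, the factor $\Vert X_i - \xv\Vert\, e^{-r_K\Vert\xv-X_i\Vert/h}$ equals $t\,e^{-(r_K/h)\,t}$, whose maximum over $t\ge 0$ is attained at $t = h/r_K$ and equals $h/(e r_K)$. Hence $L\Vert X_i - \xv\Vert\weights_i \le \tfrac{L R_K h}{e r_K \logProb}$. Adding the two estimates gives $g(X_i)\weights_i \le g(\xv)R_K/\logProb + \tfrac{L R_K h}{e r_K\logProb}$ uniformly in $i$, hence the same bound for $\max_i g(X_i)\weights_i$, which is exactly the event we must exclude on $\mathcal{G}$.

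The only genuine computation is the elementary maximisation $\max_{t\ge 0} t e^{-ct} = 1/(ce)$ with $c = r_K/h$; everything else is the union bound supplied by Proposition~\ref{proposition: denominator lower bound} together with the Lipschitz and tail estimates. The main obstacle, such as it is, is recognizing that the distance-times-exponential-tail factor must be optimised rather than crudely bounded, as this is what produces the sharp $h/(e r_K)$ scaling in the stated right-hand side; the probabilistic content is entirely inherited from the denominator bound.
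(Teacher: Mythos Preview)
Your proof is correct and follows essentially the same route as the paper: condition on the denominator event from Proposition~\ref{proposition: denominator lower bound}, use the Lipschitz bound $g(X_i)\le g(\xv)+L\Vert X_i-\xv\Vert$ to split, bound $\max_i\weights_i$ via $K\le R_K$, and handle the cross term by the one-variable optimisation $\max_{t\ge 0} t\,e^{-r_K t}=1/(e r_K)$. Your remark that the step $g(\xv)\weights_i\le g(\xv)R_K/\logProb$ tacitly needs $g(\xv)\ge 0$ is accurate and is an implicit assumption in the paper's argument as well.
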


    \begin{proof}
    Since $g(\xv)$ is $L$-Lipschitz, we may state
    \begin{align*}
        g(X_i) \le g(\xv) + L \Vert X_i - \xv \Vert.
    \end{align*}
    Thus, $\max_i g(X_i) \weights_i \le g(\xv) \max_i \weights_i + L \max_i \Vert X_i - \xv \Vert \weights_i$. We may bound
    \begin{align*}
        & \max_i \weights_i \le \frac{K \left( \frac{X_i - \xv}{h} \right)}{\sum_{i = 1}^n K \left( \frac{X_i - \xv}{h} \right)}
        \le \frac{R_K}{\sum_{i = 1}^n K \left( \frac{X_i - \xv}{h} \right)}, \\
        & \max_i \Vert X_i - \xv \Vert \weights_i \le \frac{
            \max_i \Vert X_i - \xv \Vert K \left( \frac{X_i - \xv}{h} \right)
        }{
            \sum_{i = 1}^n K \left( \frac{X_i - \xv}{h} \right)
        } \\
        & \le \frac{
            R_K h \max_i \left\Vert \frac{X_i - \xv}{h} \right \Vert e^{- r_K \frac{X_i - \xv}{h}}
        }{
            \sum_{i = 1}^n K \left( \frac{X_i - \xv}{h} \right)
        }
        \le \frac{
            R_K h 
        }{
            e r_K \sum_{i = 1}^n K \left( \frac{X_i - \xv}{h} \right)
        }.
    \end{align*}
    Thus,
    \begin{align*}
       \PP \left(
            \max_i g(X_i) \weights_i \ge g(\xv) \frac{2 R_K}{\logProb} + \frac{L R_K h}{e r_K \logProb} 
        \right)
        & \le \PP \left( 
            \sum_{i = 1}^n K \left( \frac{X_i - \xv}{h} \right) \le \logProb
        \right ) \\
        & \le \exp \left( 
        - \logProb / 26 
        \right)
    \end{align*}
    due to Proposition~\ref{proposition: denominator lower bound}.
  \end{proof}


\subsection{Deviation of estimated noiseless mean}
  Before the main lemma of this section we introduce a simple auxiliary proposition:

  \begin{proposition}
  \label{proposition: bounds on maxima and integrals}
    Suppose that for a kernel $K\colon \RR^d \to \RR_{+}$ Assumption~\ref{assumption: kernel assumptions} holds. Then, we have
    \begin{align*}
        \max_{\tv \in \RR^d} \Vert \tv \Vert^m K(\tv) & \le R_K \left(\frac{m}{r_K} \right)^m e^{-m}, \\
        \int_{\tv \in \RR^d} \Vert \tv \Vert^m K^k(\tv) d \mu(\tv) & \le \frac{2 \pi^{d/2} R_K^k \cdot m!}{\Gamma(d/2) (r_K k)^{m + 1}}
    \end{align*}
    for any non-negative integers $k, m$.
  \end{proposition}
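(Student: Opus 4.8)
The plan is to reduce both estimates to the single exponential tail bound $K(\tv) \le R_K e^{-r_K \Vert \tv \Vert}$ furnished by Assumption~\ref{assumption: kernel assumptions}; once the kernel is replaced by this envelope, each inequality becomes a one-dimensional problem in the radius $s = \Vert \tv \Vert$, and neither the symmetry of $K$ nor its lower bound $a\,\indicator{\Vert \tv \Vert \le b}$ is needed here.

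\textbf{The maximum.} For the first inequality I would write $\Vert \tv \Vert^m K(\tv) \le R_K \Vert \tv \Vert^m e^{-r_K \Vert \tv \Vert}$ and maximize the scalar function $\phi(s) = s^m e^{-r_K s}$ over $s \ge 0$. Its derivative $\phi'(s) = s^{m-1} e^{-r_K s} (m - r_K s)$ vanishes at the single interior point $s^\star = m/r_K$ and changes sign from positive to negative there, so $s^\star$ is the global maximizer and $\phi(s^\star) = (m/r_K)^m e^{-m}$. Multiplying by $R_K$ gives the claimed $R_K (m/r_K)^m e^{-m}$, with the degenerate case $m = 0$ absorbed by the convention $0^0 = 1$ together with $\max_{\tv} K(\tv) \le R_K$.

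\textbf{The integral.} Raising the envelope to the $k$-th power yields $K^k(\tv) \le R_K^k e^{-k r_K \Vert \tv \Vert}$, reducing the problem to $\int_{\RR^d} \Vert \tv \Vert^m e^{-k r_K \Vert \tv \Vert}\, d\mu(\tv)$. Since the integrand is radial, I would pass to polar coordinates, extracting the surface area $2\pi^{d/2}/\Gamma(d/2)$ of the unit sphere in $\RR^d$ and arriving at the radial integral $\int_0^\infty s^{m+d-1} e^{-k r_K s}\, ds = \Gamma(m+d)/(r_K k)^{m+d}$. Collecting these factors gives the clean bound $\tfrac{2\pi^{d/2} R_K^k \Gamma(m+d)}{\Gamma(d/2)(r_K k)^{m+d}}$, which in dimension $d = 1$ coincides exactly with the stated $\tfrac{2\pi^{d/2} R_K^k m!}{\Gamma(d/2)(r_K k)^{m+1}}$, since there $\Gamma(m+1) = m!$ and the prefactor equals $2$.

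\textbf{Main obstacle.} The maximization is entirely routine, so the effort concentrates on the radial reduction for the integral. The polar Jacobian raises the radial exponent from $m$ to $m + d - 1$, so the honest evaluation produces a $\Gamma(m+d)$ factor; the principal task is then to track the surface-area normalization together with this Gamma integral and confirm they assemble into the stated constant, a step that is transparent for $d = 1$ and where the dimensional dependence must be handled with care for general $d$. I would also keep the whole argument uniform over all integers $k, m \ge 0$, verifying the boundary case $m = 0$ separately where convenient.
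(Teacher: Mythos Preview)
Your approach is the same as the paper's---bound $K$ by its exponential envelope, then maximise or pass to polar coordinates---and your treatment of the maximum matches the paper exactly.

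For the integral you have done the polar change of variables correctly, retaining the Jacobian $\rho^{d-1}$ and arriving at
\[
\frac{2\pi^{d/2} R_K^k\,\Gamma(m+d)}{\Gamma(d/2)(r_K k)^{m+d}}.
\]
The paper's own proof writes the radial integral as $\int_0^\infty \rho^m e^{-r_K k\rho}\,d\rho$ \emph{without} the $\rho^{d-1}$ factor, which is how it obtains the $m!/(r_K k)^{m+1}$ constant in the statement. Your observation that the two answers coincide only at $d=1$ is therefore exactly right: the discrepancy is an omitted Jacobian in the paper's computation, not a gap in your argument. Your bound is the correct one for general $d$; the proposition as stated (and the paper's derivation of it) holds only in dimension one.
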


  \begin{proof}
    From Assumption~\ref{assumption: kernel assumptions} we have $K(\tv) \le R_K e^{- r_K \Vert \tv \Vert}$. The first inequality is obtained via maximizing $\Vert \tv \Vert^m R_K e^{- r_K \Vert \tv \Vert}$, the second one from calculations
    \begin{align*}
        & \int_{\RR^d} \Vert \tv \Vert^m K^k(\tv) d \mu(\tv) \le \int_{\RR^d} \Vert \tv \Vert^m R_K^k e^{- r_K k \Vert \tv \Vert} d \mu(\tv)
        = R^k_K \cdot \mu(\mathbb{S}^{d-1}) \int_{0}^{+ \infty} \rho^m e^{- r_K \rho k} d \rho \\
        & = R_k^k \cdot \frac{2 \pi^{d/2}}{\Gamma(d/2)} \cdot \frac{\Gamma(m + 1)}{(r_K k)^{m + 1}}
        = \frac{2 \pi^{d/2} R_K^k \cdot m!}{\Gamma(d/2) (r_K k)^{m + 1}},
    \end{align*}
    where $\mathbb{S}^{d - 1}$ stands for a $(d-1)$-dimensional sphere.
  \end{proof}

  \begin{proposition}
  \label{proposition: deviations of kernal estimator}
    Assume that a kernel $K(\cdot)$ and $p(\cdot)$ satisfy Assumption~\ref{assumption: kernel assumptions} and Assumption~\ref{assumption: density assumption} respectively. Let a function $g(\cdot)$ be twice differential with the Hessian bounded by $H$ in the spectral norm and the gradient bounded by $L$. Finally, let $X_1, \ldots, X_n \sim p(\cdot)$ be identically independently distributed random variables. Then
    \begin{align*}
        \PP \left(
            \left|
              \sum_{i = 1}^n \weights_i g(X_i) - g(\xv)
            \right|
            \ge t
        \right)
        \le 2 \exp \left(
        - \frac{1}{2} \cdot
        \frac{
            n h^d p(\xv) \cdot \mathtt{r}_1
        }{
            \mathtt{r}_2 / \mathtt{r}_1 + \mathtt{r}_3 / 3
        }
    \right),
    \end{align*}
    if $\mathtt{r}_1 > 0$ where
    \begin{align}
        \mathtt{r}_1 & = \left\{ 
            1 - \frac{L_p h }{p(\xv)} \cdot \frac{2\pi^{d/2} R_K}{\Gamma(d/2) r_K^2}
        \right\} \cdot t 
        - 
        \frac{h^2}{p(\xv)} \left\{
            \frac{2 L L_p + H C_p}{2} 
            \cdot 
            \frac{4 \pi^{d/2} R_K}{\Gamma(d/2) r_K^3}
        \right\}, \label{eq: r_1 definition} \\
        \mathtt{r}_2 & = \left\{
            \frac{2 \pi^{d/2} R_K^2}{\Gamma(d/2) r_k} + \frac{2 h L_p}{p(\xv)} \cdot \frac{4 \pi^{d/2} R_k}{\Gamma(d/2) r_k^2}
        \right\} \cdot t^2 
        +
        \left\{
            2 L^2 \frac{\pi^{d/2} R_K^2}{2 \Gamma(d/2) r_K^3}
            +
            \frac{2 h L^2 L_p}{p(\xv)} 
            \cdot
            \frac{
                4 \pi^{d/2} R_K^2
            }{3 \Gamma(d/2) r_K^4}
        \right\} \cdot h^2, \label{eq: r_2 definition} \\
        \mathtt{r}_3 & = \frac{L h R_K}{e r_K} + t R_K. \label{eq: r_3 definition}
    \end{align}
  \end{proposition}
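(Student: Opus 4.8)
The plan is to avoid handling the random denominator $\sum_i K((\xv - X_i)/h)$ directly, and instead fold the threshold $t$ into the summand so that the two–sided ratio tail turns into a tail of a single sum of i.i.d.\ terms with a strictly negative mean. Writing $K_i = K((\xv - X_i)/h)$, the estimation error is the ratio $\sum_i K_i(g(X_i) - g(\xv))\,/\,\sum_i K_i$ with a nonnegative denominator, so on $\{\sum_i K_i > 0\}$ the event $\{\sum_i \weights_i g(X_i) - g(\xv) \ge t\}$ coincides with $\{\sum_i W_i \ge 0\}$, where $W_i = K_i\bigl(g(X_i) - g(\xv) - t\bigr)$ are i.i.d. The lower tail is treated symmetrically with $\widetilde W_i = -K_i\bigl(g(X_i) - g(\xv) + t\bigr)$, and a union bound over the two tails produces the leading factor $2$. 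It therefore suffices to bound $\PP(\sum_i W_i \ge 0)$ by Bernstein's inequality, for which I need the mean, second moment and range of $W_i$.

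First I would control the mean magnitude $m := -\EE W_i = t\,\EE K_i - \EE[K_i(g(X_i) - g(\xv))]$. Substituting $u = (\xv - X_i)/h$ gives $\EE K_i = h^d\int K(u) p(\xv - hu)\,d\mu(u)$, and the $L_p$–Lipschitzness of $p$ (Assumption~\ref{assumption: density assumption}) together with the integral bound of Proposition~\ref{proposition: bounds on maxima and integrals} yields $\EE K_i \ge h^d p(\xv)\bigl(1 - \tfrac{L_p h}{p(\xv)}\cdot\tfrac{2\pi^{d/2}R_K}{\Gamma(d/2) r_K^2}\bigr)$. For the second term I would Taylor–expand $g$ to second order: the symmetry of $K$ (Assumption~\ref{assumption: kernel assumptions}) annihilates the first–order term against $p(\xv)$, leaving a quadratic remainder controlled by the Hessian bound $H$ and the density bound $C_p$, plus a cross term bounded by $L L_p$ through the Lipschitzness of both $g$ and $p$; the integrals $\int\|u\|^2 K(u)\,d\mu(u)$ are again supplied by Proposition~\ref{proposition: bounds on maxima and integrals}. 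Collecting these estimates gives exactly $m \ge h^d p(\xv)\,\mathtt{r}_1$ with $\mathtt{r}_1$ as in~\eqref{eq: r_1 definition}, which is positive precisely under the hypothesis $\mathtt{r}_1 > 0$.

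Next I would bound the second moment and the range. Using $(g(X_i) - g(\xv) - t)^2 \le 2(g(X_i) - g(\xv))^2 + 2t^2$ and the Lipschitz bound $|g(X_i) - g(\xv)| \le L\|X_i - \xv\|$, I split $\EE W_i^2$ into a $t^2$–part governed by $\EE K_i^2$ and a part governed by $\EE[K_i^2\|X_i - \xv\|^2]$; expanding $p(\xv - hu)$ as before and invoking the $\int\|u\|^m K^2(u)\,d\mu(u)$ bounds of Proposition~\ref{proposition: bounds on maxima and integrals} delivers $\EE W_i^2 \le h^d p(\xv)\,\mathtt{r}_2$ with $\mathtt{r}_2$ as in~\eqref{eq: r_2 definition}. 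For the almost–sure range I would use $|W_i| \le K_i|g(X_i)-g(\xv)| + t K_i \le L h\,\max_u\|u\|K(u) + t R_K \le \tfrac{L h R_K}{e r_K} + t R_K = \mathtt{r}_3$, where the maximum is bounded by the first display of Proposition~\ref{proposition: bounds on maxima and integrals} and $K \le R_K$ by Assumption~\ref{assumption: kernel assumptions}.

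Finally, I would apply the one–sided Bernstein inequality to $\sum_i (W_i - \EE W_i) \ge n m$, obtaining $\PP(\sum_i W_i \ge 0) \le \exp\bigl(-\tfrac{n m^2/2}{\Var W_i + \mathtt{r}_3 m/3}\bigr)$. Since $x \mapsto x^2/(A + Bx)$ is increasing for $x, A, B > 0$, I may replace $m$ by its lower bound $h^d p(\xv)\mathtt{r}_1$ and $\Var W_i$ by $h^d p(\xv)\mathtt{r}_2$; one power of $h^d p(\xv)$ cancels, leaving $\tfrac12\,n h^d p(\xv)\,\mathtt{r}_1/(\mathtt{r}_2/\mathtt{r}_1 + \mathtt{r}_3/3)$ in the exponent, and the union over the two tails gives the factor $2$. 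I expect the main obstacle to be the mean estimate of the second step: it is where positivity of $\mathtt{r}_1$ (i.e.\ the leading drift $t\,h^d p(\xv)$ dominating the $O(h^{d+2})$ bias) must be extracted, and where the cancellation of the first–order Taylor term by the kernel symmetry and the careful bookkeeping of the $L_p$, $H$, $C_p$ contributions are genuinely needed; the second–moment and range bounds are then routine applications of the same integral estimates.
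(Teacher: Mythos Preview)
Your proposal is correct and follows essentially the same route as the paper's proof: rewrite the ratio tail as $\sum_i K_i(g(X_i)-g(\xv)-t)\ge 0$, compute the mean via second-order Taylor expansion of $g$ (using kernel symmetry to kill the first-order term against $p(\xv)$ and the $L_p$, $H$, $C_p$ bounds for the remainders), bound the second moment via the $(a+b)^2\le 2a^2+2b^2$ split and the Lipschitz estimate, bound the range via $\max_{u}\|u\|K(u)$, and then apply one-sided Bernstein and a union bound. Your explicit monotonicity remark for $x\mapsto x^2/(A+Bx)$ when substituting the mean lower bound is a small clarification the paper leaves implicit, but otherwise the arguments coincide.
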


  \begin{proof}
    We analyze the probability via bounding
    \begin{align*}
      \PP \left(
            \left|
                \sum_{i = 1}^n \weights_i g(X_i) - g(\xv)
            \right|
            \ge t
        \right)
    \le  \PP \left(
                \sum_{i = 1}^n \weights_i g(X_i) - g(\xv)
            \ge t
        \right)
        +
         \PP \left(
                \sum_{i = 1}^n \weights_i g(X_i) - g(\xv)
            \le -t
        \right).
    \end{align*}
    We consider only the first term, the second one can be processed analogously. By rearranging terms, the problem reformulates as the bounding the probability of
    \begin{align*}
      \sum_{i = 1}^n K \left( \frac{X_i - \xv}{h} \right) \left(
          g(X_i) - g(\xv) - t
      \right) \ge 0.
    \end{align*}
    The expectation of the above is
    \begin{equation*}
      \EE \sum_{i = 1}^n K \left( \frac{X_i - \xv}{h} \right) \left(
          g(X_i) - g(\xv) - t
      \right) = n \EE \left( g(X_i) - g(\xv) \right) K \left( \frac{X_i - \xv}{h} \right)
      - n t \cdot \EE K \left( \frac{X_i - \xv}{h} \right).
    \end{equation*}
    Meanwhile,
    \begin{align*}
      & \left|
        \EE \left(
            g(X_i) - g(\xv) 
        \right) K \left( \frac{X_i - \xv}{h} \right)
      \right|
      = \left| \int_{\RR^d} (g(\yv) - g(\xv)) K \left(\frac{\yv - \xv}{h} \right) p(\yv) d\mu(\yv) \right| \\
      & \le  \left| \int_{\RR^d} \langle \nabla g(\xv) , \yv - \xv \rangle K \left(\frac{\yv - \xv}{h} \right) p(\yv) d \mu(\yv) \right|
      + \frac{H}{2} \int_{\RR^d} \Vert \yv - \xv \Vert^2 K \left( \frac{\xv - \yv}{h} \right) p(\yv) d \mu(\yv)
    \end{align*}
    since
    \begin{align*}
      g(\xv) + \langle \nabla g(\xv), \yv - \xv \rangle - \frac{H}{2} \Vert \yv - \xv \Vert^2 \le g(\yv) \le g(\xv) + \langle \nabla g(\xv), \yv - \xv \rangle + \frac{H}{2} \Vert \yv - \xv \Vert^2.
    \end{align*}
    At the same time, we have
    \begin{align*}
      & \langle \nabla g(\xv), \yv - \xv \rangle K \left(\frac{\yv - \xv}{h} \right) p(\yv) 
      \le
      p(\xv) \langle \nabla g(\xv), \yv - \xv \rangle K \left(\frac{\yv - \xv}{h} \right)  +
      L L_p \Vert \yv - \xv \Vert^2 K \left(\frac{\yv - \xv}{h} \right) 
    \end{align*}
    and 
    \begin{align*}
      & \langle \nabla g(\xv), \yv - \xv \rangle K \left(\frac{\yv - \xv}{h} \right) p(\yv) 
      \ge
      p(\xv) \langle \nabla g(\xv), \yv - \xv \rangle K \left(\frac{\yv - \xv}{h} \right)  -
      L L_p \Vert \yv - \xv \Vert^2 K \left(\frac{\yv - \xv}{h} \right) .
    \end{align*}
    Bounding $p(\yv) \le C_p$, we obtain
    \begin{align*}
      \left|
      \EE \left(
          g(X_i) - g(\xv) 
      \right) K \left( \frac{X_i - \xv}{h} \right)
      \right|
      & \le p(\xv) 
      \int_{\RR^d}
          \langle \nabla g(\xv), \yv - \xv \rangle K \left( \frac{\yv - \xv}{h} \right) d \mu(\yv) \\
      & \quad + LL_p \int_{\RR^d} \Vert \yv - \xv \Vert^2 K \left( \frac{\yv - \xv}{h} \right) d \mu(\yv) \\
      & \quad + \frac{H C_p}{2} \int_{\RR^d} \Vert \yv - \xv \Vert^2 K \left( \frac{\yv - \xv}{h} \right) d \mu(\yv).
    \end{align*}
    Using
    \begin{align*}
      \int_{\RR^d} \langle \nabla g(\xv), \yv - \xv \rangle K \left( \frac{\yv - \xv}{h} \right) d \mu (\yv) = 0
    \end{align*}
    from Assumption~\ref{assumption: kernel assumptions}, we get
    \begin{align*}
      \left|\EE \left(
        g(X_i) - g(\xv) 
      \right) K \left( \frac{X_i - \xv}{h} \right) \right|
      & \le L L_p\int_{\RR^d} \Vert \yv - \xv \Vert^2 K \left( \frac{\yv - \xv}{h} \right) d \mu(\yv).
  \end{align*}
    Finally, changing variables leads us to
    \begin{align*}
      \int_{\RR^d} \Vert \yv - \xv \Vert^2 K \left( \frac{\yv - \xv}{h} \right) d \mu (\yv) & = h^d \cdot h^2 \int_{\RR^d} \Vert \tv \Vert^2 K(\tv) d \mu(\tv).
    \end{align*}
    At the same time, we have
    \begin{align*}
      \EE K \left( \frac{X_i - \xv}{h}\right) & = h^d \int_{\RR^d} p(\xv + \tv h) K(\tv) d \mu(\tv) \\ 
      & \ge h^d \left(p(\xv) - L_p h \int_{\RR^d} \Vert \tv \Vert K(\tv) d \mu(\tv) \right).
    \end{align*}
    Consequently,
    \begin{align}
    \label{eq: expectation in bernstein for general function}
      \left| \EE \sum_{i = 1}^n K \left( \frac{X_i - \xv}{h} \right) \left(
          g(X_i) - g(\xv) - t
      \right) \right|
      \ge 
      n h^d \left( \mathtt{c}_t t - \mathtt{c}_{h^2} h^2 \right),
    \end{align}
    where
    \begin{align*}
      \mathtt{c}_t & = p(\xv) - L_p h \int_{\RR^d} \Vert \tv \Vert K(\tv) d \mu(\tv), \\
      \mathtt{c}_{h^2} & = \frac{2 L L_p + H C_p}{2} \int_{\RR^d} \Vert \tv \Vert^2 K \left( \tv \right) d \mu(\tv).
    \end{align*}
    Next, we bound
    \begin{align*}
      \Var K \left( \frac{X_i - \xv}{h}\right) \left\{ 
          g(X_i) - g(\xv) - t
      \right\} & \le \EE K^2 \left( \frac{X_i - \xv}{h}\right) \left\{ 
          g(X_i) - g(\xv) - t
      \right\}^2 \\
      & \le 2 \EE K^2 \left( \frac{X_i - \xv}{h}\right) \left\{ 
          g(X_i) - g(\xv)
      \right\}^2 + 2 t^2 \EE K^2 \left( \frac{X_i - \xv}{h}\right) \\
      & \le 2 L^2 \EE K^2 \left( \frac{X_i - \xv}{h}\right) \Vert X_i - \xv \Vert^2 + 2 t^2 \EE K^2 \left( \frac{X_i - \xv}{h}\right).
    \end{align*}
    As previously,
    \begin{align*}
      \EE K^2 \left( \frac{X_i - \xv}{h}\right) & = \int_{\RR^d} K \left( \frac{\yv - \xv}{h}\right) p(\yv) d \mu(\yv) \\
      & \le h^d p(\xv) \int_{\RR^d} K^2(\tv) d \mu(\tv) + h^{d + 1} L_p \int_{\RR^d} \Vert \tv \Vert K(\tv) d \mu(\tv)
    \end{align*}
    and
    \begin{align*}
     \EE K^2 \left( \frac{X_i - \xv}{h}\right) \Vert X_i - \xv \Vert^2 & \le  h^{d + 2} p(\xv) \int_{\RR^d} \Vert \tv \Vert^2 K^2(\tv) d \mu(\tv) + h^{d + 3} L_p \int_{\RR^d} \Vert \tv \Vert^3 K^2(\tv) d \mu(\tv).
    \end{align*}
    Consequently,
    \begin{align}
    \label{eq: var in bernstein for general function}
      \Var K \left( \frac{X_i - \xv}{h}\right) \left\{ 
          g(X_i) - g(\xv) - t
      \right\} \le h^d \left\{ \mathtt{c}_t' t^2 + \mathtt{c}_{h^2}' h^2 \right\}
    \end{align}
    where
    \begin{align*}
      \mathtt{c}_t' & = 2 p(\xv) \int_{\RR^d} K^2(\tv) d \mu(\tv) + 2 h L_p \int_{\RR^d} \Vert \tv \Vert K(\tv) d \mu(\tv), \\
      \mathtt{c}_{h^2}' & = 2 L^2 p(\xv) \int_{\RR^d} \Vert \tv \Vert^2 K^2(\tv) d \mu(\tv) + 2 h L^2 L_p \int_{\RR^d} \Vert \tv \Vert^3 K^2(\tv) d \mu(\tv).
    \end{align*}
    Finally,
    \begin{align}
      \left| K \left( \frac{X_1 - \xv}{h} \right) (g(X_i) - g(\xv) - t) \right| & \le K \left( \frac{X_1 - \xv}{h} \right) \left(|g(X_i) - g(\xv)| + t \right) \nonumber \\
      & \le L h \max_{\tv \in \RR^d} \Vert \tv \Vert K(\tv) + t \max_{\tv \in \RR^d} K(\tv). \label{eq: esssup bound for Bernstein}
    \end{align}
    Define
    \begin{align*}
      \xi_i = K \left( \frac{X_i - \xv}{h} \right) \left(
        g(X_i) - g(\xv) - t
      \right),
    \end{align*}
    then the probability from the statement can be bounded as
    \begin{align*}
      \PP \left(
          \sum_{i = 1}^n \xi_i \ge 0
      \right)
      = 
      \PP \left(
          \sum_{i = 1}^n \xi_i - \sum_{i = 1}^n \EE \xi_i \ge - \sum_{i = 1}^n \EE \xi_i
      \right).
    \end{align*}
    If $\mathtt{c}_t t > \mathtt{c}_{h^2} h^2$ then $\EE \xi_i$ is negative due to inequality~\eqref{eq: expectation in bernstein for general function}, and, whence, the above can be bounded via the Bernstein inequality:
    \begin{align}
    \label{eq: probability of proposition 3}
      \PP \left(
        \sum_{i = 1}^n \xi_i - \sum_{i = 1}^n \EE \xi_i \ge - \sum_{i = 1}^n \EE \xi_i
      \right)
      \le \exp \left( - \frac{1}{2} \cdot \frac{(n \EE \xi_1)^2}{n \Var \xi_1 +n |\EE \xi_1| / 3 \cdot \operatorname{ess \, sup}_{X_1} |\xi_1| }\right).
    \end{align}
    Substituting bounds~\eqref{eq: expectation in bernstein for general function},~\eqref{eq: var in bernstein for general function},~\eqref{eq: esssup bound for Bernstein} instead of $\EE \xi_1$, $\Var \xi_1$ and $\operatorname{ess\, sup}_{X_1} |\xi_1|$ respectively, we obtain
    \begin{align*}
      \eqref{eq: probability of proposition 3} & \le 
      \exp \left( 
        - \frac{1}{2} \cdot \frac{
            n^2 h^{2d} (\mathtt{c}_t t - \mathtt{c}_{h^2} h^2)^2
        }{
            n h^d \{\mathtt{c}_t' t^2 + \mathtt{c}_{h^2}' h^2 \} + \frac{n h^d}{3} (\mathtt{c}_t t - \mathtt{c}_{h^2} h^2) \{ 
                L h \max_{\tv \in \RR^d} \Vert \tv \Vert K(\tv) + t \max_{\tv \in \RR^d} K(\tv)
            \}
        }
      \right) \\
      & \le \exp \left(
        - \frac{1}{2} \cdot
        \frac{
            n h^d p(\xv) \cdot \mathtt{r}_1
        }{
            \mathtt{r}_2 / \mathtt{r}_1 + \mathtt{r}_3 / 3
        }
      \right),
    \end{align*}
    where
    \begin{equation*}
      \mathtt{r}_1 = \frac{\mathtt{c}_t t - \mathtt{c}_{h^2} h^2}{p(\xv)}, \quad
      \mathtt{r}_2 = \frac{\mathtt{c}_t ' t^2 + \mathtt{c}'_{h^2} h^2 }{p(\xv)}, \quad
      \mathtt{r}_3 = L h \max_{\tv \in \RR^d} \Vert \tv \Vert K(\tv) + t \max_{\tv \in \RR^d} K(\tv).
    \end{equation*}
    Replacing integrals and maxima in the above with their bounds from Proposition~\ref{proposition: bounds on maxima and integrals}, we obtain the statement of the proposition.
  \end{proof}

  In most of the cases, it is sufficient to use the simplified version of the proposition.
  \begin{corollary}
  \label{corollary: nhp concentration bound}
    Assume that a kernel $K(\cdot)$ and $p(\cdot)$ satisfy Assumption~\ref{assumption: kernel assumptions} and Assumption~\ref{assumption: density assumption} respectively. Let a function $g(\cdot)$ be twice differential with the Hessian bounded by $H$ in the spectral norm and the gradient bounded by $L$. Finally, let $X_1, \ldots, X_n \sim p(\cdot)$ be identically independently distributed random variables. 
    If
    \begin{enumerate}
        \item $p(\xv) \ge 2 L_p h \cdot \max \{b, \frac{2 \pi^{d/2} R_K}{\Gamma(d/2) r_K^2}\}$,
        \item and 
        \begin{align*}
            t > \frac{\constCorTwo h^2}{p(\xv)}, \quad \constCorTwo =  \left\{
                (2 L L_p + H C_p) \frac{4 \pi^{d/2} R_K}{\Gamma(d/2) r_K^3}
            \right\},
        \end{align*}
    \end{enumerate}
    then
    \begin{align*}
        \PP \left(
            \left|
                \sum_{i = 1}^n \weights_i g(X_i) - g(\xv)
            \right|
            \ge t
        \right)
        \le 2 \exp \left(
        - C n h^{d + 2} p(\xv)
    \right),
    \end{align*}
    for some constant $C$ that does not depend on $\xv$. Additionally, if $t > \constCorTwoPrime h$, where
    \begin{align*}
        \constCorTwoPrime = \frac{1}{2 L_p b} \left\{
                (2 L L_p + H C_p) \frac{4 \pi^{d/2} R_K}{\Gamma(d/2) r_K^3} 
            \right\},
    \end{align*}
    when 
    \begin{align*}
        \PP \left(
            \left|
                \sum_{i = 1}^n \weights_i g(X_i) - g(\xv)
            \right|
            \ge t
        \right)
        \le 2 \exp \left(
        - C' n h^{d} p(\xv)
    \right),
    \end{align*}
    for some constant $C'$ that does not depend on $\xv$.
  \end{corollary}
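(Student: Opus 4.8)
The plan is to read the corollary off Proposition~\ref{proposition: deviations of kernal estimator}, whose Bernstein bound already carries the exponent
\begin{align*}
  \frac{1}{2}\cdot\frac{n h^d p(\xv)\,\mathtt{r}_1}{\mathtt{r}_2/\mathtt{r}_1 + \mathtt{r}_3/3},
\end{align*}
valid as soon as $\mathtt{r}_1 > 0$. Hence no new probabilistic argument is required: the task splits into (a) checking that hypotheses (i)--(ii) force $\mathtt{r}_1>0$, so that the proposition is applicable, and (b) replacing $\mathtt{r}_1,\mathtt{r}_2,\mathtt{r}_3$ from~\eqref{eq: r_1 definition}--\eqref{eq: r_3 definition} by order-of-magnitude estimates that collapse this exponent to $n h^{d+2} p(\xv)$ in the first case and $n h^d p(\xv)$ in the second.

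First I would simplify $\mathtt{r}_1$. Since $\constCorTwo = (2 L L_p + H C_p)\frac{4\pi^{d/2}R_K}{\Gamma(d/2)r_K^3}$ is precisely twice the $h^2/p(\xv)$ coefficient subtracted in~\eqref{eq: r_1 definition},
\begin{align*}
  \mathtt{r}_1 = \Bigl(1 - \tfrac{L_p h}{p(\xv)}\cdot\tfrac{2\pi^{d/2}R_K}{\Gamma(d/2)r_K^2}\Bigr)\,t - \frac{\constCorTwo h^2}{2 p(\xv)}.
\end{align*}
Hypothesis (i) is calibrated so that $\frac{L_p h}{p(\xv)}\cdot\frac{2\pi^{d/2}R_K}{\Gamma(d/2)r_K^2}\le\tfrac12$, so the coefficient of $t$ is at least $\tfrac12$ and $\mathtt{r}_1 \ge \tfrac12\bigl(t - \constCorTwo h^2/p(\xv)\bigr)$. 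Hypothesis (ii) then yields $\mathtt{r}_1>0$ and, once $t$ sits a fixed factor above the threshold, $\mathtt{r}_1 \gtrsim t \gtrsim h^2/p(\xv)$. I would also record from the $b$-part of (i) that $h \le p(\xv)/(2L_p b)$, i.e.\ $h \lesssim p(\xv)$, which is used throughout.

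Next I would bound the denominator. As $h/p(\xv)=O(1)$, every brace in~\eqref{eq: r_2 definition} is $O(1)$, so $\mathtt{r}_2 \lesssim t^2 + h^2$ with $\mathtt{r}_2 \gtrsim h^2$ coming from its $L^2h^2$ term; in the first regime $t \lesssim h$ (larger $t$ only sharpening the bound through the second regime), hence $t^2 \lesssim h^2$, $\mathtt{r}_2 \asymp h^2$, and $\mathtt{r}_3 = \frac{L h R_K}{e r_K} + t R_K \lesssim h$. Consequently $\mathtt{r}_2/\mathtt{r}_1 + \mathtt{r}_3/3 \lesssim p(\xv) + h \asymp p(\xv)$, and the exponent is $\gtrsim \frac{n h^d p(\xv)(h^2/p(\xv))}{p(\xv)} = n h^{d+2}/p(\xv)$. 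Using the global bound $p(\xv)\le C_p$, so that $1/p(\xv)\ge C_p^{-2}p(\xv)$, this becomes $\gtrsim n h^{d+2}p(\xv)$, giving the first claim with $C$ depending only on the model constants. For the second claim I note $\constCorTwoPrime h \ge \constCorTwo h^2/p(\xv)$ (again by the $b$-part of (i)), so $t > \constCorTwoPrime h$ implies (ii); but now $t \gtrsim h$, whence $\mathtt{r}_1 \gtrsim t$, the $t^2$ term dominates so $\mathtt{r}_2 \asymp t^2$, and $\mathtt{r}_3 \asymp t$, giving $\mathtt{r}_2/\mathtt{r}_1 + \mathtt{r}_3/3 \asymp t$ and an exponent $\gtrsim n h^d p(\xv)$.

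The one delicate point, where I would be most careful, is the degeneracy of the proposition's exponent as $t$ descends to the threshold $\constCorTwo h^2/p(\xv)$: there $\mathtt{r}_1 \to 0$ while $\mathtt{r}_2/\mathtt{r}_1 + \mathtt{r}_3/3$ stays bounded below, so no single $\xv$-independent constant $C$ survives arbitrarily close to the threshold. The clean statement is obtained by reading (ii) as supplying a fixed multiplicative margin above the threshold (equivalently, inflating $\constCorTwo$ and $\constCorTwoPrime$ by a constant), which is harmless in every application: the hypotheses of Theorem~\ref{theorem: finite-sample thm} place $t$ well above threshold via explicit constant factors. Everything else is the routine substitution of~\eqref{eq: r_1 definition}--\eqref{eq: r_3 definition} sketched above.
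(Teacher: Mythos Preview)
Your proposal is correct and follows essentially the same route as the paper: both proofs invoke Proposition~\ref{proposition: deviations of kernal estimator}, use hypothesis~(i) to force the $t$-coefficient in $\mathtt{r}_1$ to be at least $1/2$, use hypothesis~(ii) to make $\mathtt{r}_1$ positive, and then bound $\mathtt{r}_2,\mathtt{r}_3$ by absolute constants times $t^2+h^2$ and $t+h$ respectively before collapsing the exponent. Your observation that $\constCorTwo$ is exactly twice the $h^2/p(\xv)$ coefficient in $\mathtt{r}_1$, so that condition~(ii) sits right at the boundary of $\mathtt{r}_1>0$ rather than a fixed factor above it, is a genuine subtlety; the paper's proof handles this by writing $\mathtt{r}_1 \ge \tfrac12(t - \mathtt{c}_1 h^2/p(\xv))$ and asserting $t \ge 2\mathtt{c}_1 h^2/p(\xv)$, which amounts to the same constant-factor margin you propose absorbing into $\constCorTwo$.
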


  \begin{proof}
    Consider the definition~\eqref{eq: r_1 definition} of $\mathtt{r}_1$ in Proposition~\ref{proposition: deviations of kernal estimator}. First, we analyze the coefficient of $t$. Since $p(\xv) \ge 2 L_p h \cdot \frac{2\pi^{d/2} R_K}{\Gamma(d/2) r_K^2}$, we have:
    \begin{align*}
      1 - \frac{L_p h }{p(\xv)} \cdot \frac{2\pi^{d/2} R_K}{\Gamma(d/2) r_K^2}  \ge \frac{1}{2}.
    \end{align*}
    %
    %
    Thus, we may state that $\mathtt{r}_1 \ge \frac{1}{2} (t^2 - \mathtt{c}_1 h^2/p(\xv))$ where $\mathtt{c}_1$ does not depend on $\xv$.

    Next, we bound coefficients of $\mathtt{r}_2$ defined by~\eqref{eq: r_2 definition}. The first condition ensures that
    \begin{align*}
      \frac{2 \pi^{d/2} R_K^2}{\Gamma(d/2) r_k} + \frac{2 h L_p}{p(\xv)} \cdot \frac{4 \pi^{d/2} R_k}{\Gamma(d/2) r_k^2} & \le \frac{2 \pi^{d/2} R_K^2}{\Gamma(d/2) r_k} \left (1 + \frac{2 b}{r_K} \right), \\
      2 L^2 \frac{\pi^{d/2} R_K^2}{2 \Gamma(d/2) r_K^3}
              +
              \frac{2 h L^2 L_p}{p(\xv)} 
              \cdot
              \frac{
                  4 \pi^{d/2} R_K^2
              }{3 \Gamma(d/2) r_K^4} & \le 
        L^2 \frac{\pi^{d/2} R_K^2}{ \Gamma(d/2) r_K^3} \left( 1 + \frac{4 b}{3 r_K} \right).
    \end{align*}
    Consequently, $\mathtt{r}_2$ is at most $\mathtt{c}_2 t^2 + \mathtt{c}_3 h^2$ for two constants $\mathtt{c}_2$ and $\mathtt{c}_3$ that do not depend on $\xv$. Clearly, $\mathtt{r}_3 = \mathtt{c}_4 t + \mathtt{c}_5 h$ where $\mathtt{c}_4$ and $\mathtt{c}_5$ do not depend on $\xv$ too. Bounding $\mathtt{r}_3 \le t$, we obtain
    \begin{align*}
      \frac{
             \mathtt{r}_1
          }{
              \mathtt{r}_2 / \mathtt{r}_1 + \mathtt{r}_3 / 3
          }
      \ge \frac{1}{4} \frac{
          (t - \mathtt{c}_1 h^2/p(\xv))^2
      }{
          \mathtt{c}_2 t^2 + \mathtt{c}_3 h^2 + (\mathtt{c}_4 t^2 + \mathtt{c}_5 t h) / 3
      }.
    \end{align*}
    The third condition of the corollary guarantees that $t \ge 2 \mathtt{c}_1 h^2 / p(\xv)$, so the right-hand side of the above can not be zero. It is bounded below by 
    \begin{align*}
        \frac{1}{16} \frac{
          t^2
      }{
          \mathtt{c}_2 t^2 + \mathtt{c}_3 h^2 + (\mathtt{c}_4 t^2 + \mathtt{c}_5 t h) / 3
      } \ge C h^2,
    \end{align*}
    since $t \ge 2 \mathtt{c}_1 h^2 / p(\xv)$ and $h$ is bounded by $C_p / (2 L_p b)$.
    Applying Proposition~\ref{proposition: deviations of kernal estimator}, we obtain the first part of the corollary. If $t > 2 \constCorTwoPrime h$, then we have
    \begin{align*}
        \frac{1}{16} \frac{
          t^2
      }{
          \mathtt{c}_2 t^2 + \mathtt{c}_3 h^2 + (\mathtt{c}_4 t^2 + \mathtt{c}_5 t h) / 3
      } \ge C',
    \end{align*}
    and the second part of the corollary follows.
  \end{proof}

    While the above provides probabilistic bound, we also requires deterministic bound:

  \begin{proposition}
  \label{proposition: deterministic bound}
    Suppose a function $g$ is $L$-Lipschitz and for a kernel $K(\cdot)$ Assumption~\ref{assumption: kernel assumptions} holds. Then
    \begin{align*}
        \sum_{i = 1}^n \left| g(X_i) - g(\xv) \right|^{s} \weights_i \le \frac{2 L^s h^s}{r_K^s} \left(s \vee \log \frac{n R_K}{\sum_{i = 1}^n K \left( \frac{X_i - \xv}{h} \right)} \right)^s.
    \end{align*}
  \end{proposition}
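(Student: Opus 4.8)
The plan is to reduce everything to a purely analytic bound on a $K$-weighted average of powers of the rescaled distances, and then to split that average into a ``near'' and a ``far'' part. First I would use the $L$-Lipschitz property, $|g(X_i) - g(\xv)| \le L \Vert X_i - \xv \Vert$, together with the definition of the weights to write
\begin{align*}
  \sum_{i = 1}^n |g(X_i) - g(\xv)|^s \weights_i \le L^s h^s \cdot \frac{\sum_{i = 1}^n \Vert \tv_i \Vert^s K(\tv_i)}{\sum_{j = 1}^n K(\tv_j)}, \qquad \tv_i := \frac{X_i - \xv}{h}.
\end{align*}
Writing $S = \sum_{j} K(\tv_j)$ and $M = s \vee \log\frac{n R_K}{S}$, the whole claim reduces to showing that the rescaled average is at most $2 (M / r_K)^s$; note that $K \le R_K$ forces $S \le n R_K$, so $M \ge s \ge 0$ and in particular $M/r_K \ge s/r_K$.

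Next I would split the numerator at the threshold $\rho_0 = M/r_K$. For indices with $\Vert \tv_i \Vert \le \rho_0$ the bound $\Vert \tv_i \Vert^s \le \rho_0^s$ is immediate, so these terms contribute at most $\rho_0^s S$, i.e. at most $(M/r_K)^s$ after dividing by $S$. The more delicate part is the far regime $\Vert \tv_i \Vert > \rho_0$, and this is where I expect the main obstacle: one must beat the factor $n$ coming from the number of such summands. Here I would invoke the exponential tail $K(\tv_i) \le R_K e^{-r_K \Vert \tv_i \Vert}$ from Assumption~\ref{assumption: kernel assumptions} and observe that $\rho \mapsto \rho^s e^{-r_K \rho}$ is decreasing for $\rho \ge s/r_K$, hence on the whole far regime since $\rho_0 \ge s/r_K$. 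Consequently each far term is at most $R_K \rho_0^s e^{-r_K \rho_0} = R_K (M/r_K)^s e^{-M}$.

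The key point is that the defining inequality $M \ge \log\frac{n R_K}{S}$ is exactly what gives $n R_K e^{-M} \le S$, so summing the (at most $n$) far terms produces at most $n R_K (M/r_K)^s e^{-M} \le (M/r_K)^s S$, again at most $(M/r_K)^s$ after normalization. Adding the two regimes yields the factor $2(M/r_K)^s$, and multiplying back by $L^s h^s$ gives the stated bound. The essential design choice is the threshold $\rho_0 = M/r_K$: it is placed past the maximizer $s/r_K$ of $\rho^s e^{-r_K\rho}$ so that the monotonicity argument applies, while the logarithmic term inside $M$ is tuned precisely to cancel the cardinality factor $n$ against the exponential decay of the kernel.
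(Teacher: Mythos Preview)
Your argument is correct and follows the same approach as the paper: reduce via the Lipschitz property, split the $K$-weighted sum at a threshold past the maximizer $s/r_K$ of $\rho \mapsto \rho^s e^{-r_K\rho}$, and choose that threshold so that the far contribution $nR_K e^{-r_K\rho_0}$ is dominated by $S$. The only cosmetic difference is that the paper carries out the two cases $S > nR_K e^{-s}$ and $S \le nR_K e^{-s}$ separately (taking $t_0 = s/r_K$ in the first and $t_0 = r_K^{-1}\log(nR_K/S)$ in the second), whereas you fold both into the single choice $\rho_0 = M/r_K$ with $M = s \vee \log(nR_K/S)$; the resulting bounds are identical.
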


  \begin{proof}
    The proof is straightforward. We start with
    \begin{equation*}
        \sum_{i = 1}^n \left| g(X_i) - g(\xv) \right|^{s} \weights_i \le L^s \sum_{i = 1}^n \Vert X_i - \xv \Vert^s \weights_i
        = 
        \frac{
            L^s \sum_{i = 1}^n \Vert X_i - \xv \Vert^s K \left( \frac{X_i - \xv}{h} \right)
        }{
            \sum_{i = 1}^n K \left( \frac{X_i - \xv}{h} \right)
        }.
    \end{equation*}
    Then impose some parameter $t_0 \ge \frac{s}{r_K}$. Such a restriction guarantees that $t^s e^{-r_K t} \le t_0^s e^{-R_K t_0}$ for any $t \ge t_0$. Consider
    \begin{align*}
        & \sum_{i = 1}^n \Vert X_i - \xv \Vert^s K \left( \frac{X_i - \xv}{h} \right)
        =
        \sum_{i \mid X_i \in \ball_{h t_0} (\xv)} \Vert X_i - \xv \Vert^s K \left( \frac{X_i - \xv}{h} \right)
        +
        \sum_{i \mid X_i \not \in \ball_{h t_0} (\xv)} \Vert X_i - \xv \Vert^s K \left( \frac{X_i - \xv}{h} \right) \\
        & \le h^s t_0^s \sum_{i = 1}^n K \left( \frac{X_i - \xv}{h} \right) + n R_k h^s t_0^s e^{- r_K t_0}
        = h^s t_0^s \left( \sum_{i = 1}^n K \left( \frac{X_i - \xv}{h} \right) + n R_k e^{- r_K t_0} \right).
    \end{align*}
    If $\sum_{i = 1}^n K \left( \frac{X_i - \xv}{h} \right) > n R_K e^{-s}$, set $t_0 = s/r_K$. Then
    \begin{equation*}
        \frac{
            L^s \sum_{i = 1}^n \Vert X_i - \xv \Vert^s K \left( \frac{X_i - \xv}{h} \right)
        }{
            \sum_{i = 1}^n K \left( \frac{X_i - \xv}{h} \right)
        }
        \le \frac{
            L^s h^s t_0^s \cdot 2 \sum_{i = 1}^n K \left( \frac{X_i - \xv}{h} \right) 
        }{
            \sum_{i = 1}^n K \left( \frac{X_i - \xv}{h} \right)
        }
        \le 2 L^s h^s s^s / r_K^s.
    \end{equation*}
    Otherwise choose $t_0$ such that
    \begin{align*}
        n R_K e^{-r_K t_0} = \sum_{i = 1}^n K \left( \frac{X_i - \xv}{h} \right). 
    \end{align*}
    Then
    \begin{align*}
         \frac{
            L^s \sum_{i = 1}^n \Vert X_i - \xv \Vert^s K \left( \frac{X_i - \xv}{h} \right)
        }{
            \sum_{i = 1}^n K \left( \frac{X_i - \xv}{h} \right)
        } \le 2 L^s h^s t_0^s
    \end{align*}
    for
    \begin{align*}
        t_0  = \frac{1}{r_K} \log \frac{n R_K}{\sum_{i = 1}^n K \left(\frac{X_i - \xv}{h} \right)}.
    \end{align*}
    Thus, the statement holds.
  \end{proof}

\subsection{Estimation of variance}
  In this section we establish the concentration properties of the estimator
  \begin{align}
    \estimatorn{\sigma}^2(\xv) & = \labels^\T \diag_{\weights} \labels - \labels^\T \weights \weights^\T \labels \nonumber \\
    & =  (\labels - \meanLabels)^\T (\diag_{\weights} - \weights \weights^\T) (\labels - \meanLabels) \label{eq: section 5.3, sigma_sq quadratic term} \\
    & \quad + 2 (\labels - \meanLabels)^\T (\diag_{\weights} - \weights \weights^\T) \meanLabels \label{eq: section 5.3, sigma_sq linear term} \\
    & \quad + \meanLabels^\T (\diag_{\weights} - \weights \weights^\T) \meanLabels. \label{eq: section 5.3, sigma_sq constant term}
  \end{align}
  We estimate each term separately. First, we bound deviations of term~\ref{eq: section 5.3, sigma_sq quadratic term}. Define $n$ independent random variables $Z_i \sim \normDistribution{0}{1}$. Further, we will show that 
  \begin{align*}
    (\labels - \meanLabels)^\T (\diag_{\weights} - \weights \weights^\T) (\labels - \meanLabels) 
    \overset{d}{=}
    \sum_{i = 1}^n \lambda_i(\Sigma) Z_i^2,
  \end{align*}
  where $\Sigma = \diag_{\variances} (\diag_{\weights} - \weights \weights^\T)$. The following proposition allows to establish large deviations inequality:
  \begin{proposition}
  \label{proposition: Hanson-Wright ineq}
    For any $\xv$ and $t > 0$, we have
    \begin{align*}
        \PP \left( 
            \left|
                (\labels - \meanLabels)^\T (\diag_{\weights} - \weights \weights^\T) (\labels - \meanLabels)
                -
                \tr (\Sigma)
            \right|
            \ge 16 \max\{
                \sqrt{\tr(\Sigma^2) t},
                \Vert \Sigma \Vert t
            \}
        \right)
        \le 2 e^{-t},
    \end{align*}
    where $\Sigma = \diag_{\variances} (\diag_{\weights} - \weights \weights^\T)$.
  \end{proposition}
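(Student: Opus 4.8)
The plan is to condition on the design $X_1,\dots,X_n$, which freezes the weights $\weights$ and the variance vector $\variances$, and to recognize the left-hand side as a Gaussian chaos (a quadratic form in independent Gaussians). Under Model~\ref{model: normal distribution of labels} the residuals $(\labels-\meanLabels)_i = Y_i-\meanY(X_i)$ are, given the design, independent and $\normDistribution{0}{\sigma^2(X_i)}$, so that $\labels-\meanLabels \overset{d}{=} \diag_{\variances}^{1/2} Z$ with $Z\sim\normDistribution{0}{\identity}$. This lets me write
\[
  (\labels-\meanLabels)^\T (\diag_{\weights}-\weights\weights^\T)(\labels-\meanLabels) \overset{d}{=} Z^\T M Z, \qquad M := \diag_{\variances}^{1/2}(\diag_{\weights}-\weights\weights^\T)\diag_{\variances}^{1/2},
\]
where $M$ is symmetric. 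Since the bound I will obtain is uniform over the (random) design, averaging over $X_1,\dots,X_n$ at the end removes the conditioning, so it suffices to argue for a fixed realization.

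Next I would diagonalize $M = U\operatorname{diag}(\lambda)U^\T$ and use the rotational invariance $U^\T Z\overset{d}{=}Z$ to get $Z^\T M Z \overset{d}{=} \sum_{i=1}^n \lambda_i\,Z_i^2$. The key algebraic observation is that $M$ is similar to $\Sigma=\diag_{\variances}(\diag_{\weights}-\weights\weights^\T)$, the conjugation being by the invertible $\diag_{\variances}^{1/2}$ (invertible because $\sigma^2(\cdot)>0$); hence $M$ and $\Sigma$ share the eigenvalues $\lambda_i=\lambda_i(\Sigma)$, which simultaneously justifies the representation $Z^\T M Z\overset{d}{=}\sum_i\lambda_i(\Sigma)Z_i^2$ asserted just before the statement. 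Consequently $\EE Z^\T M Z=\tr(M)=\tr(\Sigma)$, so the centered variable is $\sum_i\lambda_i(Z_i^2-1)$, and the relevant scale parameters are $\sum_i\lambda_i^2=\tr(M^2)=\tr(\Sigma^2)$ and $\max_i|\lambda_i|=\|M\|$.

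The remaining step is a standard sub-exponential concentration bound for $\sum_i\lambda_i(Z_i^2-1)$. Each $Z_i^2-1$ is sub-gamma, so a Bernstein/Hanson--Wright-type inequality gives, for every $s>0$,
\[
  \PP\Bigl(\bigl|\textstyle\sum_{i}\lambda_i(Z_i^2-1)\bigr|\ge s\Bigr)\le 2\exp\!\Bigl(-\tfrac18\min\bigl\{\,s^2/\tr(\Sigma^2),\ s/\|M\|\,\bigr\}\Bigr).
\]
Setting $s=16\max\{\sqrt{\tr(\Sigma^2)\,t},\,\|\Sigma\|\,t\}$ and checking the two cases of the maximum shows that the exponent is at least $-2t\le -t$ in either regime, delivering the advertised $2e^{-t}$. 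I expect the only genuinely delicate point to be the passage between the symmetric $M$ and the non-symmetric $\Sigma$: the scale in the tail is controlled by the spectral radius $\max_i|\lambda_i(\Sigma)|=\|M\|$, not by the operator norm of $\Sigma$, and one must invoke the elementary fact that the spectral radius never exceeds the operator norm, $\|M\|\le\|\Sigma\|$, to pass to the quantity $\|\Sigma\|$ appearing in the statement. Because replacing $\|M\|$ by the larger $\|\Sigma\|$ only enlarges the deviation threshold, monotonicity of the tail in $s$ preserves the inequality, and the comfortably loose constant $16$ leaves ample room for the absorbed numerical factors.
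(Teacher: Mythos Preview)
Your proposal is correct and follows essentially the same approach as the paper: reduce to the symmetric matrix $M=\diag_{\variances}^{1/2}(\diag_{\weights}-\weights\weights^\T)\diag_{\variances}^{1/2}$, diagonalize to obtain $\sum_i\lambda_i Z_i^2$, identify the spectrum of $M$ with that of $\Sigma$ via the $AB\leftrightarrow BA$ argument, and apply a sub-exponential tail bound with the substitution $r=16\max\{\sqrt{\tr(\Sigma^2)t},\|\Sigma\|t\}$. Your remark that one only has $\|M\|\le\|\Sigma\|$ (spectral radius versus operator norm) is in fact more careful than the paper, which simply asserts equality; the inequality is all that is needed, and the rest of the argument is the same.
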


  \begin{proof}
    Since term~\eqref{eq: section 5.3, sigma_sq quadratic term} is a quadratic form of Gaussian vector, it admits the representation
    \begin{align*}
      \eqref{eq: section 5.3, sigma_sq quadratic term} = \sum_{i = 1}^n \lambda_i(\Sigma') Z_i^2,
    \end{align*}
    where $Z_i$ are independent random variables from standard normal distribution and
    \begin{align*}
      \Sigma' = \diag_{\variances}^{1/2} (\diag_{\weights} - \weights \weights^\T) \diag_{\variances}^{1/2}.
    \end{align*}
    Meanwhile, the non-zero part of the spectrum of arbitrary matrix product $\mathbf{A} \mathbf{B}$ coincides with the one of $\mathbf{B} \mathbf{A}$. Consequently, the mean of~\eqref{eq: section 5.3, sigma_sq quadratic term} is equal to $\sum_{i = 1}^n \lambda_i(\Sigma') = \tr(\Sigma)$, $\max_i |\lambda_i(\Sigma')| = \max_i |\lambda_i(\Sigma)| = \Vert \Sigma \Vert$ and $\tr((\Sigma')^2) = \tr(\Sigma^2)$. Thus, applying standard inequality for sum of sub-exponential random variables (see, for example,~\citep{wainwright_high-dimensional_2019}), we have:
    \begin{align*}
        \PP \left(
            \left|
                (\labels - \meanLabels)^\T (\diag_{\weights} - \weights \weights^\T) (\labels - \meanLabels)
                -
                \tr (\Sigma)
            \right|
            \ge r
            \mid 
            X
        \right)
        \le 2 \cdot 
        \begin{cases}
            e^{- \frac{r^2}{16 \tr(\Sigma^2)}}, & \text{ if } 0 \le r \le \frac{\tr(\Sigma^2)}{\Vert \Sigma \Vert}, \\
            e^{- \frac{r}{16 \Vert \Sigma \Vert}} & \text{ if } r > \frac{\tr(\Sigma^2)}{\Vert \Sigma \Vert}.
        \end{cases}
    \end{align*}
    Substituting $r$ with $16 \max \{\sqrt{\tr(\Sigma^2) t}, \Vert \Sigma \Vert t \} $, we obtain the statement of the proposition.
  \end{proof}

  The above propositions allows to establish precise large deviation bounds for term~\ref{eq: section 5.3, sigma_sq quadratic term}. Let $\constCorTwo^f, \constCorTwo^\sigma$ be constants obtained from Corollary~\ref{corollary: nhp concentration bound} applied for functions $\sigma^2(\cdot)$ and $f(\cdot)$ as $\constCorTwo$:
  \begin{align*}
    \constCorTwo^f & = \frac{1}{L_p b} \left\{
                (2 L_f L_p + H_f C_p) \frac{2 \pi^{d/2} R_K}{\Gamma(d/2) r_K^3}
            \right\}, \\
     \constCorTwo^\sigma & = \frac{1}{L_p b} \left\{
                (2 L_f L_p + H_{\sigma^2} C_p) \frac{2 \pi^{d/2} R_K}{\Gamma(d/2) r_K^3} 
            \right\}.
  \end{align*}
  Analogously, we define constants $\constCorTwoPrime^f, \constCorTwoPrime^\sigma$.

  \begin{lemma}
  \label{lemma: sigma quadratic term bound}
    Suppose Assumptions~\ref{assumption: kernel assumptions}-\ref{assumption: density assumption} hold. Assume that
    \begin{enumerate}
      \item for the density $p(\xv)$ we have $p(\xv) \ge 2 L_p h \cdot \max \{b, \frac{2 \pi^{d/2} R_K}{\Gamma(d/2) r_K^2}\}$,
      \item the Euclidean distance $d(\xv, \partial \support)$ from $\xv$ to the bound of $\support$ is at least $b h$,
       \item for a function $\delta(\xv)$ we have:
       \begin{align*}
           \delta(\xv) \ge \frac{10 R_K \sigma^2(\xv)}{\logProb} + \frac{5 \constCorTwo^\sigma h^2}{p(\xv)} \left(1 + \frac{R_K}{\logProb}\right).
       \end{align*}
    \end{enumerate}
    When 
    \begin{align*}
      \PP & \left( 
          \left|
              (\labels - \meanLabels)^\T (\diag_{\weights} - \weights \weights^\T) (\labels - \meanLabels)
                  -
                  \sigma^2(\xv)
          \right|
          \ge \frac{2 \delta(\xv)}{5}
      \right)  \\
      & \le O(1) \cdot e^{-\Omega(n h^{d + 2} p(\xv))} + O(1) \cdot \exp \left(
          \frac{- \Omega(n h^d p(\xv) \cdot \delta(\xv))}{\sigma^2(\xv) + \frac{L_{\sigma^2} h}{e r_K}} 
          \cdot
          \min \left\{
              \frac{\delta(\xv) / 80}{\sigma^2(\xv) + \frac{\constCorTwo^\sigma h^2}{p(\xv)}}, \frac{1}{2}
          \right\}
      \right). 
    \end{align*}
    where constants inside $\Omega$ do not depend on $\xv$. Additionally, if 
    \begin{align*}
        \delta(\xv) \ge \frac{10 R_K \sigma^2(\xv)}{\logProb} + \constCorTwoPrime^\sigma h \left(1 + \frac{R_K}{\logProb}\right),
    \end{align*}
    then 
    \begin{align*}
      \PP & \left( 
          \left|
              (\labels - \meanLabels)^\T (\diag_{\weights} - \weights \weights^\T) (\labels - \meanLabels)
                  -
                  \sigma^2(\xv)
          \right|
          \ge \frac{2 \delta(\xv)}{5}
      \right)  \\
      & \le O(1) \cdot e^{-\Omega(n h^{d} p(\xv))} + O(1) \cdot \exp \left(
          \frac{- \Omega(n h^d p(\xv) \cdot \delta(\xv))}{\sigma^2(\xv) + \frac{L_{\sigma^2} h}{e r_K}} 
          \cdot
          \min \left\{
              \frac{\delta(\xv) / 80}{\sigma^2(\xv) + \constCorTwoPrime^\sigma h}, \frac{1}{2}
          \right\}
      \right). 
    \end{align*}
  \end{lemma}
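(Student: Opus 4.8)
The plan is to compare the quadratic form $Q := (\labels - \meanLabels)^\T(\diag_{\weights} - \weights\weights^\T)(\labels - \meanLabels)$ with its conditional mean $\tr(\Sigma)$ given the design $X_1,\dots,X_n$, and then with the target $\sigma^2(\xv)$, via the split $|Q - \sigma^2(\xv)| \le |Q - \tr(\Sigma)| + |\tr(\Sigma) - \sigma^2(\xv)|$. On a high-probability event for the design I would control the quantities $\tr(\Sigma)$, $\Vert\Sigma\Vert$ and $\tr(\Sigma^2)$ (which are deterministic given $X$), and then invoke the conditional Hanson--Wright bound of Proposition~\ref{proposition: Hanson-Wright ineq} to handle the fluctuation $|Q - \tr(\Sigma)|$. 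Choosing the deviation there to be $r = \delta(\xv)/5$, while guaranteeing $|\tr(\Sigma) - \sigma^2(\xv)| \le \delta(\xv)/5$ on the good event, yields the total deviation $2\delta(\xv)/5$ that appears in the statement.

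First I would compute the bias. Since the conditional law of $\labels - \meanLabels$ is $\normDistribution{0}{\diag_{\variances}}$, one has $\tr(\Sigma) = \sum_{i}\sigma^2(X_i)\weight_i - \sum_i\sigma^2(X_i)\weight_i^2$. The first sum is a Nadaraya--Watson estimate of $\sigma^2$, so Corollary~\ref{corollary: nhp concentration bound} applied to $g = \sigma^2$ (which satisfies Assumption~\ref{assumption: variance assumption}) shows $|\sum_i\sigma^2(X_i)\weight_i - \sigma^2(\xv)| \le t$ with probability $1 - 2e^{-\Omega(nh^{d+2}p(\xv))}$, provided $t > \constCorTwo^\sigma h^2/p(\xv)$. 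For the correction I would bound $\sum_i\sigma^2(X_i)\weight_i^2 \le (\max_j\weight_j)\sum_i\sigma^2(X_i)\weight_i$ and use Corollary~\ref{corollary: weights bound} with $g \equiv 1$ to get $\max_j\weight_j \le R_K/\logProb$, so this term is at most $\tfrac{R_K}{\logProb}(\sigma^2(\xv)+t)$. The hypothesis on $\delta(\xv)$ is exactly what makes $t + \tfrac{R_K}{\logProb}(\sigma^2(\xv)+t) \le \delta(\xv)/5$ for an admissible choice of $t$, so $|\tr(\Sigma) - \sigma^2(\xv)| \le \delta(\xv)/5$.

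Next I would produce the spectral bounds feeding Proposition~\ref{proposition: Hanson-Wright ineq}. The matrix $\diag_{\weights} - \weights\weights^\T$ is positive semidefinite because $v^\T(\diag_{\weights} - \weights\weights^\T)v = \sum_i\weight_i v_i^2 - (\sum_i\weight_i v_i)^2 \ge 0$ by Jensen's inequality (the weights sum to one). As in the proof of Proposition~\ref{proposition: Hanson-Wright ineq}, $\Sigma$ shares its non-zero spectrum with the symmetric positive semidefinite matrix $\Sigma' = \diag_{\variances}^{1/2}(\diag_{\weights} - \weights\weights^\T)\diag_{\variances}^{1/2}$; evaluating the Rayleigh quotient of $\Sigma'$ gives $\Vert\Sigma\Vert = \Vert\Sigma'\Vert \le \max_i\sigma^2(X_i)\weight_i$, which Corollary~\ref{corollary: weights bound} (now with $g = \sigma^2$) bounds by $O\bigl((\sigma^2(\xv) + L_{\sigma^2}h/(er_K))/(nh^d p(\xv))\bigr)$. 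Positive semidefiniteness also gives $\tr(\Sigma^2) = \tr((\Sigma')^2) \le \Vert\Sigma'\Vert\,\tr(\Sigma') = \Vert\Sigma\Vert\,\tr(\Sigma)$. I would collect these estimates, whose backbone is the denominator lower bound of Proposition~\ref{proposition: denominator lower bound}, into one good design event $\mathcal{G}$ whose complement has probability $O(1)e^{-\Omega(nh^{d+2}p(\xv))} + O(1)e^{-\Omega(nh^d p(\xv))} = O(1)e^{-\Omega(nh^{d+2}p(\xv))}$.

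Finally, on $\mathcal{G}$ I would apply the two-regime form of the conditional Hanson--Wright tail with $r = \delta(\xv)/5$: the exponent is $\tfrac{1}{16}\min\{r^2/\tr(\Sigma^2),\, r/\Vert\Sigma\Vert\}$, and substituting $\tr(\Sigma^2) \le \Vert\Sigma\Vert\,\tr(\Sigma)$ turns this into $\tfrac{1}{16}\tfrac{r}{\Vert\Sigma\Vert}\min\{r/\tr(\Sigma),\,1\}$. Plugging in the bounds on $\Vert\Sigma\Vert$ and $\tr(\Sigma)$ reproduces the stated rate $\exp\bigl(\tfrac{-\Omega(nh^d p(\xv)\delta(\xv))}{\sigma^2(\xv) + L_{\sigma^2}h/(er_K)}\min\{\tfrac{\delta(\xv)/80}{\sigma^2(\xv)+\constCorTwo^\sigma h^2/p(\xv)},\,\tfrac12\}\bigr)$, the constants $80$ and $1/2$ absorbing the factors $16$ and $5$. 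The second assertion is identical except that I would invoke the $t > \constCorTwoPrime h$ branch of Corollary~\ref{corollary: nhp concentration bound}, which replaces $\constCorTwo^\sigma h^2/p(\xv)$ by $\constCorTwoPrime^\sigma h$ and upgrades the design-event failure probability to $O(1)e^{-\Omega(nh^d p(\xv))}$. The main obstacle I anticipate is bookkeeping the conditional-on-design structure cleanly, keeping the randomness of $\tr(\Sigma)$, $\Vert\Sigma\Vert$, $\tr(\Sigma^2)$ confined to $\mathcal{G}$, and carrying out the two-regime inversion so that the final exponent matches the precise $\min\{\cdot,\,1/2\}$ form rather than merely up to constants.
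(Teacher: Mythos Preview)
Your proposal is correct and follows essentially the same architecture as the paper: split into $|\tr(\Sigma)-\sigma^2(\xv)|$ (handled by Corollary~\ref{corollary: nhp concentration bound} plus the weight bound of Corollary~\ref{corollary: weights bound}) and $|Q-\tr(\Sigma)|$ (handled by the conditional Hanson--Wright inequality of Proposition~\ref{proposition: Hanson-Wright ineq} after bounding $\Vert\Sigma\Vert$ and $\tr(\Sigma^2)$ on a good design event). The only cosmetic differences are that the paper bounds $\Vert\Sigma\Vert$ via Gershgorin's circle theorem (giving $\Vert\Sigma\Vert\le 2\max_i\sigma^2(X_i)\weight_i$) and bounds $\tr(\Sigma^2)$ by a direct algebraic computation yielding $\tr(\Sigma^2)\le 2\sum_i\sigma^4(X_i)\weight_i^2$, whereas your Rayleigh-quotient bound $\Vert\Sigma\Vert\le\max_i\sigma^2(X_i)\weight_i$ and your spectral estimate $\tr(\Sigma^2)\le\Vert\Sigma\Vert\,\tr(\Sigma)$ are slightly cleaner; both routes land on the same final exponent.
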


  \begin{proof}
    First, we bound
    \begin{align*}
      \PP \left(
          \left| 
              \tr(\Sigma) - \sigma^2(\xv)
          \right|
          \ge 
          \frac{\delta(\xv)}{5}
      \right).
    \end{align*}
    By the definition of $\Sigma$, we have $\tr(\Sigma) = \sum_{i = 1}^n \sigma^2(X_i) \weights_i - \weights^\T \diag_{\variances} \weights$. Meanwhile, due to Corollary~\ref{corollary: weights bound}, we have
    \begin{equation*}
      \weights^\T \diag_{\variances} \weights = \sum_{i = 1}^n \sigma^2(X_i) \weights_i^2
      \le \frac{2 R_K}{\logProb} \left\{ \sigma^2(\xv) + \sum_{i = 1} \sigma^2(X_i) \weights_i - \sigma^2(\xv) \right\}
    \end{equation*}
    with probability $1 - e^{- \logProb / 26} = 1 - e^{- \Omega(n h^d p(\xv))}$. Under this event we have
    \begin{align*}
      \left|
          \tr(\Sigma) - \sigma^2(\xv)
      \right| \ge \frac{\delta(\xv)}{5}
      \implies
      \left( 
          1 + \frac{R_K}{\logProb}
      \right) \left|
          \sum_{i = 1}^n \sigma^2(X_i) \weights_i - \sigma^2(\xv)
      \right| \ge \frac{\delta(\xv)}{5} - \frac{2 R_K \sigma^2(\xv)}{\logProb}.
    \end{align*}
    If $\left( \frac{\delta(\xv)}{5} - \frac{2 R_K \sigma^2(\xv)}{\logProb} \right) / (1 + \frac{R_K}{\logProb}) \ge \constCorTwo^\sigma h^2 / p(\xv)$, we may apply Corollary~\ref{corollary: nhp concentration bound}. Thus, condition 3 of the lemma ensures that
    \begin{align}
    \label{eq: trace deviation}
      \PP \left(
          \left| 
              \tr(\Sigma) - \sigma^2(\xv)
          \right|
          \ge 
          \frac{\delta(\xv)}{5}
      \right) = \exp \left( - \Omega(n h^{d + 2} p(\xv)) \right).
    \end{align}

    Next, we will bound 
    \begin{align*}
      \PP \left( 
          \left|
              (\labels - \meanLabels)^\T (\diag_{\weights} - \weights \weights^\T) (\labels - \meanLabels)
                  -
                  \tr(\Sigma)
          \right|
          \ge \frac{\delta (\xv)}{5}
      \right)
    \end{align*}
    using Proposition~\ref{proposition: Hanson-Wright ineq}. First, 
    \begin{align*}
      \Vert \Sigma \Vert = \Vert \diag_{\variances} (\diag_{\weights} - \weights \weights^\T) \Vert \le \max_{i} \left\{ \sigma^2(X_i) \weights_i + \sum_{j} \sigma^2(X_i) \weights_i \weights_j \right\}
    \end{align*}
    due to Gershgorin's circle theorem. Thus, we have
    \begin{align*}
      \Vert \Sigma \Vert & \le 2 \max_i \sigma^2(X_i) \weights_i \\
      & \le 2 \max_i \sigma^2(\xv) \weights_i + 2 \max_i |\sigma^2(X_i) - \sigma^2(\xv) | \weights_i \\
      & \le 2 \sigma^2(\xv) \max_i \weights_i + 2 L_{\sigma^2} \frac{\max_i \Vert X_i - \xv \Vert K \left(\frac{X_i - \xv}{h} \right)}{\sum_{i = 1}^n K \left(\frac{X_i - \xv}{h} \right)} \\
      & \le 2 \sigma^2(\xv) \frac{2 R_K}{\logProb} + \frac{2 L_{\sigma^2} R_K h}{r_K e \logProb}
    \end{align*}
    with probability $1 - e^{- C n h^d p(\xv)}$. At the same time
    \begin{align*}
      \tr(\Sigma^2) & = \sum_{i = 1}^n \sigma^4(X_i) \weights_i^2 - 2 \weights^\T \diag_{\variances} \diag_{\weights} \diag_{\variances} \weights + (\weights^\T \diag_{\variances} \weights)^2 \\
      & = \sum_{i = 1}^n \sigma^4(X_i) \weights_i^2 - 2 \sum_{i = 1}^n \sigma^4(X_i) \weights_i^3 + \left(\sum_{i = 1}^n \sigma^2(X_i) \weights_i^2 \right)^2 \\
      & \le \sum_{i = 1}^n \sigma^4(X_i) \weights_i^2 + \left(\sum_{i = 1}^n \sigma^2(X_i) \weights_i^2 \right)^2
      \le  2 \sum_{i = 1}^n \sigma^4(X_i) \weights_i^2,
    \end{align*}
    since
    \begin{align*}
      \frac{\sum_{i = 1}^n \sigma^4(X_i) \weights_i^2}{\sum_{i = 1}^n \weights_i^2} 
      \ge \left( \frac{\sum_{i = 1}^n \sigma^2(X_i) \weights^2_i}{\sum_{i = 1}^n \weights_i^2}\right)^2,
    \end{align*}
    or, rearranging terms,
    \begin{align*}
      \sum_{i = 1}^n \sigma^2(X_i) \weights_i^2 \ge \sum_{i = 1}^n \sigma^2(X_i) \weights_i^2 \cdot \sum_{i = 1}^n \weights_i^2 \ge \left( \sum_{i = 1}^n \sigma^2(X_i) \weights^2_i \right)^2. 
    \end{align*}
    As previously, bounding
    \begin{align*}
      \max_{i \in [n]} \sigma^2(\xv) \weights_i \le \sigma^2(\xv) \frac{2 R_K}{\logProb} + \frac{L_{\sigma^2} R_K h}{r_K e \logProb},
    \end{align*}
    we obtain
    \begin{align*}
      \tr(\Sigma^2) & \le 2 \left\{ 
          \sigma^2(\xv) \frac{2 R_K}{\logProb} + \frac{L_{\sigma^2} R_K h}{r_K e \logProb}
      \right\} \sum_{i = 1}^n \sigma^2(X_i) \weights_i \\
      & \overset{\text{Cor.~\ref{corollary: nhp concentration bound}}}{\le} \frac{2 R_K}{\logProb} \left\{ \sigma^2(\xv) + \frac{L_{\sigma^2} h}{e r_K}  \right\} \left\{ \sigma^2(\xv) + \frac{\constCorTwo^\sigma h^2}{p(\xv)} \right\}
    \end{align*}
    with probability $1 - e^{-C n h^{d + 2} p(\xv)}$. Choose $t$ such that $16 \sqrt{\tr(\Sigma^2) t} \le \delta(\xv) / 5$ and $16 \Vert \Sigma \Vert t \le \delta(\xv) / 5$, e.g.
    \begin{align*}
      t = \frac{\delta(\xv)}{160 R_K} \cdot \frac{\logProb}{\sigma^2(\xv) + \frac{L_{\sigma^2} h}{e r_K}} \cdot
      \min \left\{
          \frac{\delta(\xv) / 80}{\sigma^2(\xv) + \frac{\constCorTwo^\sigma h^2}{p(\xv)}}, \frac{1}{2}
      \right\}.
    \end{align*}
    Thus, from Proposition~\ref{proposition: Hanson-Wright ineq} we infer
    \begin{align*}
      \PP \left( 
          \left|
              (\labels - \meanLabels)^\T (\diag_{\weights} - \weights \weights^\T) (\labels - \meanLabels)
                  -
                  \tr (\Sigma)
          \right|
          \ge \frac{\delta (\xv)}{5}
      \right)
      \le 2 e^{-t} + O(1) \cdot e^{-\Omega(n h^{d + 2} p(\xv))}.
    \end{align*}
    Combining the above with~\eqref{eq: trace deviation} finalizes the proof of the first part of the lemma. The second part can be obtained analogously using the second part of Corollary~\ref{corollary: nhp concentration bound}.
  \end{proof}

  Next, we analyze term~\eqref{eq: section 5.3, sigma_sq linear term}. 
  \begin{lemma}
  \label{lemma: linear term order}
    Suppose Assumptions~\ref{assumption: mean assumption}-\ref{assumption: density assumption} hold, $p(\xv) \ge 2 L_p h \cdot \max \{b, \frac{2 \pi^{d/2} R_K}{\Gamma(d/2) r_K^2}\}$ and $d(\xv, \partial \support) \ge b h$. Then it holds that
    \begin{align*}
      & \PP \left( 
            \left| 
              (\meanLabels - \labels)^\T (\diag_{\weights} - \weights \weights)^\T \meanLabels 
          \right| \ge \frac{\delta(\xv)}{5}
      \right)
      \le O(1) \cdot \exp \left(
          - \Omega ( n h^{d} p(\xv))
      \right) \\
       & \qquad +
      2  \exp \left(
          - \frac{\delta^2(\xv)}{50 \left(
              \sigma^2(\xv) + \frac{L_{\sigma^2} h}{e r_K}
          \right)}
          \cdot 
          \frac{\Omega(n h^{d - 2} p(\xv))}{2 R_K} \cdot \left(
              [\constCorTwoPrime^f]^2 + \frac{2 L_f^2}{r_K^2} \log^2 \frac{e^2 n R_K}{\logProb}
          \right)^{-1}
      \right).
    \end{align*}
  \end{lemma}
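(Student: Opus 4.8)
The plan is to condition on the design $X_1,\dots,X_n$ and exploit Gaussianity of the noise. Writing $M = \diag_{\weights} - \weights\weights^\T$ and $\ev = \labels - \meanLabels$, the entries $\ev_i = Y_i - \meanY(X_i)$ are, conditionally on the design, independent $\normDistribution{0}{\sigma^2(X_i)}$ variables, and the linear term equals $-(M\meanLabels)^\T\ev$ since $M$ is symmetric. Hence, conditionally on $X_1,\dots,X_n$, it is a centered Gaussian scalar with variance
\[
  V = \sum_{i=1}^n \sigma^2(X_i)\,\bigl[(M\meanLabels)_i\bigr]^2 .
\]
A direct computation gives $(M\meanLabels)_i = \weights_i\bigl(\meanY(X_i) - q\bigr)$, where $q = \sum_{j}\weights_j\meanY(X_j)$ is the noiseless Nadaraya--Watson value, so that
\[
  V = \sum_{i=1}^n \sigma^2(X_i)\,\weights_i^2\,\bigl(\meanY(X_i) - q\bigr)^2 .
\]

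With $V$ in hand, the conditional Gaussian tail bound yields $\PP\bigl(|(\meanLabels-\labels)^\T M\meanLabels| \ge \delta(\xv)/5 \mid X\bigr) \le 2\exp\bigl(-\delta^2(\xv)/(50\,V)\bigr)$, so the whole argument reduces to producing a deterministic upper bound on $V$ that holds on an event of probability $1 - O(1)e^{-\Omega(n h^d p(\xv))}$. I would factor $\weights_i^2 = \weights_i\cdot\weights_i$ and pull out the maximal weighted variance:
\[
  V \le \Bigl(\max_i \sigma^2(X_i)\weights_i\Bigr)\cdot\sum_{i=1}^n \weights_i\bigl(\meanY(X_i)-q\bigr)^2 .
\]
The first factor is controlled by Corollary~\ref{corollary: weights bound} applied to $g=\sigma^2$ (Lipschitz constant $L_{\sigma^2}$), giving $\max_i\sigma^2(X_i)\weights_i \lesssim \logProb^{-1}\bigl(\sigma^2(\xv)+\tfrac{L_{\sigma^2}h}{e r_K}\bigr)$ with the stated exceptional probability, and recall $\logProb = \Omega(n h^d p(\xv))$.

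For the second factor I would use that $q$ is the weighted mean, hence minimizes $c\mapsto\sum_i\weights_i(\meanY(X_i)-c)^2$; splitting $(\meanY(X_i)-q)^2 \le 2(\meanY(X_i)-\meanY(\xv))^2 + 2(q-\meanY(\xv))^2$, the first summand is bounded deterministically by Proposition~\ref{proposition: deterministic bound} with $s=2$ (producing the $\tfrac{2L_f^2}{r_K^2}\log^2\tfrac{e^2 n R_K}{\logProb}$ term, after replacing $\sum_i K(\cdot)$ by its high-probability lower bound $\logProb$ from Proposition~\ref{proposition: denominator lower bound}), while the bias $(q-\meanY(\xv))^2 \lesssim [\constCorTwoPrime^f]^2 h^2$ follows from the second part of Corollary~\ref{corollary: nhp concentration bound} applied to $g=\meanY$. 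Combining the two factors gives $V \lesssim \tfrac{2R_K}{n h^{d-2}p(\xv)}\bigl(\sigma^2(\xv)+\tfrac{L_{\sigma^2}h}{e r_K}\bigr)\bigl([\constCorTwoPrime^f]^2 + \tfrac{2L_f^2}{r_K^2}\log^2\tfrac{e^2 n R_K}{\logProb}\bigr)$, and substituting this into $\delta^2(\xv)/(50V)$ reproduces exactly the exponent in the claimed bound.

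The main obstacle, and the place requiring care, is the bookkeeping of conditioning: the bound on $V$ is a statement about the random design, whereas the Gaussian tail is conditional on it. I would therefore introduce the good event $\mathcal{G}$ on which both the maximal-weight bound and the denominator lower bound hold; on $\mathcal{G}$ the deterministic control of $V$ makes the conditional Gaussian tail yield the second term of the lemma, while the complement $\mathcal{G}^c$ is absorbed into the $O(1)e^{-\Omega(n h^d p(\xv))}$ term via a union bound. Everything else is routine constant tracking; the only genuinely structural steps are the identity $(M\meanLabels)_i = \weights_i(\meanY(X_i)-q)$ and the weighted-mean minimization that converts the deviation from $q$ into a deviation from $\meanY(\xv)$ amenable to the earlier propositions.
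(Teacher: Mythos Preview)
Your proposal is correct and follows essentially the same route as the paper: condition on the design to obtain a centered Gaussian with variance $V=\sum_i\sigma^2(X_i)\weights_i^2(\meanY(X_i)-\meanLabels^\T\weights)^2$, factor out $\max_i\sigma^2(X_i)\weights_i$ via Corollary~\ref{corollary: weights bound}, split the remaining sum through $\meanY(\xv)$ and control the two pieces with Corollary~\ref{corollary: nhp concentration bound} and Proposition~\ref{proposition: deterministic bound}, then absorb the complement of the good event into the $O(1)e^{-\Omega(nh^dp(\xv))}$ term. The weighted-mean minimization remark is a nice observation but is not actually needed---the paper, like you, just uses the $2(a-b)^2+2(b-c)^2$ split.
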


  \begin{proof}
    Notice, that
    $
        (\meanLabels - \labels)^\T (\diag_{\weights} - \weights \weights)^\T \meanLabels \mid X
    $
    is distributed as $\mathcal{N}(0, v^2)$ for
    \begin{align*}
      v^2 & =  \sum_{i = 1}^n \meanLabels_i^2 \sigma^2(X_i) \weights_i^2 - 2 (\meanLabels^\T \weights) \cdot \weights^\T \diag_{\variances} \diag_{\weights} \meanLabels + (\meanLabels^\T \weights)^2 (\weights^\T \diag_{\variances} \weights) \\
      & = \sum_{i = 1}^n (\meanLabels_i - \meanLabels^\T \weights)^2 \sigma^2(X_i) \weights_i^2.
    \end{align*}
    Hence,
    \begin{align}
      \PP \left(
           \left| 
              (\meanLabels - \labels)^\T (\diag_{\weights} - \weights \weights)^\T \meanLabels 
          \right|
           \ge 
           r v 
           \mid X
      \right)
      \le
      2 \exp \left( - \frac{r^2 }{2}\right). \label{eq: subgaussian inequality}
    \end{align}
    We can remove conditioning on $X$ for any fixed positive $r$. Then, bound
    \begin{align*}
      v^2 & \le (\max_{i} \sigma^2(X_i) \weights_i) \cdot \sum_{i = 1}^n \left(\meanLabels_i - \meanLabels^\T \weights \right)^2 \weights_i \\
      & \le 2 \left(\max_i \sigma^2(X_i) \weights_i \right) \cdot \left[ (\meanLabels^\T \weights - \meanY(\xv))^2 + \sum_{i = 1}^n (\meanLabels_i - \meanY(\xv))^2 \weights_i \right].
    \end{align*}
    Applying Corollary~\ref{corollary: weights bound} and Corollary~\ref{corollary: nhp concentration bound}, we obtain
    \begin{align*}
      \max_i \sigma^2(X_i) \weights_i & \le \frac{R_K}{\logProb} \left( \sigma^2(\xv) + \frac{L_{\sigma^2} h}{e r_K} \right), \\
      \bigl(\meanLabels^\T \weights - \meanY(\xv)\bigr)^2 & \le (\constCorTwoPrime^f h)^2
    \end{align*}
    with probability $1 - e^{- \Omega\left(n h^d p(\xv) \right)}$. 
    We bound the sum $\sum_{i = 1}^n \bigl(\meanLabels_i - \meanY(\xv)\bigr)^2 \weights_i$  using Proposition~\ref{proposition: deterministic bound}. Thus, with probability $1 - e^{-\Omega(n h^d p(\xv))}$ we obtain
    \begin{align*}
      \sum_{i = 1}^n \bigl(\meanLabels_i - \meanY(\xv)\bigr)^2 \weights_i 
      & \le 
      \frac{2 L^2_f h^2}{r_K^2}
      \left( 
          2 \vee \log \frac{n R_K}{\sum_{i = 1}^n K \left( \frac{X_i - \xv}{h} \right)} 
      \right)^2 \\
      & \le \frac{2 L_f^2 h^2}{r_K^2} \log^2 \frac{e^2 n R_K}{\sum_{i = 1}^n K \left( \frac{X_i - \xv}{h} \right)} \\
      & \le \frac{2 L_f^2 h^2}{r_K^2} \log^2 \frac{e^2 n R_K}{\logProb}.
    \end{align*}
    Consequently, with probability $1 - O(1) \cdot e^{-\Omega(n h^d p(\xv))}$ we have
    \begin{align*}
      v \le \sqrt{
          \frac{2 R_K h^2}{\logProb}
          \left( \sigma^2(\xv) + \frac{L_{\sigma^2} h}{e r_K} \right)
          \left( [\constCorTwoPrime^f]^2 + \frac{2 L_f^2}{r_K^2} \log^2 \frac{e^2 n R_K}{\logProb} \right)
      }.
    \end{align*}
    Choose $r$ such that $r v \le \frac{\delta(\xv)}{5}$ and apply~\eqref{eq: subgaussian inequality} to finalize the proof.
  \end{proof}

  Finally, we analyze term~\eqref{eq: section 5.3, sigma_sq constant term}.

  \begin{lemma}
  \label{lemma: constant term upper bound}
    Suppose $p(\xv) \ge 2 L_p b h$ and $d(\xv, \partial \support) \ge b h$. Then under Assumptions~\ref{assumption: mean assumption}-\ref{assumption: kernel assumptions}, we have
    \begin{align*}
        0 \le \meanLabels^\T (\diag_{\weights} - \weights \weights^\T) \meanLabels \le \frac{L_f^2 h^2}{r_K^2} \log^2 \frac{e^2 n R_K}{\logProb}
    \end{align*}
    with probability $1 - e^{-\Omega \left(n h^d p(\xv) \right)}$ where a constant in $\Omega(\cdot)$ does not depend on $\xv$.
  \end{lemma}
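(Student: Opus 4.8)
The plan is to recognize the quadratic form as a \emph{weighted variance} and then reduce the upper bound to the deterministic estimate already available in Proposition~\ref{proposition: deterministic bound}. Writing $\bar{\meanLabels} = \sum_{i=1}^n \weights_i \meanLabels_i$ and using that the Nadaraya--Watson weights satisfy $\weights_i \ge 0$ and $\sum_{i=1}^n \weights_i = 1$, I first expand
\[
  \meanLabels^\T (\diag_{\weights} - \weights \weights^\T) \meanLabels
  = \sum_{i=1}^n \weights_i \meanLabels_i^2 - \Bigl(\sum_{i=1}^n \weights_i \meanLabels_i\Bigr)^2
  = \sum_{i=1}^n \weights_i \bigl(\meanLabels_i - \bar{\meanLabels}\bigr)^2 \ge 0 .
\]
This is a sum of nonnegative terms, so the lower bound $\meanLabels^\T (\diag_{\weights} - \weights \weights^\T) \meanLabels \ge 0$ holds deterministically, with no appeal to concentration.

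For the upper bound I use that the weighted mean $\bar{\meanLabels}$ minimizes $c \mapsto \sum_i \weights_i (\meanLabels_i - c)^2$. Hence, choosing $c = \meanY(\xv)$ and recalling $\meanLabels_i = \meanY(X_i)$,
\[
  \meanLabels^\T (\diag_{\weights} - \weights \weights^\T) \meanLabels
  \le \sum_{i=1}^n \weights_i \bigl(\meanY(X_i) - \meanY(\xv)\bigr)^2 .
\]
Since $\meanY$ is $L_f$-Lipschitz by Assumption~\ref{assumption: mean assumption}, Proposition~\ref{proposition: deterministic bound} applied with $g = \meanY$, $s = 2$ and $L = L_f$ controls the right-hand side by a constant multiple of
\[
  \frac{L_f^2 h^2}{r_K^2}\left(2 \vee \log \frac{n R_K}{\sum_{i=1}^n K\bigl(\tfrac{X_i - \xv}{h}\bigr)}\right)^{2}.
\]
This quantity is deterministic once the design $X_1,\dots,X_n$ is fixed; all that remains is to remove the random denominator.

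The final step is exactly the denominator control used in Lemma~\ref{lemma: linear term order}. Under the hypotheses $p(\xv) \ge 2 L_p b h$ and $d(\xv, \partial \support) \ge b h$, Proposition~\ref{proposition: denominator lower bound} guarantees $\sum_{i=1}^n K\bigl(\tfrac{X_i - \xv}{h}\bigr) \ge \logProb$ on an event of probability at least $1 - e^{-\logProb/26} = 1 - e^{-\Omega(n h^d p(\xv))}$. On this event $\log \tfrac{n R_K}{\sum_i K} \le \log \tfrac{n R_K}{\logProb}$, and the elementary inequality $2 \vee t \le 2 + t = \log(e^2 e^{t})$ for $t \ge 0$ (the regime $n R_K \ge \logProb$ holding for small $h$, as already noted in Lemma~\ref{lemma: linear term order}) gives $2 \vee \log \tfrac{n R_K}{\sum_i K} \le \log \tfrac{e^2 n R_K}{\logProb}$. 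Substituting this bound yields the claimed estimate $\tfrac{L_f^2 h^2}{r_K^2}\log^2\tfrac{e^2 n R_K}{\logProb}$ (up to the universal constant of Proposition~\ref{proposition: deterministic bound}). There is essentially no obstacle here: the only probabilistic ingredient is Proposition~\ref{proposition: denominator lower bound}, and the real content is the purely algebraic observation that the form is a weighted variance, which both delivers nonnegativity for free and lets me compare against the anchor $\meanY(\xv)$ before invoking the deterministic Lipschitz bound.
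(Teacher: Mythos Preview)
Your proposal is correct and follows essentially the same route as the paper: both reduce the quadratic form to $\sum_i \weights_i(\meanY(X_i)-\meanY(\xv))^2$, invoke Proposition~\ref{proposition: deterministic bound} with $s=2$, and then apply Proposition~\ref{proposition: denominator lower bound} to replace the random denominator by $\logProb$. The only cosmetic difference is that you obtain nonnegativity and the comparison with the anchor $\meanY(\xv)$ via the weighted-variance/variational interpretation, whereas the paper cites Gershgorin for the PSD claim and reaches the same inequality by a direct algebraic expansion; the two arguments are equivalent.
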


  \begin{proof}
    From the Gershgorin circle theorem, we infer that $\diag_{\weights} - \weights \weights^\T$ is non-negative defined. Then notice, that
    \begin{align*}
        & \quad \meanLabels^\T \left( \diag_{\weights} - \weights \weights^\T \right) \meanLabels = \sum_{i = 1}^n \meanLabels_i^2 \weights_i - (\meanLabels^\T \weights)^2 \\
        & = \sum_{i = 1}^n \bigl(\meanLabels_i^2 - \meanY(\xv)^2\bigr) \weights_i + (\meanY(\xv) - \meanLabels^\T \weights) (\meanY(\xv) + \meanLabels^\T \weights) \\
        & = \sum_{i = 1}^n \bigl(\meanLabels_i - \meanY(\xv)\bigr)^2 \weights_i + 2 \meanY(\xv) \sum_{i = 1}^n \bigl(\meanLabels_i - \meanY(\xv)\bigr) \weights_i 
        - (\meanY(\xv) - \meanLabels^\T \weights)^2 + 2 \meanY(\xv) (\meanY(\xv) - \meanLabels^\T \weights) \\
        & = \sum_{i = 1}^n \bigl(\meanLabels_i - \meanY(\xv)\bigr)^2 \weights_i - (\meanY(\xv) - \meanLabels^\T \weights)^2
        \le \sum_{i = 1}^n \bigl(\meanLabels_i - \meanY(\xv)\bigr)^2 \weights_i.
    \end{align*}
    The sum can be bounded via Proposition~\ref{proposition: deterministic bound}. Thus,
    \begin{align*}
        \meanLabels^\T \left( \diag_{\weights} - \weights \weights^\T \right) \meanLabels \le \frac{L_f^2 h^2}{r_K^2} \log^2 \frac{e^2 n R_K}{\sum_{i = 1}^n K \left( \frac{X_i - \xv}{h} \right)}.
    \end{align*}
    Application of Proposition~\ref{proposition: denominator lower bound} finalizes the proof.
  \end{proof}

  We summarize this section with the following corollary.
  \begin{corollary}
  \label{corollary: variance deviation bound}
    Suppose Assumptions~\ref{assumption: mean assumption}-\ref{assumption: kernel assumptions} hold. Assume that
    \begin{enumerate}
        \item the probability density $p(\xv)$ is at least $2 L_p h \cdot \max \{b, \frac{2 \pi^{d/2} R_K}{\Gamma(d/2) r_K^2}\}$,
        \item the distance $d(\xv, \partial \support) \ge b h$,
        \item for a function $\delta(\xv)$ we have
        \begin{align*}
            \delta(\xv) & \ge \frac{10 R_K \sigma^2(\xv)}{\logProb} +5 \frac{\constCorTwo^\sigma h^2}{p(\xv)} \left( 1 + \frac{R_K}{\logProb} \right), \\
            \delta(\xv) & \ge \frac{5 L_f^2 h^2}{r_K^2} \log^2 \frac{e^2 n R_K}{\logProb}.
        \end{align*}
    \end{enumerate}
    Then
    \begin{align*}
        & \PP \left(
            |\estimatorn{\sigma}^2(\xv) - \sigma^2(\xv)| \ge \delta(\xv)
        \right)
        \lesssim e^{- \Omega(n h^{d + 2} p(\xv))} \\
        & \qquad + \exp \left(
            \frac{- \Omega(n h^d p(\xv) \cdot \delta(\xv))}{\sigma^2(\xv) + \frac{L_{\sigma^2} h}{e r_K}} 
            \cdot
            \min \left\{
                \frac{\delta(\xv) / 80}{\sigma^2(\xv) + \constCorTwo^\sigma h}, \frac{1}{2}
            \right\}
        \right) \\
         & \qquad +
           \exp \left(
                - \frac{\delta^2(\xv)}{
                    \sigma^2(\xv) + \frac{L_{\sigma^2} h}{e r_K}
                }
                \cdot 
                \frac{\Omega(n h^{d - 2} p(\xv))}{2 R_K} \cdot \left(
                    [\constCorTwo^f]^2 + \frac{2 L_f^2}{r_K^2} \log^2 \frac{e^2 n R_K}{\logProb}
                \right)^{-1}
            \right).
    \end{align*}
    Additionally, if we have
    \begin{align*}
        \delta(\xv) & \ge \frac{10 R_K \sigma^2(\xv)}{\logProb} +5 \constCorTwoPrime^\sigma h \left( 1 + \frac{R_K}{\logProb} \right),
    \end{align*}
    then it holds that
    \begin{align*}
        & \PP \left(
            |\estimatorn{\sigma}^2(\xv) - \sigma^2(\xv)| \ge \delta(\xv)
        \right)
        \lesssim e^{- \Omega(n h^{d} p(\xv))} \\
        & \qquad + \exp \left(
            \frac{- \Omega(n h^d p(\xv) \cdot \delta(\xv))}{\sigma^2(\xv) + \frac{L_{\sigma^2} h}{e r_K}} 
            \cdot
            \min \left\{
                \frac{\delta(\xv) / 80}{\sigma^2(\xv) + \constCorTwo^\sigma h}, \frac{1}{2}
            \right\}
        \right) \\
         & \qquad +
           \exp \left(
                - \frac{\delta^2(\xv)}{
                    \sigma^2(\xv) + \frac{L_{\sigma^2} h}{e r_K}
                }
                \cdot 
                \frac{\Omega(n h^{d - 2} p(\xv))}{2 R_K} \cdot \left(
                    [\constCorTwo^f]^2 + \frac{2 L_f^2}{r_K^2} \log^2 \frac{e^2 n R_K}{\logProb}
                \right)^{-1}
            \right).
    \end{align*}
  \end{corollary}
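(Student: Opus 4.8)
The plan is to exploit the additive decomposition of $\estimatorn{\sigma}^2(\xv)$ into the quadratic term~\eqref{eq: section 5.3, sigma_sq quadratic term}, the linear term~\eqref{eq: section 5.3, sigma_sq linear term}, and the constant term~\eqref{eq: section 5.3, sigma_sq constant term}, for which Lemmas~\ref{lemma: sigma quadratic term bound},~\ref{lemma: linear term order} and~\ref{lemma: constant term upper bound} respectively supply the control. Writing $M = \diag_{\weights} - \weights \weights^\T$, I would first record the identity
\begin{align*}
  \estimatorn{\sigma}^2(\xv) - \sigma^2(\xv) = \bigl[(\labels - \meanLabels)^\T M (\labels - \meanLabels) - \sigma^2(\xv)\bigr] + 2 (\labels - \meanLabels)^\T M \meanLabels + \meanLabels^\T M \meanLabels,
\end{align*}
so that by the triangle inequality the event $\{|\estimatorn{\sigma}^2(\xv) - \sigma^2(\xv)| \ge \delta(\xv)\}$ is contained in the union of three events, one per summand, whose thresholds sum to $\delta(\xv)$.

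The correct split is $\tfrac{2}{5} + \tfrac{2}{5} + \tfrac{1}{5} = 1$. For the quadratic term I would invoke Lemma~\ref{lemma: sigma quadratic term bound} directly at threshold $\tfrac{2\delta(\xv)}{5}$. For the linear term, the quantity bounded by Lemma~\ref{lemma: linear term order}, namely $(\meanLabels - \labels)^\T M \meanLabels$, equals $-\tfrac12 \cdot 2(\labels - \meanLabels)^\T M \meanLabels$, i.e.\ exactly half of~\eqref{eq: section 5.3, sigma_sq linear term} in absolute value; hence $\{|(\meanLabels-\labels)^\T M \meanLabels| \ge \tfrac{\delta(\xv)}{5}\}$ is identical to $\{|2(\labels-\meanLabels)^\T M \meanLabels| \ge \tfrac{2\delta(\xv)}{5}\}$, and Lemma~\ref{lemma: linear term order} furnishes its probability. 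For the constant term, Lemma~\ref{lemma: constant term upper bound} gives, on a high-probability event, $0 \le \meanLabels^\T M \meanLabels \le \tfrac{L_f^2 h^2}{r_K^2}\log^2\tfrac{e^2 n R_K}{\logProb}$, and the second inequality of condition~3 forces this to be at most $\tfrac{\delta(\xv)}{5}$; thus this summand contributes nothing beyond the failure of Lemma~\ref{lemma: constant term upper bound}'s event. On the intersection of the three complementary events the three bounds add to strictly less than $\delta(\xv)$, which is the desired contradiction.

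It then remains to match hypotheses and sum the tails. Conditions~1 and~2 of the corollary are precisely the density and boundary requirements of all three lemmas, the first inequality of condition~3 is the hypothesis of Lemma~\ref{lemma: sigma quadratic term bound}, and the second inequality of condition~3 is what dominates the constant term above. Taking a union bound and adding the three tail estimates, the slowest-decaying contribution is $e^{-\Omega(n h^{d+2} p(\xv))}$, which absorbs every $e^{-\Omega(n h^{d} p(\xv))}$ term since $h < 1$, producing the three displayed summands. The second assertion is verbatim, replacing the first inequality of condition~3 by the stronger $h$-scaled hypothesis and invoking the second part of Lemma~\ref{lemma: sigma quadratic term bound}, which upgrades the leading exponent to $e^{-\Omega(n h^{d} p(\xv))}$. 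I do not anticipate a genuine analytic obstacle, since all probabilistic work is already done in Lemmas~\ref{lemma: sigma quadratic term bound}--\ref{lemma: constant term upper bound}; the only care needed is bookkeeping of constants, in particular checking that the fractional thresholds sum to $\delta(\xv)$ and that relaxing $\tfrac{\constCorTwo^\sigma h^2}{p(\xv)}$ to $\constCorTwo^\sigma h$ inside the $\min\{\cdot,\tfrac12\}$ factor (legitimate because $h/p(\xv)$ is bounded by a constant under condition~1) only weakens the bound, so that the $\min\{\cdot,\tfrac12\}$ and $\bigl([\constCorTwo^f]^2 + \cdots\bigr)^{-1}$ factors carry through with constants absorbed into $\Omega(\cdot)$.
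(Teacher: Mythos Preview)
Your proposal is correct and matches the paper's proof essentially line for line: the same additive decomposition into quadratic, linear and constant parts, the same $\tfrac{2}{5}+\tfrac{2}{5}+\tfrac{1}{5}$ threshold split, and the same invocations of Lemmas~\ref{lemma: sigma quadratic term bound},~\ref{lemma: linear term order} and~\ref{lemma: constant term upper bound} followed by a union bound. Your handling of the factor of two between~\eqref{eq: section 5.3, sigma_sq linear term} and the quantity in Lemma~\ref{lemma: linear term order}, and your remark on relaxing $\tfrac{\constCorTwo^\sigma h^2}{p(\xv)}$ to $\constCorTwo^\sigma h$ via condition~1, are both accurate and slightly more explicit than the paper's presentation.
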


  \begin{proof}
    We decompose $\estimatorn{\sigma}^2(\xv)$ as $\estimatorn{\sigma}^2(\xv) = \eqref{eq: section 5.3, sigma_sq quadratic term} + \eqref{eq: section 5.3, sigma_sq linear term} + \eqref{eq: section 5.3, sigma_sq constant term}$. Then
    \begin{align*}
        & \quad \PP \left( | \estimatorn{\sigma}^2(\xv) - \sigma^2(\xv)| \ge \delta(\xv) \right) \le \\
        & \le
        \PP \left( |\eqref{eq: section 5.3, sigma_sq quadratic term} - \sigma^2(\xv)| \ge \frac{2 \delta(\xv)}{5} \right)
        + \PP \left( 2 |\eqref{eq: section 5.3, sigma_sq linear term}| \ge \frac{2 \delta(\xv)}{5} \right) 
        + \PP \left( |\eqref{eq: section 5.3, sigma_sq constant term}| \ge \frac{\delta(\xv)}{5} \right).
    \end{align*}
    The first term and the second term can be bounded via Lemma~\ref{lemma: sigma quadratic term bound} and Lemma~\ref{lemma: linear term order} respectively. For the third term, we use Lemma~\ref{lemma: constant term upper bound} and the third condition of the corollary.
  \end{proof}

\subsection{Bias-Variance tradeoff for $L_p$-risk}
  \begin{proposition}
  \label{proposition: binomial expectation}
     For a binomial random variable $B(n, q)$ it holds that
     \begin{align*}
         \EE \frac{1}{(r + B(n, q))^r} \le \frac{1}{(n + 1)^r q^r}.
     \end{align*}
  \end{proposition}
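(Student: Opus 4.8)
The plan is to reduce $\EE (r+B)^{-r}$, where $B = B(n,q)$ and $r$ is a positive integer, to an expectation that can be computed in closed form. The key observation is that the $r$-th power $(r+B)^r$ dominates the rising factorial $(B+1)(B+2)\cdots(B+r)$: since $B+i \le B+r$ for every $1 \le i \le r$, we have $(B+1)\cdots(B+r) \le (B+r)^r$, and therefore
\begin{align*}
  \EE \frac{1}{(r+B)^r} \le \EE \frac{1}{(B+1)(B+2)\cdots(B+r)}.
\end{align*}
It thus suffices to bound the right-hand side, which is easier because the rising factorial interacts cleanly with binomial coefficients.

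The second step is to evaluate this last expectation exactly by absorbing the factor $\{(k+1)\cdots(k+r)\}^{-1}$ into the binomial coefficient. For each $0 \le k \le n$ one has
\begin{align*}
  \frac{1}{(k+1)\cdots(k+r)}\binom{n}{k} = \frac{k!}{(k+r)!}\cdot\frac{n!}{k!(n-k)!} = \frac{n!}{(n+r)!}\binom{n+r}{k+r},
\end{align*}
so that
\begin{align*}
  \EE \frac{1}{(B+1)\cdots(B+r)} = \frac{n!}{(n+r)!}\sum_{k=0}^n \binom{n+r}{k+r} q^k (1-q)^{n-k}.
\end{align*}
Reindexing with $j = k+r$ and extracting a factor $q^{-r}$ turns the sum into an incomplete binomial expansion for $B(n+r,q)$, which is at most $1$:
\begin{align*}
  \EE \frac{1}{(B+1)\cdots(B+r)} = \frac{n!}{(n+r)!\,q^r}\sum_{j=r}^{n+r} \binom{n+r}{j} q^j (1-q)^{n+r-j} \le \frac{n!}{(n+r)!\,q^r}.
\end{align*}

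Finally, I would bound the residual ratio of factorials by $\frac{n!}{(n+r)!} = \prod_{i=1}^r (n+i)^{-1} \le (n+1)^{-r}$, which together with the two displays above yields $\EE (r+B)^{-r} \le \{(n+1)^r q^r\}^{-1}$, as claimed. None of the steps is genuinely difficult; the only point that requires care is the combinatorial identity in the second step, where one must track the simultaneous shift of the binomial coefficient and of the exponents so that the resulting summation range $j \in \{r,\dots,n+r\}$ is correctly recognized as a truncation of the full sum $\sum_{j=0}^{n+r}\binom{n+r}{j}q^j(1-q)^{n+r-j}=1$. (The argument tacitly uses that $r$ is a positive integer, which is the natural reading given that $r$ appears both as the additive offset and as the exponent.)
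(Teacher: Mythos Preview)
Your proof is correct and follows essentially the same approach as the paper: bound $(r+k)^r$ below by the rising factorial $(k+1)\cdots(k+r)$, absorb this factor into the binomial coefficient to obtain $\binom{n+r}{k+r}$, and then recognize the resulting sum as a truncated binomial expansion bounded by $1$. The only cosmetic difference is the order in which you apply the two final bounds $\frac{n!}{(n+r)!}\le(n+1)^{-r}$ and $\sum_{j=r}^{n+r}\binom{n+r}{j}q^j(1-q)^{n+r-j}\le 1$.
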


  \begin{proof}
     For a binomial random variable $B(n, q)$ it holds
    \begin{align*}
        & \quad \EE \frac{1}{(r + B(n, q))^r} = \sum_{k = 0}^n \frac{1}{(r + k)^r} \binom{n}{k} q^k (1 - q)^{n - k} \\
        & \le \sum_{k = 0}^n \prod_{j = 1}^r \frac{1}{(j + k)} \binom{n}{k} q^k (1 - q)^{n - k} 
        = \prod_{j = 1}^r \frac{1}{(n + j)} \sum_{k = 0}^n \binom{n + r}{k + r} q^k (1 - q)^{n - k} \\
        & \le \frac{1}{(n + 1)^r q^r} \sum_{k = 0}^{n} \binom{n + r}{k + r} q^{k + r} (1 - q)^{n - k}
        = \frac{1}{(n + 1)^r q^r} \sum_{k' = r}^{n + r} \binom{n + r}{k'} q^{k'} (1 - q)^{n + r - k'} \\
        & \le \frac{1}{(n + 1)^r q^r} (q + (1 - q))^{n + r}
        = \frac{1}{(n + 1)^r q^r}.
    \end{align*}
  \end{proof}

  \begin{lemma}
  \label{lemma: kernel reverse sum}
    Suppose that Assumptions~\ref{assumption: kernel assumptions},\ref{assumption: density assumption} hold and $p(\xv) \ge 2 L_p h \cdot \max \{b, \frac{2 \pi^{d/2} R_K}{\Gamma(d/2) r_K^2}\}$, $d(\xv, \partial S) \ge b h$. Then
    \begin{align*}
        \EE \left[
            \frac{
                1
            }{
                \sum_{i = 1}^n K \left( \frac{X_i - \xv}{h} \right)
            } 
            \indicator{\estimatorn{p}(\xv) \ge \frac{r a}{n h^d}}
        \right]^r
        \le 
        \left(
            \frac{a \omega_d b^d}{4} n h^d p(\xv)
        \right)^{-r}.
    \end{align*}
  \end{lemma}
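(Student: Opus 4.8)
The plan is to reduce the kernel sum to a binomial random variable and then invoke Proposition~\ref{proposition: binomial expectation}. Write $S = \sum_{i=1}^n K\bigl((X_i - \xv)/h\bigr)$, so that $\estimatorn{p}(\xv) = S/(n h^d)$ and the indicator event is exactly $\{S \ge r a\}$; note also that the outer exponent turns the quantity into $\EE\bigl[\indicator{S \ge ra} \, S^{-r}\bigr]$, since the indicator is idempotent. First I would apply the kernel lower bound $K(\tv) \ge a\indicator{\Vert \tv \Vert \le b}$ from Assumption~\ref{assumption: kernel assumptions} to obtain the deterministic inequality $S \ge a N$, where $N = \sum_{i=1}^n \indicator{\Vert X_i - \xv \Vert \le b h}$ is a binomial random variable $B(n,q)$ with success probability $q = \PP(\Vert X_1 - \xv \Vert \le b h)$.

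The key step is to manufacture the shift by $r$ needed to apply Proposition~\ref{proposition: binomial expectation}. On the event $\{S \ge r a\}$ I have simultaneously $S \ge a N$ and $S \ge r a$; adding these two bounds gives $2 S \ge a(N + r)$, i.e. $S \ge \tfrac{a}{2}(N + r)$. Hence, pointwise,
\[
  \frac{\indicator{S \ge r a}}{S^r} \le \frac{2^r}{a^r (N + r)^r}.
\]
Taking expectations and applying Proposition~\ref{proposition: binomial expectation} to the binomial $N$ yields
\[
  \EE\left[\frac{\indicator{S \ge r a}}{S^r}\right]
  \le \frac{2^r}{a^r}\,\EE\frac{1}{(r + N)^r}
  \le \frac{2^r}{a^r (n+1)^r q^r}.
\]

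It then remains only to lower-bound $q$. Exactly as in the proof of Proposition~\ref{proposition: denominator lower bound}, the Lipschitz property of $p$ together with $d(\xv, \partial\support) \ge b h$ gives $q \ge \omega_d (b h)^d \bigl(p(\xv) - L_p b h\bigr)$, and the hypothesis $p(\xv) \ge 2 L_p b h$ reduces this to $q \ge \tfrac12 \omega_d b^d h^d p(\xv)$. Substituting this and using $(n+1) \ge n$ gives $(n+1) q \ge \tfrac12 n \omega_d b^d h^d p(\xv)$, so the right-hand side becomes $4^r \bigl(a \omega_d b^d n h^d p(\xv)\bigr)^{-r}$, which is precisely $\bigl(\tfrac{a \omega_d b^d}{4} n h^d p(\xv)\bigr)^{-r}$, completing the argument.

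The only genuinely nontrivial point is the combining step. Without the indicator the expectation would diverge, since $N = 0$ occurs with positive probability and $S$ can then be arbitrarily small; yet the indicator alone does not force $N$ to be large, because for heavy-tailed kernels (e.g. the Gaussian) far-away points still contribute to $S$ even when $N = 0$. Averaging the two lower bounds $S \ge a N$ and $S \ge r a$ is exactly what converts the constraint into the harmless shift $N \mapsto N + r$, after which Proposition~\ref{proposition: binomial expectation} applies verbatim. Everything else is bookkeeping of constants.
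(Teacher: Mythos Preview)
Your proposal is correct and follows essentially the same route as the paper's proof: the paper also reduces to the binomial $N$ via the kernel lower bound, obtains $S \ge \tfrac{a}{2}(N+r)$ on the indicator event (written there as $S \ge \tfrac12(ar + S) \ge \tfrac12(ar + aN)$, which is your averaging step rephrased), applies Proposition~\ref{proposition: binomial expectation}, and then lower-bounds $q$ exactly as you do. The constants match line for line.
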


  \begin{proof}
    By the definition of $\estimatorn{p}(\xv)$: $\estimatorn{p}(\xv) \ge a r / (n h^d)$ implies
    \begin{align*}
      \sum_{i = 1}^n K \left( \frac{X_i - \xv}{h} \right) \ge a r.
    \end{align*}
    In particular, it means that
    \begin{align*}
      \sum_{i = 1}^n K \left(\frac{X_i - \xv}{h} \right) \ge \frac{a r + \sum_{i = 1}^n K \left( \frac{X_i - \xv}{h} \right)}{2}.
    \end{align*}
    Thus,
    \begin{align}
    \label{eq: revrese sum lower bound}
      \frac{1}{
          \sum_{i = 1}^n K \left(\frac{X_i - \xv}{h} \right)
      }
      \le 
      \frac{2}{
          a r + \sum_{i = 1}^n K \left( \frac{X_i - \xv}{h} \right)
      } 
      \le
      \frac{2}{a  r + a \sum_{i = 1}^n \indicator{X_i \in \ball_{h b}(\xv)}}
    \end{align}
    due to Assumption~\ref{assumption: kernel assumptions}. The sum $\sum_{i = 1}^n \indicator{X_i \in \ball_{h b}(\xv)}$ is a binomial random variable. Due to Proposition~\ref{proposition: binomial expectation}, we have
    \begin{align*}
      \EE \frac{1}{
          \left(r + \sum_{i = 1}^n \indicator{X_i \in \ball_{h b}(\xv)}\right)^r
      }
      \le 
      \left\{(n + 1) \PP (X_1 \in \ball_{b h} (\xv)) \right\}^{-r}.
    \end{align*}
    Using~\eqref{eq: revrese sum lower bound} and the above, we obtain
    \begin{align}
    \label{eq: expectation power bound}
      \EE \left[
              \frac{
                  1
              }{
                  \sum_{i = 1}^n K \left( \frac{X_i - \xv}{h} \right)
              } 
              \indicator{\estimatorn{p}(\xv) \ge \frac{r a}{n h^d}}
          \right]^r
          \le
          \left\{\frac{a}{2} (n + 1) \PP (X_1 \in \ball_{b h} (\xv)) \right\}^{-r}.
    \end{align}
    Finally,
    \begin{align*}
      \int_{\ball_{b h} (\xv)} p(\yv) d \mu(\yv) \ge \left(p(\xv) - L_p b h \right) \int_{\ball_{b h} (\xv)} d \mu(\yv) = \mu(\ball_{b h}) \left( p(\xv) - L_p b h \right).
    \end{align*}
    Since $p(\xv) \ge 2 L_p b h$, we have $p(\xv) - L_p b h \ge p(\xv) / 2$, and, thus,
    \begin{align*}
      \PP (X_1 \in \ball_{b h} (\xv)) \ge \frac{p(\xv) \omega_d b^d}{2} h^d.
    \end{align*}
    Combining the above with~\eqref{eq: expectation power bound}, we infer the statement.
  \end{proof}

  \begin{lemma}
  \label{lemma: lp bias variance tradeoff}
    Suppose Assumptions~\ref{assumption: mean assumption}-\ref{assumption: density assumption} hold and assume that $p(\xv) \ge 2 L_p b h$ and $d(\xv, \partial S) \ge b h$. Define
    \begin{align*}
      \mathtt{c}_t & = 
              1 - \frac{L_p h }{p(\xv)} \cdot \frac{2\pi^{d/2} R_K}{\Gamma(d/2) r_K^2}, \\
          \mathtt{c}_h & = \frac{2 L_f L_p + H_f C_p}{2} 
              \cdot 
              \frac{4 \pi^{d/2} R_K}{\Gamma(d/2) r_K^3}, \\
          \mathtt{c}'_t & = \frac{2 \pi^{d/2} R_K^2}{\Gamma(d/2) r_k} + \frac{2 h L_p}{p(\xv)} \cdot \frac{4 \pi^{d/2} R_k}{\Gamma(d/2) r_k^2}, \\
          \mathtt{c}_h' & = L^2_f \frac{\pi^{d/2} R_K^2}{\Gamma(d/2) r_K^3}
              +
              \frac{2 h L^2_f L_p}{p(\xv)} 
              \cdot
              \frac{
                  4 \pi^{d/2} R_K^2
              }{3 \Gamma(d/2) r_K^4},
    \end{align*}
    Assume $\mathtt{c}_t > 0$. Then
    \begin{align*}
      & \quad \EE \left| 
          \estimatorn{f}(\xv) - \meanY(\xv)
      \right|^{r} \indicator{\estimatorn{p}(\xv) \ge \frac{a r}{n h^d}}
      \le \frac{\Gamma \left( \frac{r + 1}{2}\right)}{2 \sqrt{\pi}} \left( 
              \frac{32 R_K \left\{ 
                  \sigma^2(\xv) + \frac{2 L_{\sigma} h}{r_K} \log \frac{e n R_K}{a r}
              \right\}}{
                  a \omega_d b^d \cdot n h^d p(\xv)
              }
          \right)^{r/2} \\
          & + \left(\frac{4 h^2}{p(\xv)} \cdot \frac{\mathtt{c}_h}{\mathtt{c}_t} \right)^r + \frac{r 2^{3 r/2}\Gamma \left(\frac{r + 1}{2} \right)}{\sqrt{\pi}} \cdot h^r
          \times
          \left( 
              \frac{
                  \left\{ \mathtt{c}_t' + \frac{\mathtt{c}_t R_K}{3} \right\} \cdot 4^{1/r} \frac{L_f^2}{r_K^2} \log^2 \frac{e^r n R_K}{a r} + \mathtt{c}_h' + \frac{R_K}{3} \frac{2^{1/r} L_f^2}{e r_K ^2 } \log \frac{e^r n R_K}{a r}
              }{n h^d p(\xv) \mathtt{c}_t^2} \right)^{r/2}.
    \end{align*}
  \end{lemma}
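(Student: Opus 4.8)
The plan is to obtain everything from a single bias--variance split produced by conditioning on the design $X=(X_1,\dots,X_n)$. Since the weights $\weights_i$ depend on $X$ only and, under Model~\ref{model: normal distribution of labels}, the labels are conditionally independent Gaussians, the estimator is conditionally normal:
\[
\estimatorn{f}(\xv)-\meanY(\xv)\mid X\ \sim\ \normDistribution{B(X)}{v^2},\quad B(X)=\sum_{i=1}^n\weights_i\bigl(\meanY(X_i)-\meanY(\xv)\bigr),\quad v^2=\sum_{i=1}^n\weights_i^2\sigma^2(X_i).
\]
Writing $q_n=\sum_{i=1}^n K\!\left(\frac{X_i-\xv}{h}\right)$, so that $\bigl\{\estimatorn{p}(\xv)\ge ar/(nh^d)\bigr\}=\{q_n\ge ar\}$, I would combine $|m+Z|^r\le 2^{r-1}(|m|^r+|Z|^r)$ with the Gaussian absolute moment $\EE|\normDistribution{0}{v^2}|^r=\frac{2^{r/2}\Gamma((r+1)/2)}{\sqrt\pi}v^r$ to get
\[
\EE\bigl|\estimatorn{f}(\xv)-\meanY(\xv)\bigr|^r\indicator{q_n\ge ar}\le 2^{r-1}\EE\bigl[|B(X)|^r\indicator{q_n\ge ar}\bigr]+\frac{2^{r-1}2^{r/2}\Gamma\!\left(\frac{r+1}{2}\right)}{\sqrt\pi}\,\EE\bigl[v^{r}\indicator{q_n\ge ar}\bigr].
\]
The second summand will produce the variance term (the first term of the claim) and the first summand the two bias terms.

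For the variance term I would bound the conditional variance pointwise. Using $\sum_i\weights_i=1$ gives $v^2\le(\max_i\weights_i)\sum_i\weights_i\sigma^2(X_i)$; here $\max_i\weights_i\le R_K/q_n$ since $K\le R_K$, and Lipschitzness of $\sigma^2$ together with Proposition~\ref{proposition: deterministic bound} (with $s=1$) give $\sum_i\weights_i\sigma^2(X_i)\le\sigma^2(\xv)+\frac{2L_{\sigma^2}h}{r_K}\log\frac{enR_K}{ar}$ on $\{q_n\ge ar\}$. Hence $v^{r}\le R_K^{r/2}\bigl(\sigma^2(\xv)+\frac{2L_{\sigma^2}h}{r_K}\log\frac{enR_K}{ar}\bigr)^{r/2}q_n^{-r/2}$, and applying Lemma~\ref{lemma: kernel reverse sum} with $r/2$ in place of $r$ (legitimate because $\{q_n\ge ar\}\subseteq\{q_n\ge a\cdot r/2\}$) bounds $\EE\bigl[q_n^{-r/2}\indicator{q_n\ge ar}\bigr]$ by $\bigl(\frac{a\omega_d b^d}{4}nh^dp(\xv)\bigr)^{-r/2}$. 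Collecting the powers of two ($2^{r-1}\cdot2^{r/2}\cdot4^{r/2}=2^{5r/2-1}=\tfrac12\cdot32^{r/2}$) reproduces the first term of the claim exactly.

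For the bias term I would integrate the Bernstein-type tail of Proposition~\ref{proposition: deviations of kernal estimator} applied to $g=\meanY$, where $B(X)=\sum_i\weights_i\meanY(X_i)-\meanY(\xv)$, using $\EE|B|^r\indicator{q_n\ge ar}=\int_0^\infty r t^{r-1}\PP\bigl(|B|\ge t,\,q_n\ge ar\bigr)\,dt$. Below the deterministic-bias level $t_\star\asymp\frac{\mathtt{c}_h h^2}{\mathtt{c}_t p(\xv)}$ (the threshold making $\mathtt{r}_1>0$) I bound $\PP\le1$, which integrates to $t_\star^r$ and, after the outer factor $2^{r-1}$, yields the second term $\bigl(\frac{4h^2}{p(\xv)}\frac{\mathtt{c}_h}{\mathtt{c}_t}\bigr)^r$; the $\mathtt{c}_t$ appears because the Proposition divides by $\EE q_n\ge nh^dp(\xv)\mathtt{c}_t$. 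Above $t_\star$ I use the Proposition, whose exponent equals $\frac{nh^dp(\xv)\mathtt{r}_1^2}{2(\mathtt{r}_2+\mathtt{r}_1\mathtt{r}_3/3)}$ with $\mathtt{r}_1=\mathtt{c}_t(t-t_\star)$, $\mathtt{r}_2=\mathtt{c}_t't^2+\mathtt{c}_h'h^2$, $\mathtt{r}_3=R_Kt+\frac{L_fR_Kh}{er_K}$; the effective variance $\mathtt{r}_2+\mathtt{r}_1\mathtt{r}_3/3=(\mathtt{c}_t'+\frac{\mathtt{c}_tR_K}{3})t^2+\mathtt{c}_h'h^2$ produces precisely the combination in the third term, and the deviation scale $\sqrt{\text{(variance)}/(nh^dp(\xv)\mathtt{c}_t^2)}$ raised to $r$ matches its $(\cdot)^{r/2}$ form. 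The logarithmic factors $\log\frac{e^rnR_K}{ar}$ and $\log^2\frac{e^rnR_K}{ar}$ enter because the integration is cut off at the almost-sure cap $|B|\le\frac{2L_fh}{r_K}\log\frac{enR_K}{ar}$, available on $\{q_n\ge ar\}$ via Proposition~\ref{proposition: deterministic bound}, so that $t$ in $\mathtt{r}_2,\mathtt{r}_3$ is replaced by this cap; the bookkeeping factors $4^{1/r},2^{1/r}$ survive the $r/2$ power as $2$ and $\sqrt2$.

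The hard part will be the bias fluctuation. Proposition~\ref{proposition: deviations of kernal estimator} is an unconditional bound whose exponent \emph{saturates} to a positive constant as $t\to\infty$ (numerator and denominator both grow linearly in $t$), so a naive tail integration to infinity diverges and cannot by itself yield a finite $r$-th moment. The resolution is exactly the interplay I described: the event $\{q_n\ge ar\}$ furnishes, through Proposition~\ref{proposition: deterministic bound}, a deterministic logarithmic cap on $|B|$ that truncates the integral, and the Bernstein variance must be evaluated at that cap to recover the stated coefficients. Getting the constants to reorganize cleanly into $\{\mathtt{c}_t'+\frac{\mathtt{c}_tR_K}{3}\}$, $\mathtt{c}_h'$, the $\frac{R_K}{3}$ contribution, and the $1/(nh^dp(\xv)\mathtt{c}_t^2)$ scaling, while keeping the $\Gamma((r+1)/2)$ and $2^{3r/2}$ factors consistent across the truncation, is the delicate bookkeeping that the proof must carry out carefully.
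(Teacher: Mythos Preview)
Your proposal is correct and follows essentially the same route as the paper's proof: the $2^{r-1}$ bias--variance split after conditioning on $X$; the variance term handled via the Gaussian absolute moment, the pointwise bound $v^2\le (R_K/q_n)\bigl(\sigma^2(\xv)+\text{Lipschitz correction from Proposition~\ref{proposition: deterministic bound}}\bigr)$, and Lemma~\ref{lemma: kernel reverse sum}; and the bias term handled by integrating the tail of Proposition~\ref{proposition: deviations of kernal estimator}, with the crucial truncation at the deterministic cap from Proposition~\ref{proposition: deterministic bound} (the paper writes the layer-cake as $\int_0^{t_{\max}}\PP(|B|^r\ge t)\,dt$ and then substitutes $y=t^{1/r}$, which is equivalent to your $\int r t^{r-1}\PP(|B|\ge t)\,dt$). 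Your identification of the ``hard part''---that the Bernstein exponent saturates and the cap is what rescues the moment---is exactly the mechanism the paper uses.
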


  \begin{proof}
    First, we have
    \begin{align}
        \EE \left| 
            \estimatorn{f}(\xv) - \meanY(\xv)
        \right|^{r} \indicator{\estimatorn{p}(\xv) \ge \frac{a r}{n}}
        & \le
        2^{r - 1} \EE \left| \estimatorn{f}(\xv) - \EE [\estimatorn{f}(\xv) \mid X] \right|^{r} \indicator{\estimatorn{p}(\xv) \ge \frac{a r}{n h^d}} \nonumber \\
        & \quad + 2^{r - 1} \EE \left| \EE [\estimatorn{f}(\xv) \mid X] - \meanY(\xv) \right|^{r} \indicator{\estimatorn{p}(\xv) \ge \frac{a r}{n h^d}}. \label{eq: bias-variance decomposition}
    \end{align}
    We analyze each term separately. For the first term, we have
    \begin{align*}
        \EE \left| \estimatorn{f}(\xv) - \EE [\estimatorn{f}(\xv) \mid X] \right|^{r} \indicator{\estimatorn{p}(\xv) \ge \frac{a r}{n h^d}} 
        = 
        \EE \left\{ \indicator{\estimatorn{p}(\xv) \ge \frac{a r}{n h^d}} \cdot \EE \left[ \left| \estimatorn{f}(\xv) - \EE [\estimatorn{f}(\xv) \mid X] \right|^{r} \mid X \right] \right\}.
    \end{align*}
    Conditioned on $X$, the random variable $\estimatorn{f}(\xv) - \EE [\estimatorn{f}(\xv) \mid X]$ is Gaussian with zero mean and variance $v^2 = \sum_{i = 1}^n \weights_i^2 \sigma^2(X_i)$. Thus,
    \begin{align}
    \label{eq: bias-variance lemma term 1}
        \EE \left| \estimatorn{f}(\xv) - \EE [\estimatorn{f}(\xv) \mid X] \right|^{r} \indicator{\estimatorn{p}(\xv) \ge \frac{a r}{n h^d}} = 2^{r/2} \frac{\Gamma \left( \frac{r + 1}{2}\right)}{\sqrt{\pi}} \EE v^r \indicator{\estimatorn{p}(\xv) \ge \frac{a r}{n h^d}}.
    \end{align}
    We bound
    \begin{align*}
        v^2 = \sum_{i = 1}^n \weights_i^2 \sigma^2(X_i) & \le (\max_{i} \weights_i) \sum_{i = 1}^n \weights_i \sigma^2(X_i) \\
        & \le \frac{R_K}{\sum_{i = 1}^n K \left( \frac{X_i - \xv}{h} \right)} \left\{ 
            \sigma^2(\xv) + \sum_{i = 1}^n \weights_i |\sigma^2(X_i) - \sigma^2(\xv)|
        \right\}.
    \end{align*}
    From Proposition~\ref{proposition: deterministic bound}, we have
    \begin{align*}
        v^2 \le \frac{R_K}{\sum_{i = 1}^n K \left( \frac{X_i - \xv}{h} \right)} \left\{ 
            \sigma^2(\xv) + \frac{2 L_{\sigma} h}{r_K} \log \frac{e n R_K}{\sum_{i = 1}^n K \left( \frac{X_i - \xv}{h}\right)}
        \right\}.
    \end{align*}
    Substituting the above into~\eqref{eq: bias-variance lemma term 1}, we obtain
    \begin{align*}
        \eqref{eq: bias-variance lemma term 1} & \le 2^{r/2} \frac{\Gamma \left( \frac{r + 1}{2}\right)}{\sqrt{\pi}} \EE \indicator{\estimatorn{p}(\xv)  \ge \frac{a r}{n h^d}} \times \\
        & \quad \times \left(
            \frac{R_K}{\sum_{i = 1}^n K \left( \frac{X_i - \xv}{h} \right)} \left\{ 
            \sigma^2(\xv) + \frac{2 L_{\sigma} h}{r_K} \log \frac{e n R_K}{\sum_{i = 1}^n K \left( \frac{X_i - \xv}{h}\right)}
        \right\}
        \right)^{r/2} \\
        & \le 2^{r/2} \frac{\Gamma \left( \frac{r + 1}{2}\right)}{\sqrt{\pi}} \EE \indicator{\estimatorn{p}(\xv)  \ge \frac{a r}{n h^d}} \times \\
        & \quad \times \left(
            \frac{R_K}{\sum_{i = 1}^n K \left( \frac{X_i - \xv}{h} \right)} \left\{ 
            \sigma^2(\xv) + \frac{2 L_{\sigma} h}{r_K} \log \frac{e n R_K}{a r}
        \right\}
        \right)^{r/2} \\
        & \le 2^{r/2} \frac{\Gamma \left( \frac{r + 1}{2}\right)}{\sqrt{\pi}} \left( R_K \left\{ 
            \sigma^2(\xv) + \frac{2 L_{\sigma} h}{r_K} \log \frac{e n R_K}{a r}
        \right\} \right)^{r/2} \EE \left( 
            \frac{\indicator{\estimatorn{p}(\xv) \ge \frac{a r}{n h^d}}}{\sum_{i = 1}^n K \left( \frac{X_i - \xv}{h} \right)}
        \right)^{r/2}.
    \end{align*}
    Applying Lemma~\ref{lemma: kernel reverse sum} to bound the expectation, we obtain the variance term:
    \begin{align}
    \label{eq: variance term upper bound}
        \EE \left| \estimatorn{f}(\xv) - \EE [\estimatorn{f}(\xv) \mid X] \right|^{r} \indicator{\estimatorn{p}(\xv) \ge \frac{a r}{n h^d}}
        \le 
        \frac{\Gamma \left( \frac{r + 1}{2}\right)}{\sqrt{\pi}} \left( 
            \frac{8 R_K \left\{ 
                \sigma^2(\xv) + \frac{2 L_{\sigma} h}{r_K} \log \frac{e n R_K}{a r}
            \right\}}{
                a \omega_d b^d \cdot n h^d p(\xv)
            }
        \right)^{r/2}.
    \end{align}
    Analysis of the bias term is much simpler. We have
    \begin{align*}
        \EE \left| \EE [\estimatorn{f}(\xv) \mid X] - \meanY(\xv) \right|^{r} & \indicator{\estimatorn{p}(\xv) \ge \frac{a r}{n h^d}} \\ 
        & = \int_{0}^{\infty} \PP \left( \left| \sum_{i = 1}^n \weights_i \meanY(X_i) - \meanY(\xv) \right|^r \ge t \text{ and } \estimatorn{p}(\xv) \ge \frac{r a}{n h^d} \right) dt. \label{eq: bias term upper bound}
    \end{align*}
    Since $ \left| \sum_{i = 1}^n \weights_i \meanY(X_i) - \meanY(\xv) \right|^r \le \sum_{i = 1}^n \left| \meanY(X_i) - \meanY(\xv) \right|^r \weights_i$ from the Jensen inequality, we obtain the following:
    \begin{align*}
         \left| \sum_{i = 1}^n \weights_i \meanY(X_i) - \meanY(\xv) \right|^r \le \frac{2 L^r h^r}{r_K^r} \log^r \frac{e^r n R_K}{\sum_{i = 1}^n K \left( \frac{\xv - X_i}{h} \right)}
    \end{align*}
    via Proposition~\ref{proposition: deterministic bound}. If $\estimatorn{p}(\xv) \ge \frac{r a}{n h^d}$, the sum $\sum_{i = 1}^n K \left( \frac{\xv - X_i}{h} \right)$ is at least $a r$, and, consequently,
    \begin{align}
        \EE \left| \EE [\estimatorn{f}(\xv) \mid X] - \meanY(\xv) \right|^{r} & \indicator{\estimatorn{p}(\xv) \ge \frac{a r}{n h^d}} \nonumber \\ 
        & \le \int_{0}^{
            \frac{2 L^r h^r}{r_K^r} \log^r \frac{e^r n R_K}{a r}
        } \PP \left(  \left| \sum_{i = 1}^n \weights_i \meanY(X_i) - \meanY(\xv) \right|^r \ge t \right) dt. 
    \end{align}
    Define
    \begin{equation*}
        t_0 = \left(\frac{h^2}{p(\xv)} \cdot \frac{\mathtt{c}_h}{\mathtt{c}_t} \right)^r, \quad
        t_{\max} = \frac{2 L^r h^r}{r_K^r} \log^r \frac{e^r n R_K}{a r}.
    \end{equation*}
    According to Lemma~\ref{proposition: deviations of kernal estimator}, for $t \ge t_0$ we have
    \begin{align*}
        \PP \left(  \left| \sum_{i = 1}^n \weights_i \meanY(X_i) - \meanY(\xv) \right|^r \ge t \right) \le 2 \exp \left(
            - \frac{1}{2}
            \frac{
                n h^d p(\xv) \left(\mathtt{c}_t t^{1/p} - \mathtt{c}_h \frac{h^2}{p(\xv)} \right)^2
            }{
                \mathtt{c}_t' t_{\max}^{2/r} + \mathtt{c}_h' h^2 + \frac{R_K}{3} \left(\frac{L_f h}{e r_K} + t_{\max}^{1/r} \right) c_t t_{\max}^{1/r}
            }
        \right).
    \end{align*}
    From the above, we obtain
    \begin{align}
         \eqref{eq: integral upper bound} & \le \int_{0}^{t_0} dt + 2 \int_{t_0}^{t_{\max}} \exp \left(
            - \frac{1}{2}
            \frac{
                n h^d p(\xv) \left(\mathtt{c}_t t^{1/p} - \mathtt{c}_h \frac{h^2}{p(\xv)} \right)^2
            }{
                \mathtt{c}_t' t_{\max}^{2/r} + \mathtt{c}_h' h^2 + \frac{R_K}{3} \left(\frac{L_f h}{e r_K} + t_{\max}^{1/r} \right) c_t t_{\max}^{1/r}
            }
        \right) dt \nonumber \\
        & = t_0 + 2 r \int_{0}^{t_{\max}^{1/r}} y^{r - 1} \exp \left(
            - \frac{1}{2}
            \frac{
                n h^d p(\xv) \left(\mathtt{c}_t y - \mathtt{c}_h \frac{h^2}{p(\xv)} \right)^2
            }{
                \mathtt{c}_t' t_{\max}^{2/r} + \mathtt{c}_h' h^2 + \frac{R_K}{3} \left(\frac{L_f h}{e r_K} + t_{\max}^{1/r} \right) c_t t_{\max}^{1/r}
            }
        \right) dy \nonumber \\
        & \le t_0 + \frac{r 2^{r/2 + 1}\Gamma \left(\frac{r + 1}{2} \right)}{\sqrt{\pi}} \left( \frac{\mathtt{c}_t' t_{\max}^{2/r} + \mathtt{c}_h' h^2 + \frac{R_K}{3} \left(\frac{L_f h}{e r_K} + t_{\max}^{1/r} \right) c_t t_{\max}^{1/r}}{n h^d p(\xv) \mathtt{c}_t^2} \right)^{p/2}.
      \label{eq: integral upper bound}
    \end{align}
    Applying the bounds~\eqref{eq: variance term upper bound},~\eqref{eq: bias term upper bound} to the sum~\eqref{eq: bias-variance decomposition}, we obtain the statement.
  \end{proof}

\section{Proof of Theorem~\ref{theorem: finite-sample thm}}
\label{sec:main_proof}
  We require one additional proposition.
  \begin{proposition}
      \label{proposition: density upper bound}
      Suppose Assumptions~\ref{assumption: kernel assumptions}-\ref{assumption: density assumption} hold. Assume $p(\xv) \ge 2 L_p b h$ and $d(\xv, \partial \support) \ge $. Then
      \begin{align*}
          \PP \left ( 
                \frac{1}{n h^d} \sum_{i = 1}^n K \left ( \frac{X_i - \xv}{h} \right ) \ge 2 p(\xv) + L_p h \int_{\RR^d} \Vert \tv \Vert K(\tv) d \mu(\tv)
          \right ) \le \exp \left (- \logProbBelow \right ), \\
      \end{align*}
      where
      \begin{align*}
          \logProbBelow = 2 n h^d p(\xv) \cdot \left \{
                R_K/3 + \frac{\pi^{d/2} R_K^2}{\Gamma(d / 2) r_K} + \frac{\pi^{d/2} R_K^2}{b \Gamma(d/2)  r_K^2 } 
          \right \}.
      \end{align*}
  \end{proposition}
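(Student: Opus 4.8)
The plan is to treat the normalized sum as an average of i.i.d.\ bounded random variables and apply Bernstein's inequality, exactly as in the proof of Proposition~\ref{proposition: denominator lower bound} but now for the upper tail. Set $\xi_i = K\bigl((X_i - \xv)/h\bigr)$; these are i.i.d., nonnegative, and bounded above by $R_K$, since $K(\tv) \le R_K e^{-r_K \Vert \tv \Vert} \le R_K$ by Assumption~\ref{assumption: kernel assumptions}. First I would control the mean: changing variables and using that $p$ is $L_p$-Lipschitz on $\support_0$ and vanishes outside it, so that $p(\yv) \le p(\xv) + L_p \Vert \yv - \xv\Vert$ holds for every $\yv$ (only this \emph{upper} Lipschitz bound is needed, in contrast to Proposition~\ref{proposition: denominator lower bound}),
\begin{align*}
  \EE \xi_i = h^d \int_{\RR^d} p(\xv + h \tv) K(\tv)\, d\mu(\tv) \le h^d\Bigl(p(\xv) + L_p h \int_{\RR^d}\Vert \tv\Vert K(\tv)\, d\mu(\tv)\Bigr).
\end{align*}
Consequently the target threshold $n h^d\bigl(2 p(\xv) + L_p h \int \Vert\tv\Vert K\,d\mu\bigr)$ exceeds $\EE \sum_i \xi_i$ by at least $n h^d p(\xv)$, which is the deviation I would feed into Bernstein.

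Next I would bound the variance and the range. Since $\Var \xi_i \le \EE \xi_i^2 = h^d\int p(\xv + h\tv) K^2(\tv)\,d\mu(\tv)$, the same Lipschitz argument gives $\EE \xi_i^2 \le h^d\bigl(p(\xv)\int K^2\,d\mu + L_p h \int \Vert\tv\Vert K^2\,d\mu\bigr)$, and I would replace the two integrals by their explicit bounds from Proposition~\ref{proposition: bounds on maxima and integrals} (with $k=2$ and $m=0,1$). The a.s.\ bound is simply $\Vert \xi_i\Vert_\infty \le R_K$. Feeding $n\Var\xi_1$, $R_K$, and the deviation $n h^d p(\xv)$ into Bernstein's inequality yields
\begin{align*}
  \PP\Bigl(\tfrac1{nh^d}\textstyle\sum_i \xi_i \ge 2p(\xv) + L_p h \int \Vert\tv\Vert K\,d\mu\Bigr) \le \exp\Bigl(-\frac{(n h^d p(\xv))^2/2}{n\Var\xi_1 + R_K\, n h^d p(\xv)/3}\Bigr),
\end{align*}
where I use that $\PP\bigl(\sum_i(\xi_i - \EE\xi_i)\ge s\bigr)$ only decreases as the true gap $s$ grows beyond its minimum value $n h^d p(\xv)$, so substituting this minimal gap gives a valid bound.

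The last step is to simplify the denominator. Here the hypothesis $p(\xv) \ge 2 L_p b h$ is exactly what is needed: it lets me absorb each Lipschitz correction $L_p h$ into $p(\xv)$ (e.g.\ $L_p h \le p(\xv)/(2b)$), so that both $n\Var\xi_1$ and the range term become $n h^d p(\xv)$ times an $\xv$-independent constant built from $R_K$, $r_K$, $b$, and $d$. Cancelling one factor of $n h^d p(\xv)$ then leaves an exponent proportional to $n h^d p(\xv)$ with a constant of the form $R_K/3 + \pi^{d/2}R_K^2/(\Gamma(d/2) r_K) + \pi^{d/2}R_K^2/(b\,\Gamma(d/2) r_K^2)$, matching $\logProbBelow$. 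I expect no conceptual obstacle here; the only delicate part is the constant bookkeeping in matching the variance integrals to Proposition~\ref{proposition: bounds on maxima and integrals} and verifying that $p(\xv)\ge 2L_p b h$ renders the Lipschitz terms subdominant, so that the final exponent scales cleanly as $n h^d p(\xv)$.
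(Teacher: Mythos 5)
Your proposal is correct and follows essentially the same route as the paper's own proof: bound $\EE K\bigl((X_i-\xv)/h\bigr)$ and $\Var K\bigl((X_i-\xv)/h\bigr)$ using the Lipschitz property of $p$, note the almost-sure bound $R_K$, apply Bernstein with deviation $t = p(\xv)$ (equivalently $n h^d p(\xv)$ in unnormalized form), and then use $p(\xv) \ge 2 L_p b h$ together with Proposition~\ref{proposition: bounds on maxima and integrals} to absorb the Lipschitz corrections into $\xv$-independent constants. The only differences are cosmetic (your explicit remark that only the upper Lipschitz bound is needed, and your monotone-tail argument for substituting the minimal gap), so no further comment is needed.
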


  \begin{proof}
      First, we study the mathematical expectation of $\estimatorn{p}(\xv) = \frac{1}{n h^d} \sum_{i = 1}^n K \left ( \frac{X_i - \xv}{h} \right )$. We have
      \begin{align*}
          \EE K \left ( \frac{X_i - \xv}{h} \right ) & = h^d \int_{\RR^n} K(\tv) p(\xv + h \tv ) d \mu(\tv) \\
          & \le h^d \left \{ \int_{\RR^d} K(\tv) p(\xv) d \mu(\tv) + h L_p \int_{\RR^d} K(\tv) \Vert \tv \Vert d \mu(\tv)  \right \}\\
          & = h^d p(\xv) + L_p h^{d + 1} \int_{\RR^d} \Vert \tv \Vert K(\tv) d \mu(\tv).
      \end{align*}
      Next, we have
      \begin{align*}
          \Var K \left( \frac{X_i - \xv}{h} \right) & \le \EE K^2 \left( \frac{X_i - \xv}{h} \right)
          = h^d \int_{\RR^d} K^2(\tv) p(\xv + h \tv) d \mu(\tv) \\
          & \le h^d \int_{\RR^d} K^2(\tv) p(\xv) d \mu(\tv) + L_p h^{d + 1} \int K^2(\tv) \Vert \tv \Vert d \mu(\tv).
      \end{align*}
      Finally, we have $ (n h^d)^{-1} K \left ( \frac{\xv - X_i}{h} \right ) \le (n h^d)^{-1} R_K$. Consequently, we may apply the Bernstein inequality to bound  $\estimatorn{p}(\xv)$ in probability:
      \begin{align*}
          & \PP \left (
                \estimatorn{p}(\xv) \ge p(\xv) + L_p h \int_{\RR^d} \Vert \tv \Vert K(\tv) d \mu(\tv) + t
          \right ) \\
          & \quad \le \exp \left (
            - \frac{t^2/2}{\left \{ p(\xv) \int_{\RR^d} K^2(\tv) d \mu(\tv) + L_p h \int K^2(\tv) \Vert \tv \Vert d \mu(\tv) \right \} / (n h^d) + R_k (n h^{d})^{-1} t /3 }
          \right ).
      \end{align*}

      Set $t = p(\xv)$. Then,
      \begin{align*}
          & \PP \left (
                \estimatorn{p}(\xv) \ge 2 p(\xv) + L_p h \int_{\RR^d} \Vert \tv \Vert K(\tv) d \mu(\tv)
          \right ) \\
          & \quad \le \exp \left ( - \frac{n h^d p(\xv) / 2}{
            \left \{ \int_{\RR^d} K^2(\tv) d \mu(\tv) + L_p h / p(\xv) \int K^2(\tv) \Vert \tv \Vert d \mu(\tv) \right \} + R_K/3
          } \right ).
      \end{align*}
      The condition of the proposition ensures that $h / p(\xv) \le 1 / (2 L_p b)$. Combining it with bounds on integrals provided by Proposition~\ref{proposition: bounds on maxima and integrals}, we simplify the denominator and finalize the proof.

  \end{proof}

  Now, we prove Theorem~\ref{theorem: finite-sample thm}.
  \begin{proof}
    Due to Proposition~\ref{proposition: excess risk decomposition}, we have
    \begin{align}
    \label{eq: excess risk decomposition}
        \EE_{\mathcal{D}} \risk_\lambda(\xv) - \risk^*_\lambda(\xv) = \EE \left[ \left( \meanY(\xv) - \estimatorn{f}(\xv) \right)^2 \indicator{\estimatorn{a}(\xv) = 0} \right] + \Delta(\xv) \cdot \PP \left(
            \estimatorn{a}(\xv) \neq a^*(\xv)
        \right).
    \end{align}

    We consider two cases.
    
    \textbf{Case 1.} If $\sigma^2(\xv) \ge \lambda$, then
    \begin{align*}
        \indicator{\estimatorn{a}(\xv) = 0} = \indicator{\estimatorn{p}(\xv) \ge \frac{4 a }{n h^d}} \cdot \indicator{\estimatorn{a}(\xv) \neq a^*(\xv)},
    \end{align*}
    consequently,
    \begin{align*}
         \EE \left[ \left( \meanY(\xv) - \estimatorn{f}(\xv) \right)^2 \indicator{\estimatorn{a}(\xv) = 0} \right] \le \sqrt{\EE \left[ \left( \meanY(\xv) - \estimatorn{f}(\xv) \right)^4 \indicator{\estimatorn{p}(\xv) \ge \frac{4 a}{n h^d}} \right]} \cdot \PP^{1/2} \left(
            \estimatorn{a}(\xv) \neq a^*(\xv)
        \right).
    \end{align*}
    Hence,
    \begin{align*}
        \EE_{\mathcal{D}}\risk_\lambda(\xv) - \risk^*_\lambda(\xv) \le \left\{ \sqrt{\EE \left[ \left( \meanY(\xv) - \estimatorn{f}(\xv) \right)^4 \indicator{\estimatorn{p}(\xv) \ge \frac{4 a}{n h^d}} \right]} + \Delta(\xv) \right\} \cdot \PP^{1/2} \left(
            \estimatorn{\alpha}(\xv) \neq \alpha(\xv)
        \right).
    \end{align*}
    Using $\sqrt{\sum_{i = 1}^n a_i} \le \sum_{i = 1}^n \sqrt{a_i}$ for any sequence of positive numbers $a_1, \ldots, a_n$, we bound the square root of the expectation with Lemma~\ref{lemma: lp bias variance tradeoff}. We may consider $\mathtt{c}_t, \mathtt{c}_h, \mathtt{c}'_t, \mathtt{c}_h'$ as constants since the conditions of the theorem bound $h / p(\xv)$ and $h$. At the same time, $\estimatorn{\alpha}(\xv) \neq \alpha(\xv)$ implies
    \begin{align*}
        \estimatorn{\sigma}^2(\xv) \le \lambda \left[ 
                1 - \frac{
                    \sqrt{2} \Vert K \Vert_2 z_{1 - \beta}
                }{
                    \sqrt{n h^d \estimatorn{p}(\xv)}
                }
            \right] \quad \text{ or } \quad \estimatorn{p}(\xv) < \frac{4 a}{n h^d}.
    \end{align*}
    Since $p(\xv) \ge 8/ (\omega_d b^d n h^d)$, we have $\estimatorn{p}(\xv) < \frac{4 a}{n h^d}$ with probability $e^{-\logProb/26}$ due to Proposition~\ref{proposition: denominator lower bound}. At the same time, $\estimatorn{p}(\xv) < 2 p(\xv) + L_p h \int_{\RR^d} \Vert \tv \Vert K(\tv) d \mu(\tv)$ with probability $1 - \exp \left (- \Omega(n h^d p(\xv)) \right )$. Consequently,
    \begin{align*}
        \PP \left ( 
            \estimatorn{\alpha}(\xv) \neq \alpha(\xv)
        \right ) & \le \PP \left (
            \estimatorn{\sigma}^2(\xv) \le \lambda \left[ 
                1 - \frac{
                    \sqrt{2} \Vert K \Vert_2 z_{1 - \beta}
                }{
                    \sqrt{n h^d \left (2 p(\xv) + L_p h \int_{\RR^d} \Vert \tv \Vert K(\tv) d \mu(\tv) \right )}
                }
            \right] 
        \right ) \\
        & \quad + 2 \exp \left ( - \Omega(n h^d p(\xv)) \right ) \\
        & = \PP \left (
            \estimatorn{\sigma}^2(\xv) - \sigma^2(\xv) \le - \Delta(\xv) - \frac{
                    \sqrt{2} \lambda \Vert K \Vert_2 z_{1 - \beta}
                }{
                    \sqrt{n h^d \left (2 p(\xv) + L_p h \int_{\RR^d} \Vert \tv \Vert K(\tv) d \mu(\tv) \right )}
                }
        \right ) \\
        & \quad + 2 \exp \left ( - \Omega(n h^d p(\xv)) \right ).
    \end{align*}
    Thus, we may apply Corollary~\ref{corollary: variance deviation bound} with $\delta(\xv) = \Delta(\xv) + \frac{
                    \sqrt{2} \lambda \Vert K \Vert_2 z_{1 - \beta}
                }{
                    \sqrt{n h^d \left (2 p(\xv) + L_p h\int_{\RR^d} \Vert \tv \Vert K(\tv) d \mu(\tv) \right )}
                }$ and obtain the first part of the statement.

    \textbf{Case 2.} If $\sigma^2(\xv) \le \lambda$, we bound
    \begin{align*}
        \risk(\xv) - \risk^*(\xv) \le \EE \left[ \left( \meanY(\xv) - \estimatorn{f}(\xv) \right)^2 \indicator{\estimatorn{p}(\xv) \ge \frac{4 a}{n h^d}} \right] + \Delta(\xv) \cdot \PP \left(
            \estimatorn{\alpha}(\xv) \neq \alpha(\xv)
        \right).
    \end{align*}
    The expectation can be bounded via Lemma~\ref{lemma: lp bias variance tradeoff}. The event $\estimatorn{\alpha}(\xv) \neq \alpha(\xv)$ means that
    \begin{align*}
        \estimatorn{\sigma}^2(\xv) \ge \lambda \left[ 
                1 - \frac{
                    \sqrt{2} (4 \pi)^{- d/4} z_{1 - \beta}
                }{
                    \sqrt{n h^d \estimatorn{p}(\xv)}
                }
            \right].
    \end{align*}
    Consider the event $\estimatorn{p}(\xv) \ge \alpha b^d \omega_d \cdot p(\xv) / 2$. Its complement has probability $\exp \left( -\Omega(n h^d p(\xv)) \right)$ due to Proposition~\ref{proposition: denominator lower bound}. Thus, we just consider the event
    \begin{align*}
        \estimatorn{\sigma}^2(\xv) \ge \lambda \left[ 
                1 - \frac{
                    2 (4 \pi)^{- d/4} z_{1 - \beta}
                }{
                    \sqrt{a b^d \omega_d \cdot p(\xv) \cdot n h^d }
                }
            \right].
    \end{align*}
    It implies
    \begin{align*}
        \estimatorn{\sigma}^2(\xv) - \sigma^2(\xv) \ge \Delta(\xv) - \frac{
                    2 \lambda (4 \pi)^{- d/4} z_{1 - \beta}
                }{
                    \sqrt{a b^d \omega_d \cdot p(\xv) \cdot n h^d }
                },
    \end{align*}
    and bounding its probability with Corollary~\ref{corollary: variance deviation bound}, we finalize case 2.

    For any $\sigma^2(\xv)$, we may bound the probability $\PP\bigl(\estimatorn{\alpha}(\xv) \neq \alpha(\xv)\bigr)$ by one. The remaining term can be bounded via Lemma~\ref{lemma: lp bias variance tradeoff}. That finalizes our proof.
  \end{proof}

\section{Additional experiments}

\subsection{Synthetic data}

\subsubsection{Acceptance probability}
\label{sec:suppl_acc_prob}

  On Figure~\ref{fig:acc_vs_x_supp} we present additional experiments for acceptance experiments with different combinations of data distribution and standard deviation. For normal data we see the expected decline on the left side due to lower $\estimatorn{p}(x)$.

  \begin{figure}[htbp]
  \floatconts
    {fig:acc_vs_x_supp}
    {\caption{Acceptance probability. We sample 100 datasets of sizes $[100, 200, 500, 1000]$ and for points in $[-2, 2]$ calculate ratio of points where our method accepts the regression result. Optimal bandwidth is selected using leave-one-out  cross-validation with mean squared error.}}
    {%
        \subfigure[$X \sim \mathcal{N}(0,1), \: \sigma(x) = \mathtt{sigmoid}(x)$]{%
            \includegraphics[width=.45\linewidth]{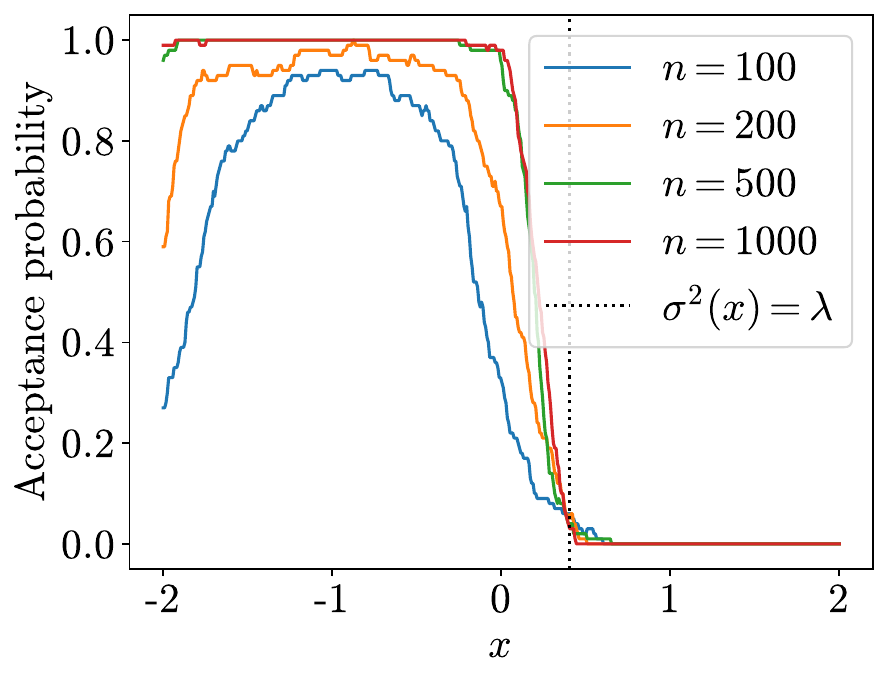}
        }\qquad 
        \subfigure[$X \sim \mathcal{N}(0,1), \: \sigma(x) = \mathtt{Heaviside}(x)$]{%
            \includegraphics[width=.45\linewidth]{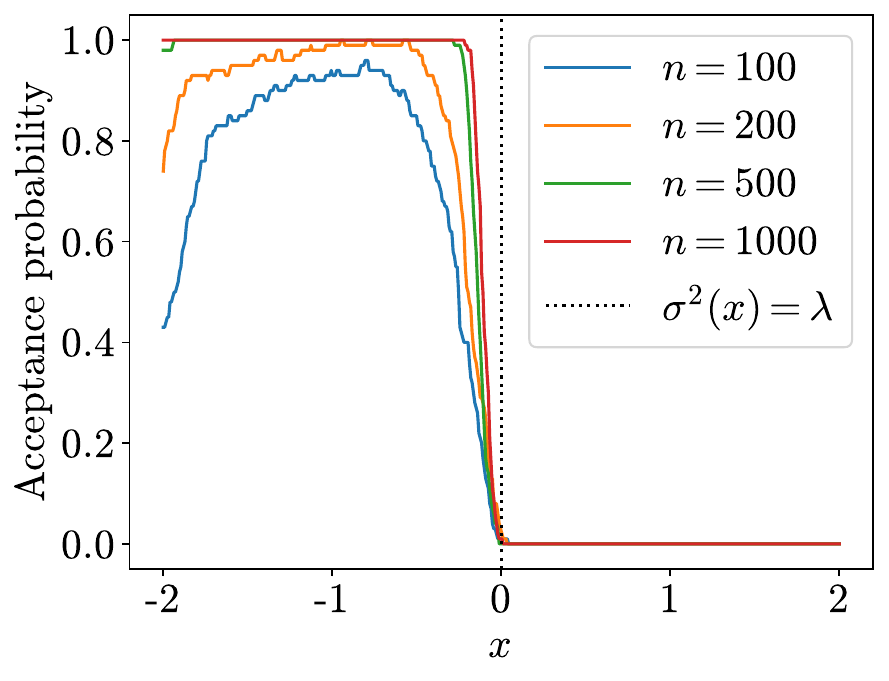}
        }\qquad
        \subfigure[$X \sim \mathcal{U}(-2,2), \: \sigma(x) = \mathtt{sigmoid}(x)$]{%
            \includegraphics[width=.45\linewidth]{figures/synthetic/xuniform_ysinx_varsigmoid_acceptance_vs_x.pdf}
        }\qquad 
        \subfigure[$X \sim \mathcal{U}(-2,2), \: \sigma(x) = \mathtt{Heaviside}(x)$]{%
            \includegraphics[width=.45\linewidth]{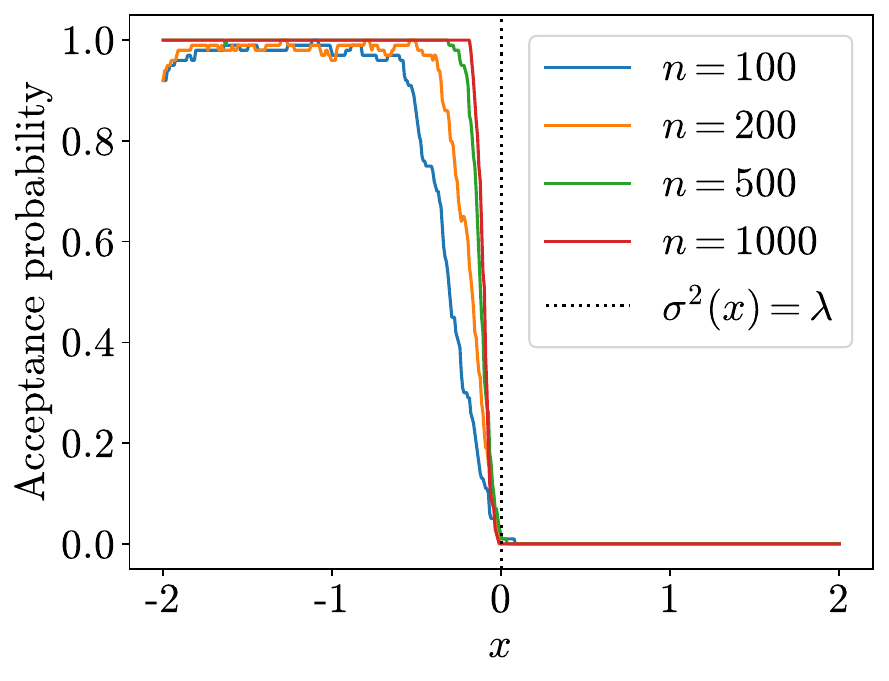}
        }
    }
  \end{figure}

\subsubsection{Expected excess risk}
\label{sec:suppl_excess_risk}
  Point-wise expected excess risk with varying sample size is presented on Figure~\ref{fig:risk_vs_x_supp}. With low sample size our method struggles in the low variance regions, while in high variance the performance is much better than the baseline method.

  \begin{figure}[htbp]
  \floatconts
    {fig:risk_vs_x_supp}
    {\caption{Expected excess risk. We sample 100 datasets of each size and for points in $[-2, 2]$.}}
    {%
        \subfigure[$X \sim \mathcal{N}(0,1), \: \sigma(x) = \mathtt{sigmoid}(x)$]{%
            \includegraphics[width=.4\linewidth]{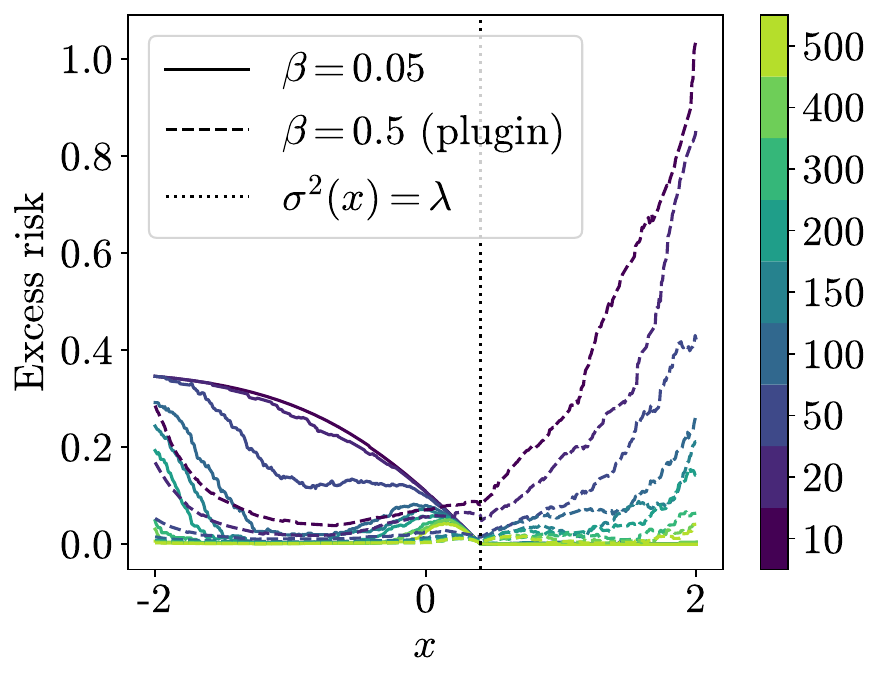}
        }\qquad 
        \subfigure[$X \sim \mathcal{N}(0,1), \: \sigma(x) = \mathtt{Heaviside}(x)$]{%
            \includegraphics[width=.4\linewidth]{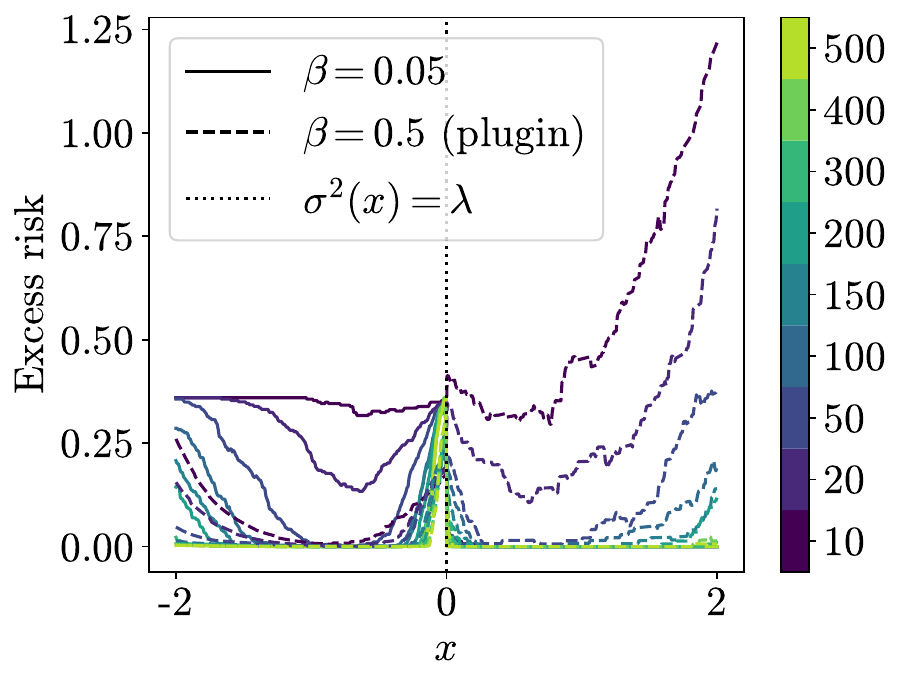}
        }\qquad
        \subfigure[$X \sim \mathcal{U}(-2,2), \: \sigma(x) = \mathtt{sigmoid}(x)$]{%
            \includegraphics[width=.4\linewidth]{figures/synthetic/xuniform_ysinx_varsigmoid_risk_vs_x.pdf}
        }\qquad 
        \subfigure[$X \sim \mathcal{U}(-2,2), \: \sigma(x) = \mathtt{Heaviside}(x)$]{%
            \includegraphics[width=.4\linewidth]{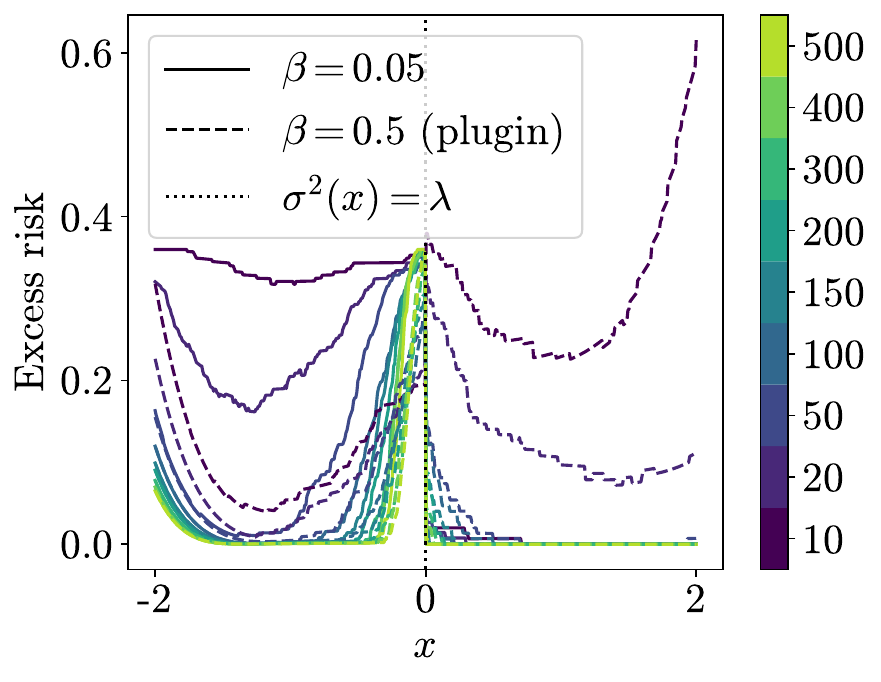}
        }
    }
  \end{figure}

  On Figure~\ref{fig:risk_vs_x_beta_supp} we present the same quantity for a fixed sample size. For simple data with step-like variance all settings perform nearly identical. For sigmoid variance, we observe the characteristic bump left of zero.

  \begin{figure}[htbp]
    \floatconts
    {fig:risk_vs_x_beta_supp}
    {\caption{Expected excess risk for a fixed sample size of $100$ and different values of $\beta$. Baseline method or ``plugin'' ($\beta = 0.5$) is shown in red.}}
    {%
        \subfigure[$X \sim \mathcal{N}(0,1), \: \sigma(x) = \mathtt{sigmoid}(x)$]{%
            \includegraphics[width=.4\linewidth]{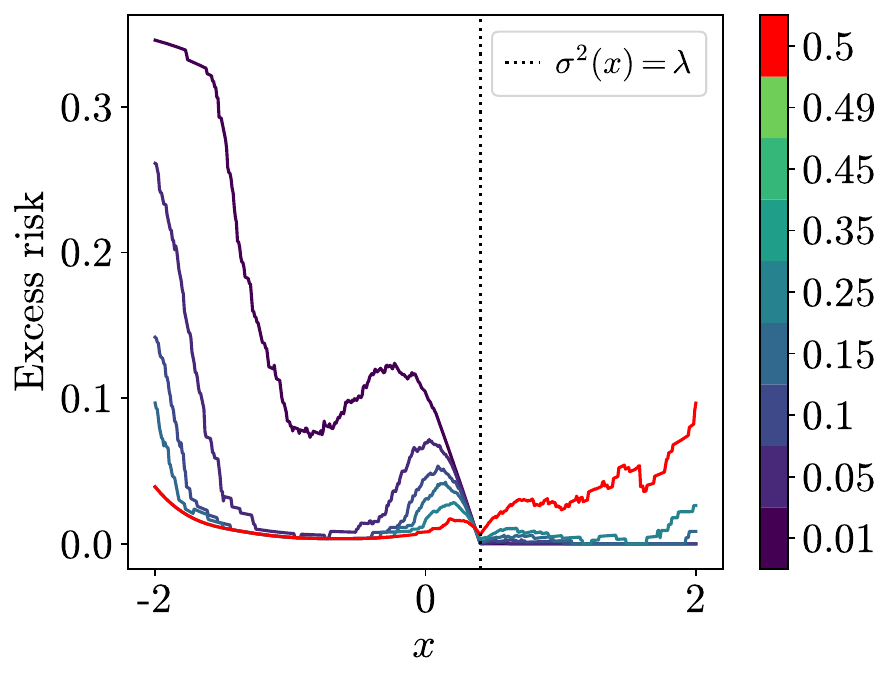}
        }\qquad 
        \subfigure[$X \sim \mathcal{N}(0,1), \: \sigma(x) = \mathtt{Heaviside}(x)$]{%
            \includegraphics[width=.4\linewidth]{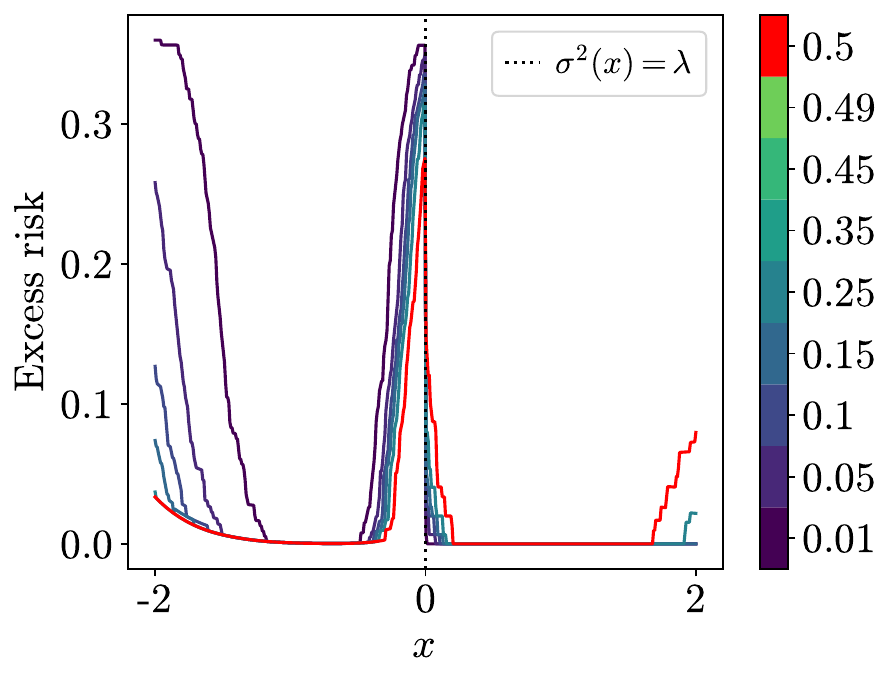}
        }\qquad
        \subfigure[$X \sim \mathcal{U}(-2,2), \: \sigma(x) = \mathtt{sigmoid}(x)$]{%
            \includegraphics[width=.4\linewidth]{figures/synthetic/xuniform_ysinx_varsigmoid_risk_vs_x_beta.pdf}
        }\qquad 
        \subfigure[$X \sim \mathcal{U}(-2,2), \: \sigma(x) = \mathtt{Heaviside}(x)$]{%
            \includegraphics[width=.4\linewidth]{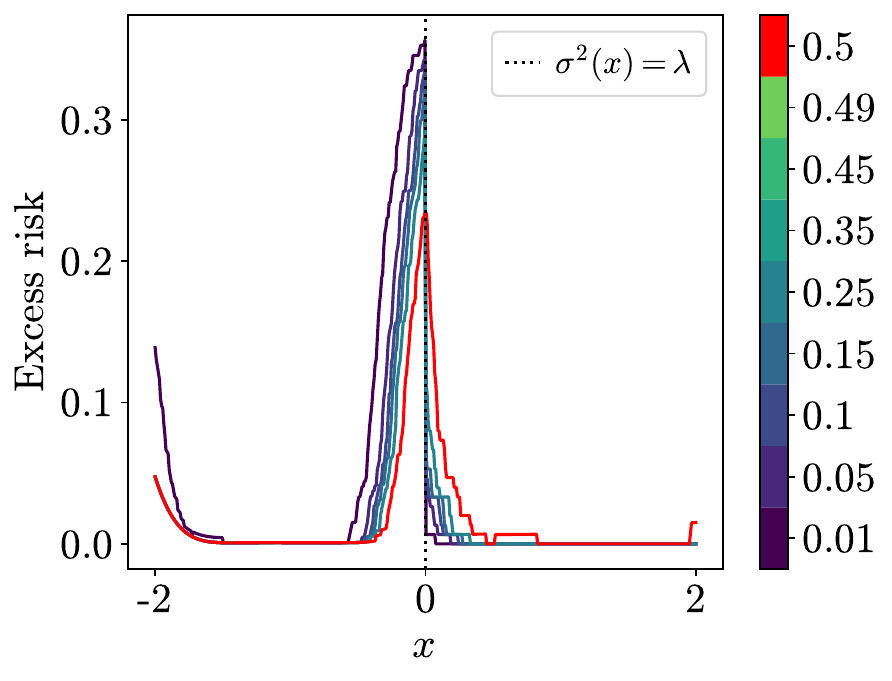}
        }
    }
  \end{figure}

\subsection{Airfoil Self-Noise Data Set}
\label{sec:suppl_airfoil}
  For the Airfoil dataset, we present additional charts for a different feature split. We can see that while acceptance has a similar dependence on $\lambda$, the actual mean squared error of accepted results highly depends on the data split. We plan to expand our study to higher-dimensional datasets to further investigate the behavior of the method.

  \begin{figure}[htbp]
  \floatconts
    {fig:air_acc_mse}
    {\caption{Experiments on Airfoil data, split 70/30 by different features.}}
    {%
        \subfigure[Split by feature 0, acceptance]{%
            \includegraphics[width=.45\linewidth]{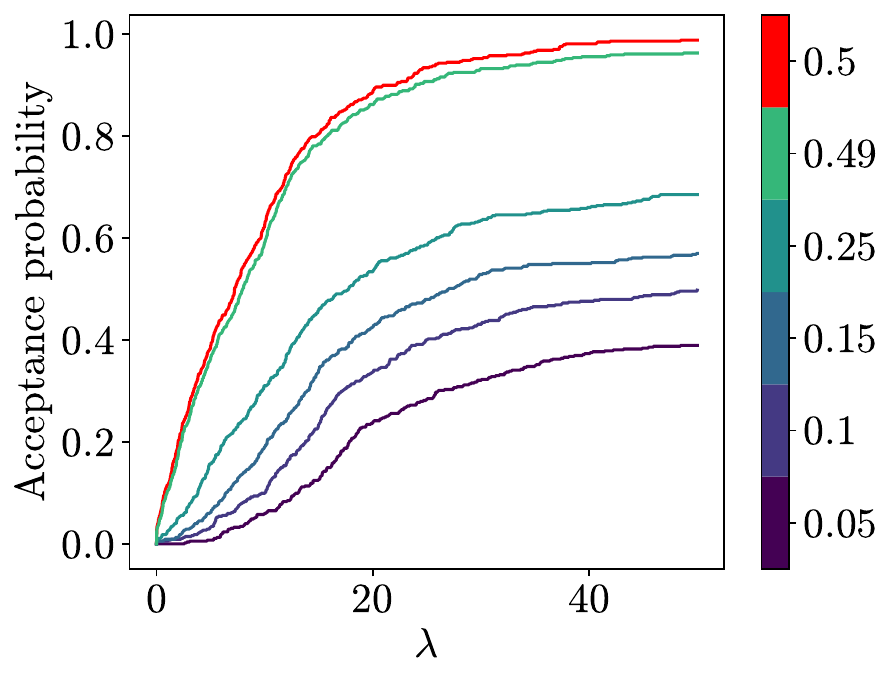}
        }\qquad 
        \subfigure[Split by feature 0, MSE]{%
            \includegraphics[width=.45\linewidth]{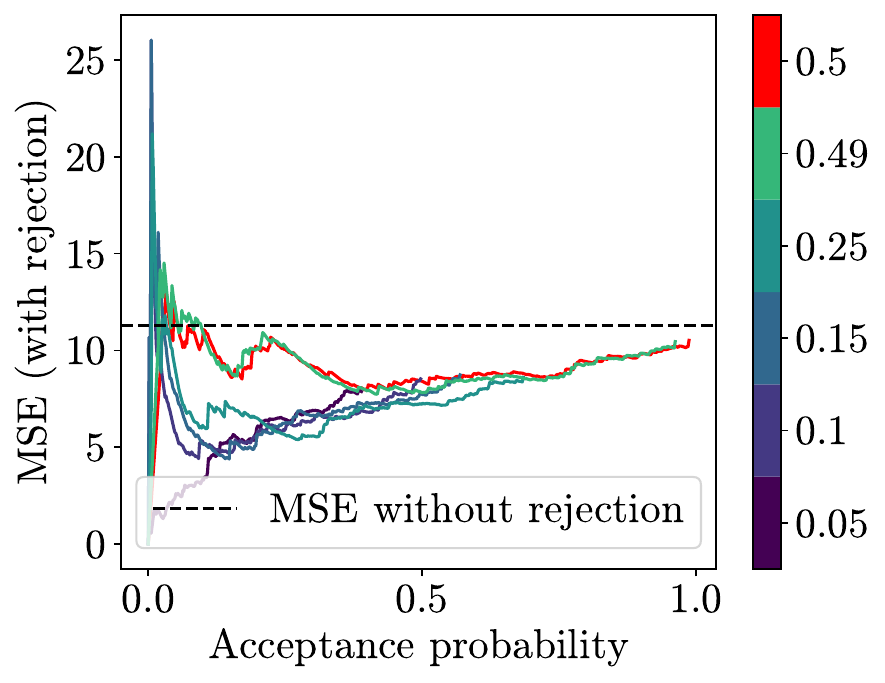}
        }\qquad
        \subfigure[Split by feature 1, acceptance]{%
            \includegraphics[width=.45\linewidth]{figures/airfoil/airfoil_acc_vs_lamda_feature1.pdf}
        }\qquad 
        \subfigure[Split by feature 1, MSE]{%
            \includegraphics[width=.45\linewidth]{figures/airfoil/airfoil_mse_vs_lamda_feature1.pdf}
        }
    }
  \end{figure}

\end{document}